\providecommand{\tabularnewline}{\\}
\providecommand{\algorithmname}{Algorithm}
\numberwithin{equation}{section}
\numberwithin{figure}{section}
\theoremstyle{plain}
\newtheorem{thm}{\protect\theoremname}
\theoremstyle{plain}
\newtheorem{prop}[thm]{\protect\propositionname}
\theoremstyle{plain}
\newtheorem{cor}[thm]{\protect\corollaryname}
\theoremstyle{remark}
\newtheorem{rem}[thm]{\protect\remarkname}
\theoremstyle{definition}
\newtheorem{example}[thm]{\protect\examplename}
\providecommand{\propositionname}{Proposition}
\providecommand{\theoremname}{Theorem}
\providecommand{\corollaryname}{Corollary}
\providecommand{\examplename}{Example}
\providecommand{\remarkname}{Remark}
\providecommand{\corollaryname}{Corollary}
\providecommand{\examplename}{Example}
\providecommand{\propositionname}{Proposition}
\providecommand{\remarkname}{Remark}
\providecommand{\theoremname}{Theorem}
\begin{document}
\title{Asymptotic Optimism of Random-Design Linear and Kernel Regression
Models}
\author[1]{Hengrui Luo}
\author[2]{Yunzhang Zhu}
\affil[1]{Department of Statistics, Rice University; Lawrence Berkeley National Laboratory. \url{hrluo@lbl.gov;hrluo@rice.edu}}
\affil[2]{Amazon\footnote{This work does not relate to this author's position at Amazon.}; Department of Statistics, the Ohio State University. \url{ryzhux@gmail.com}}
\date{}

\maketitle
\begin{abstract}
We derived the closed-form asymptotic optimism \citep{ye1998measuring,efron_estimation_2004}
of linear regression models under random designs, and generalizes
it to kernel ridge regression. Using scaled asymptotic optimism as
a generic predictive model complexity measure \citep{luan_predictive_2021},
we studied the fundamental different behaviors of linear regression
model, tangent kernel (NTK) regression model and three-layer fully
connected neural networks (NN). Our contribution is two-fold: we provided
theoretical ground for using scaled optimism as a model predictive
complexity measure; and we show empirically that NN with ReLUs behaves
differently from kernel models under this measure. With resampling
techniques, we can also compute the optimism for regression models
with real data.
\end{abstract}
Keywords: linear regression, kernel ridge regression, generalization
errors, model complexity measure.

\tableofcontents{}

\section{Introduction and Backgrounds}

\subsection{\label{subsec:The-Double-descent-Phenomena}The Double-descent Phenomena}

The double descent phenomenon is an intriguing observation in the
performance of machine learning models, including linear regression,
as their model capacity or complexity is increased \citep{belkin_reconciling_2019,ju2021generalization}.
In the underparameterized region, the model capacity is too low to
capture the underlying patterns in the training data fully. As a result,
both training and testing errors are high. The gap between these errors
(i.e., optimism) may be relatively small because the model isn't complex
enough to exhibit strong overfitting.

At the interpolation threshold point, the model has exactly enough
capacity to fit the training data perfectly, resulting in zero training
error. However, without additional regularization, this perfect fit
can lead to relatively high testing error if the learned patterns
do not generalize. Here, optimism reaches its maximum because the
training error is zero while the testing error is significantly higher
due to overfitting the noise or non-generalizable aspects of the training
set.

As the model complexity increases beyond the interpolation threshold,
according to the double descent curve, the testing error initially
increases and then may decrease again hence enter the overparameterized
region. This counterintuitive phenomenon of double descent, in contrast
to the classical bias-variance trade-off that covers only until interpolation
threshold) implies that further increasing the model's capacity allows
it to learn more generalizable patterns. In the initial part of the
overparameterized region, optimism might increase as the model fits
more noise in training. However, as we move further into the overparameterized
region, and if the double descent phenomenon holds true, the testing
error may decrease, potentially reducing optimism.

\subsection{\label{subsec:Model-Performance-on}Training and Testing Errors}

Motivated by quantifying the double-descent phenomena, recent interests
in describing the model complexity focus in the predictive setting
\citep{luan_predictive_2021,hastie_surprises_2020,rosset2019fixed}.
The calculation of optimism as a predictive model complexity measure
\citep{luan_predictive_2021}, is particularly interesting in the
context of double descent. Initially, as the model complexity increases,
the optimism increases due to overfitting. However, past the critical
point of complexity (somewhere after the interpolation threshold),
increased model capacity could theoretically lead to a more robust
model that generalizes better, thus decreasing optimism again. This
suggests a non-linear relationship between model complexity and optimism,
with a critical peak around the interpolation threshold.

On one hand, this critical understanding challenges traditional views
on model capacity and overfitting, indicating that sometimes \textquotedbl more
is better,\textquotedbl{} even when it seems counterintuitive according
to classical statistical learning theories \citep{belkin_reconciling_2019}.
On the other hand, model complexity is a central topic in statistics.
Popular choices of model complexity include the VC dimension (e.g.,
neural networks (NN), supported vector machines \citep{vapnik1999nature}),
the minimal length principle measures (e.g., encoders, decoders \citep{rissanen2007information})
and the degree of freedom for classical statistical models (e.g.,
linear and ANOVA models \citep{ravishanker2002first}). However, there
is not a well-accepted model complexity measure that can describe
a general model procedure across different types of tasks. Most of
these classical complexity measures focused on the model performance
on the training datasets. Therefore, classical model complexity measures
have difficulty incorporating the model performance on the testing
datasets.

The \emph{training error} describes the in-sample performance of model.
Given a fitted model $\hat{\mu}_{n}$ (e.g., linear regression model),
the well-accepted definition of training error over a training set
sample $\bm{X},\bm{y}$ of size $n$ is (e.g., (2.1) in \citet{luan_predictive_2021}):
$\text{}\frac{1}{n}\sum_{i=1}^{n}\ell(y_{i},\hat{\mu}_{n}(\bm{x}_{i}))\overset{\ell=L_{2}}{=}\sum_{i=1}^{n}\frac{1}{n}\left(y_{i}-\hat{\mu}_{n}(\bm{x}_{i})\right)^{2}$
which is the loss we use in the rest of the paper. We denote the fitted
mean model $\hat{\mu}_{n}=\hat{\mu}$ (for notational brevity) based
on the sample of size $n$ and $\ell$ denotes the loss function of
our choice. We fit the model by minimizing the training error with
optimization algorithms. The fitted model function $\hat{\mu}_{n}$
can be written into vector form $\hat{\bm{\mu}}=\left(\hat{\mu}_{n}(\bm{x}_{1}),\cdots,\hat{\mu}_{n}(\bm{x}_{n})\right)\in\mathbb{R}^{n}$
depends on the training set input $\bm{X}=\{\bm{x}_{1},\bm{x}_{2},\cdots,\bm{x}_{n}\},\bm{x}_{i}\in\mathbb{R}^{d}$
and response $\bm{y}=\{y_{1},y_{2},\cdots,y_{n}\},y_{i}\in\mathbb{R}^{1}$.
The notation $T_{\bm{X}}$ explicitly reminds us that the training
error depends on the training set $\bm{X}$ (and $\bm{y}$). 
\begin{equation}
\text{\text{Err}\ensuremath{T_{\bm{X}}}}\coloneqq\mathbb{E}_{\bm{y}}\mathbb{E}_{\bm{X},\bm{y}}\left\Vert \bm{y}-\hat{\mu}_{n}(\bm{X})\right\Vert ^{2}\approx\frac{1}{N}\sum_{\text{\ensuremath{\bm{y}} conditioned on }\bm{X}}\frac{1}{n}\sum_{i=1}^{n}\ell(y_{i},\hat{\mu}_{n}(\bm{x}_{i}))\,.\label{eq:apprx_ErrTX}
\end{equation}
This notation means that when we assume that the response $\bm{y}$
is random given the input $\bm{X}$, the average training error can
be described by $\text{Err}T_{\bm{X}}$. The summation in \eqref{eq:apprx_ErrTX}
means that we fix $\bm{X}$ and simulate $N$ different $\bm{y}$'s
and summing over these $N$ pairs of $\bm{X},y$.

The \emph{testing error} describes the out-sample predictive performance
of model, and it depends on both the training input $\bm{X}$ and
response $\bm{y}$. Unlike the well-accepted notion of training error
\eqref{eq:apprx_ErrTX}, \citet{rosset2019fixed} discussed three
different kinds of settings where model testing errors can be computed. 
\begin{itemize}
\item The fixed-X setting \citep{efron_estimation_2004}. The testing and
training set share the same input locations ($\bm{X}$ is nonrandom),
yet the response in testing set is regenerated to reflect the randomness
in response. 
\begin{align}
\text{Err}F_{\bm{X},\bm{y}} & \coloneqq\mathbb{E}_{\tilde{\bm{y}}\mid\bm{X},\bm{y}}\frac{1}{n}\sum_{i=1}^{n}\left\Vert \tilde{y}(\bm{x}_{i})-\hat{\mu}(\bm{x}_{i})\right\Vert _{2}^{2}.\label{eq:ErrFX}
\end{align}
The notation $\tilde{\bm{y}}=\{\tilde{y}(\bm{x}_{1}),\tilde{y}(\bm{x}_{2}),\cdots,\tilde{y}(\bm{x}_{n})\}$,
where each $\tilde{y}(\bm{x}_{i})$ is an independent copy of $y_{i}$
with the same distribution, corresponding to the input location $\bm{x}_{i}$.
The notation $\mathbb{E}_{\tilde{\bm{y}}\mid\bm{X},\bm{y}}$ means
that we take conditional expectation on $\tilde{\bm{y}}$ conditioning
on $\bm{X},\bm{y}$. 
\item The same-X setting \citep{rosset2019fixed}. The testing and training
set share the same input location distribution ($\bm{X}$ is random),
and the response in testing set is independently regenerated to reflect
the randomness in response. The same-X prediction error can be written
as: 
\begin{align}
\text{Err}S & \coloneqq\mathbb{E}_{\tilde{\bm{y}},\bm{X},\bm{y}}\frac{1}{n}\sum_{i=1}^{n}\left\Vert \tilde{y}(\bm{x}_{i})-\hat{\mu}(\bm{x}_{i})\right\Vert _{2}^{2}\label{eq:ErrS}\\
 & =\mathbb{E}_{\tilde{\bm{y}},\bm{X},\bm{y}}\left\Vert \tilde{y}(\bm{x}_{1})-\hat{\mu}(\bm{x}_{1})\right\Vert _{2}^{2}.\nonumber 
\end{align}
In this setting, the error $\text{Err}S$ does not investigate any
new input locations, but assume that the input locations are randomly
drawn. Unlike \eqref{eq:ErrFX}, \eqref{eq:ErrS} does not depend
on the input location $\bm{X}$ in the training set, because the notation
$\mathbb{E}_{\tilde{\bm{y}},\bm{X},\bm{y}}$ means that we take joint
expectation on $\bm{X},\bm{y},\tilde{\bm{y}}$ jointly and get rid
of the dependence on $\bm{X},\bm{y},\tilde{\bm{y}}$. 
\item The random-X setting \citep{luan_predictive_2021}. The testing and
training set may have different input location distributions ($\bm{X}$
is random), and the response in testing set is independently regenerated
to reflect the randomness in response. The random-X prediction error
can be written as: 
\begin{align}
\text{Err}R_{\bm{X}} & \coloneqq\mathbb{E}_{\bm{y}}\mathbb{E}_{\bm{x}_{*},\bm{y}_{*}\mid\bm{X},\bm{y}}\frac{1}{n}\sum_{i=1}^{n}\left\Vert y_{i,*}-\hat{\mu}(\bm{x}_{i,*})\right\Vert _{2}^{2}\label{eq:ErrRX}
\end{align}
\begin{align}
\text{Err}R_{\bm{X}} & \approx\frac{1}{N}\sum_{\text{\ensuremath{\bm{y}} conditioned on }\bm{X}}\mathbb{E}_{\bm{x}_{*},\bm{y}_{*}\mid\bm{X},\bm{y}}\frac{1}{n}\sum_{i=1}^{n}\ell(y_{i,*},\hat{\mu}(\bm{x}_{i,*}))\,.\label{eq:apprx_ErrRX}
\end{align}
In this setting, the error $\text{Err}R_{\bm{X}}$ investigates the
input locations where the model $\hat{\mu}$ any new input locations,
but assume that the input locations are fixed. 
\end{itemize}
\citet{rosset2019fixed} pointed out that the testing error in random-X
setting would be more appropriate for assessing model performance
on the testing set. \emph{We would focus only on the training error
\eqref{eq:apprx_ErrTX} and the testing error \eqref{eq:ErrRX}.}
We will also use the term prediction location $\bm{x}_{*}$, which
can be considered as a one-point testing set.

\subsection{\label{subsec:Linear-Regression-Model}Linear Regression Model}

In a linear regression model, whose model complexity (or capacity)
is well accepted as the number of features used in the model. The
double descent curve comprises three distinct regions: underparameterized,
interpolation threshold, and overparameterized. We consider a dataset
$\mathcal{D}=\{(\bm{x}_{i},y_{i})\}_{i=1}^{n}$ with $\bm{x}_{i}\in\mathbb{R}^{d\times1}$
and $y_{i}\in\mathbb{R}^{1}$ and the goal is to learn a function
$f:\mathbb{R}^{d}\rightarrow\mathbb{R}$ that approximates the relationship
between $\bm{x}_{i}$ and $y_{i}$ in form of 
\begin{equation}
f(\bm{x};\bm{\beta})=\bm{x}^{T}\bm{\beta},\label{eq:linear_model}
\end{equation}
where $\bm{\beta}\in\mathbb{R}^{d\times1}$ is the coefficient vector
to be learned. This setup covers both linear regression with and without
intercepts, since we can fix the last element of $\bm{x}$ to be a
deterministic constant and consider the distribution of $\bm{x}$
is degenerated at that location. We can optimize $\bm{\beta}$ by
minimizing the mean squared error (MSE, $L_{2}$-loss) on the training
data:

\begin{equation}
\hat{\bm{\beta}}=\arg\min_{\bm{\beta}}\frac{1}{n}\sum_{i=1}^{n}(y_{i}-f(\bm{x}_{i};\bm{\beta}))^{2}.
\end{equation}
The solution to the problem can be written as $\hat{\bm{\beta}}=\left(\bm{X}^{T}\bm{X}\right)^{-1}\bm{X}^{T}\bm{y}$
where $\bm{X}\in\mathbb{R}^{n\times d}$ is the matrix obtained by
stacking rows of $\bm{x}_{i}$'s in the training set. The classical
degree of freedom of the matrix form $\bm{y}=\bm{X}\bm{\beta}$ of
model \eqref{eq:linear_model} is defined as $tr(\bm{H}),\bm{H}=\bm{X}\left(\bm{X}^{T}\bm{X}\right)^{-1}\bm{X}^{T}$
for the linear regression model considers the in-sample error as a
model complexity measure. However, the U-shape with respect to $tr(\bm{H})$
may not exist when we consider modern machine learning models \citep{belkin_reconciling_2019,luan_predictive_2021}.
Then, assuming $d<n$, we can sequentially increase the number $d$
of parameters (i.e., regression coefficients) to get a better fit
in the sense that the smallest $L_{2}$ loss keeps decreasing. Once
we attain $d=n$, the linear regression model becomes \emph{saturated},
the $L_{2}$ loss would not decrease further.

The above single-descent intuition based on the bias-variance trade-off
tells us: a NN with moderate number of nodes (and layers) is preferred.
This view is natural at first until \citet{belkin_reconciling_2019}
pointed out that for a deep neural network (DNN, i.e., the network
architecture with a lot of nodes and layers), there occurs a \emph{double-descent
phenomena}.

When we plot the loss function against the model complexity measure
of NN. The model complexity measure for NN is chosen to be the number
of nodes and layers in the network architecture (i.e., the number
of hidden units). Then we would observe the U-shape curve, followed
by another descending curve after reaching the second peak. This is
known as the \emph{double-descent} phenomena for the loss function
in the NN setting \citep{belkin_reconciling_2019,neal2018modern}.
Although the training procedure (i.e., fitting the model by minimizing
the data-dependent loss function) remains the same, the traditional
bias-variance trade-off on the loss function does not hold for the
NN, otherwise we would expect the single-descent instead of the double-descent.
In linear regression models, \citet{hastie_surprises_2020} pointed
out that when the training dataset is not fixed, in an asymptotic
setting, the double-descent phenomena even exists for linear regression
models, motivating us to study the optimism using linear regression
model.

Linear regression models \citep{ravishanker2002first} is fitted by
minimizing the $L_{2}$ loss function with respect to the regression
coefficients $\bm{\beta}\in\mathbb{R}^{d\times1}$. In the matrix
form $\bm{X}\in\mathbb{R}^{n\times d}$, each instance is represented
by a vector $\bm{x}_{i}\in\mathbb{R}^{d\times1}$, $\bm{X}=\left[\begin{array}{c}
\bm{x}_{1}^{T}\\
\vdots\\
\bm{x}_{n}^{T}
\end{array}\right]$ represents a location; $\bm{y}=\left(y_{1},\cdots,y_{n}\right)^{T}\in\mathbb{R}^{n\times1}$,
each row is a scalar response. We want to use $n$ $d$-dimensional
inputs $\bm{X}\in\mathbb{R}^{n\times d}$ to predict the response
$\bm{y}\in\mathbb{R}^{n\times1}$. The (Gaussian) \emph{linear regression}
model (without intercept) can be written as below.

\begin{align}
\bm{y} & =\bm{X}\bm{\beta}+\bm{\epsilon},\bm{\beta}\in\mathbb{R}^{n},\nonumber \\
\bm{\epsilon} & \sim N_{n}(\bm{0},\sigma_{\epsilon}^{2}\bm{I}_{n})\in\mathbb{R}^{n},\label{eq:LM no intercept}
\end{align}
where $\bm{X}\bm{\beta}$ describes a linear relationship (or dependence)
between input $\bm{X}$ and response $y$, the random variable $\bm{\epsilon}$
picks up the potential Gaussian noise in observations. Therefore,
we can write the model as 
\begin{align}
\bm{y}(\bm{x}) & \sim N_{n}(\bm{x}^{T}\bm{\beta},\sigma_{\epsilon}^{2}\bm{I}_{n})\in\mathbb{R}^{n},\nonumber \\
\mu(\bm{X}) & =\bm{X}\beta.
\end{align}
To fit the linear model we consider the loss function 
\begin{equation}
\ell=L_{2}(\bm{\beta};\bm{X},\bm{y})\coloneqq\left\Vert \bm{y}-\bm{X}\bm{\beta}\right\Vert _{2}^{2}=\left(\bm{y}-\bm{X}\bm{\beta}\right)^{T}\left(\bm{y}-\bm{X}\bm{\beta}\right),
\end{equation}
and suppose that $\bm{X}$ is of full rank in the discussion below
for simplicity. By taking the matrix gradient $\frac{\partial}{\partial\bm{\beta}}L(\bm{\beta};\bm{X},\bm{y})$
to be zero, we can solve for a minimizer $\hat{\bm{\beta}}=\left(\bm{X}^{T}\bm{X}\right)^{-1}\bm{X}^{T}\bm{y}$,
which is known as the \emph{least square estimator}. Our model estimate
at observed locations $\bm{X}$ is 
\begin{align*}
\hat{\bm{\mu}}(\bm{X})=(\hat{\mu}(\bm{x}_{1}),\hat{\mu}(\bm{x}_{2}),\cdots,\hat{\mu}(\bm{x}_{n}))^{T}=(\bm{x}_{1}^{T}\hat{\bm{\beta}},\bm{x}_{2}^{T}\hat{\bm{\beta}},\cdots,\bm{x}_{n}^{T}\hat{\bm{\beta}})^{T}=\bm{X}\hat{\bm{\beta}}=\bm{H}\bm{y}
\end{align*}
with a hat matrix $\bm{H}=\bm{X}\left(\bm{X}^{T}\bm{X}\right)^{-1}\bm{X}^{T}$.

For a single prediction location $\bm{x}_{*}$, we use the following
notations $\bm{h}_{i}^{T}=\bm{x}_{i}^{T}\left(\bm{X}^{T}\bm{X}\right)^{-1}\bm{X}^{T}$,
$\bm{H}=\left[\begin{array}{c}
\bm{h}_{1}^{T}\\
\vdots\\
\bm{h}_{n}^{T}
\end{array}\right]$ and $\bm{h}_{*}^{T}=\bm{x}_{*}^{T}\left(\bm{X}^{T}\bm{X}\right)^{-1}\bm{X}^{T}$.
The prediction mean is $\hat{\mu}(\bm{x}_{*})=\bm{x}_{*}^{T}\hat{\bm{\beta}}=\bm{x}_{*}^{T}\left(\bm{X}^{T}\bm{X}\right)^{-1}\bm{X}^{T}\bm{y}=\bm{h}_{*}^{T}\bm{y}$.
The prediction error at a new input $\bm{x}_{*}$ can be written as:
\begin{align}
 & \mathbb{E}_{\bm{y}\mid\bm{X},\bm{x}_{*}}\left\Vert y_{*}(\bm{x}_{*})-\hat{\mu}(\bm{x}_{*})\right\Vert _{2}^{2}=\mathbb{E}_{\bm{y}\mid\bm{X},\bm{x}_{*}}\left\Vert \bm{x}_{*}^{T}\bm{\beta}+\epsilon_{*}-\bm{x}_{*}^{T}\hat{\bm{\beta}}\right\Vert _{2}^{2}\\
 & =\sigma_{\epsilon}^{2}+\mathbb{E}\left(\bm{x}_{*}^{T}\bm{\beta}-\mathbb{E}\bm{x}_{*}^{T}\hat{\bm{\beta}}+\mathbb{E}\bm{x}_{*}^{T}\hat{\bm{\beta}}-\bm{x}_{*}^{T}\hat{\bm{\beta}}\right)^{T}\left(\bm{x}_{*}^{T}\bm{\beta}-\mathbb{E}\bm{x}_{*}^{T}\hat{\bm{\beta}}+\mathbb{E}\bm{x}_{*}^{T}\hat{\bm{\beta}}-\bm{x}_{*}^{T}\hat{\bm{\beta}}\right)\nonumber \\
 & =\begin{array}{c}
\sigma_{\epsilon}^{2}\\
\text{\text{noise var.}}
\end{array}+\begin{array}{c}
\underbrace{\mathbb{E}\left(\bm{x}_{*}^{T}\bm{\beta}-\mathbb{E}\bm{x}_{*}^{T}\hat{\bm{\beta}}\right)^{T}\left(\bm{x}_{*}^{T}\bm{\beta}-\mathbb{E}\bm{x}_{*}^{T}\hat{\bm{\beta}}\right)}\\
\text{(square) bias of estimator \ensuremath{\ensuremath{\hat{\mu}}(\ensuremath{\bm{x}_{*}})}=\ensuremath{\bm{x}_{*}^{T}\hat{\bm{\beta}}}}
\end{array}+\nonumber \\
 & \begin{array}{c}
\underbrace{\mathbb{E}\left(\bm{x}_{*}^{T}\hat{\bm{\beta}}-\mathbb{E}\bm{x}_{*}^{T}\hat{\bm{\beta}}\right)^{T}\left(\bm{x}_{*}^{T}\hat{\bm{\beta}}-\mathbb{E}\bm{x}_{*}^{T}\hat{\bm{\beta}}\right)}\\
\text{\text{variance of estimator \ensuremath{\hat{\mu}}(\ensuremath{\bm{x}_{*}})=\ensuremath{\bm{x}_{*}^{T}\hat{\bm{\beta}}}}}
\end{array},\label{eq:bias-variance}
\end{align}
which induces the \emph{bias-variance trade-off}. The notation $\mathbb{E}_{\bm{y}\mid\bm{X},\bm{x}_{*}}$
means that we take the expectation with respect to response $\bm{y}$
given the observed locations $\bm{X}$ and the prediction location
$\bm{x}_{*}$, where we assume that the conditional distribution of
$\bm{y}\mid\bm{X},\bm{x}_{*}$ is known. This decomposition holds
for other settings as shown in \citet{rosset2019fixed} (i.e., $B^{+}$
and $V^{+}$ in their notations).

With the above decomposition of the expected loss function, if we
plot the $L_{2}$ loss of the fitted linear regression model (as y-axis)
against the degree of freedom $tr(\bm{H})$ as model complexity measure
(as the x-axis), then the loss function can be decomposed into bias
and the variance components. This exhibits the U-shape curve discussed
by multiple authors in classical regression setting \citep{friedman2017elements,neal2018modern}.
When there are few parameters (i.e., small $p$), the predictive variance
is relatively large; when there are too many parameters (i.e., large
$p$), the bias is relatively large.

After revisiting the linear models and testing training errors, to
reconcile the seemingly dilemma, we investigate the notion of optimism
in section \ref{sec:Optimism-Measures-of} and link it to predictive
model complexity measure. Detailed examples and our main results concerning
linear models are presented in sections \ref{subsec:Example-2:-Nonlinear},
followed by discussions in section \ref{sec:Contribution-and-Discussion}.

\section{\label{sec:Optimism-Measures-of}Optimism Measures of Model Complexity}

After identifying the first descent phenomena caused by the variance-bias
trade-off \citep{belkin_reconciling_2019}, it is unclear why the
second descent occurs in complex models like NN. One thinking (which
we would take) is that the x-axis of the prediction error against
complexity plot uses an incorrect choice of complexity measure; while
the others are suspicious in the robustness of an over-fitting model
\citep{jordan2015machine,ju2021generalization,ju2020overfitting}.
In essence, the prediction error against complexity plot should be
replaced with prediction error against a corrected version of ``predictive
complexity'' \citep{luan_predictive_2021,patil2024revisiting}.

An adjusted complexity measure, namely the \emph{optimism} \citep{efron_estimation_2004}
of the model, can be elicited as the difference between training and
testing errors. When a model is trained on one training set that is
different from the testing set where the model predicts, the optimism
would tend to be larger (in both offline and online scenarios \citep{luo2024hybrid}).
Extending the idea of using optimism, \citet{luan_predictive_2021}
propose to adopt part of the optimism as complexity measures, namely
the predictive complexity.

The first advantage of using optimism as a model complexity measure
is that it not only reflects the goodness-of-fit of the model but
also reflects the generalizability of the model from training to testing
datasets \citep{wang2024degrees}. In addition, a scaled optimism
can be shown to agree with the classical degree of freedom when we
consider the linear regression model \citep{ye1998measuring}. Therefore,
we could benefit from intuitions established in classic modeling contexts
where the number of parameters are used for measuring model complexity.

The second advantage of using model optimism is that it can be computed
via Monte-Carlo (MC) method since both \ref{eq:apprx_ErrTX} and \ref{eq:apprx_ErrRX}
can be approximated by definition, (See Algorithm \ref{alg:Simulation-algorithm-for})
for almost all predictive models without much assumption on the explicit
model forms. This allows us to define complexity descriptors for black-box
models like NN. We expect that this could be a more faithful model
complexity measure. Precisely, we have following proposition that
defines the optimism and we can have its closed form expression. 
\begin{prop}
\label{prop:Let}(Optimism in linear regression) The optimism (i.e.,
random-X prediction error \eqref{eq:ErrRX} minus averaged training
error \eqref{eq:apprx_ErrTX}) can be defined and computed as below
(e.g., (3.2) in \citet{luan_predictive_2021}): 
\begin{align}
\text{Opt }R_{\bm{X}} & \coloneqq\text{Err}R_{\bm{X}}-\text{Err}T_{\bm{X}}\\
 & =\mathbb{E}_{\bm{x}_{*}}\left\Vert \mu(\bm{x}_{*})-\bm{h}_{*}\mu(\bm{X})\right\Vert _{2}^{2}-\frac{1}{n}\left\Vert \bm{\mu}(\bm{X})-\bm{H}\bm{\mu}(\bm{X})\right\Vert _{2}^{2}\nonumber \\
 & +\sigma_{\epsilon}^{2}\left(\mathbb{E}_{\bm{x}_{*}}\|\bm{h}_{*}^{T}\|_{2}^{2}-\frac{1}{n}\text{trace \ensuremath{\left(\bm{H}^{T}\bm{H}\right)}}+\frac{1}{n}\text{trace}\left(2\bm{H}\right)\right).\label{eq:def opt}
\end{align}
\end{prop}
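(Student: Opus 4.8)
\section*{Proof proposal}

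The plan is to evaluate $\text{Err}R_{\bm{X}}$ and $\text{Err}T_{\bm{X}}$ separately in closed form under the signal-plus-noise model $\bm{y}=\bm{\mu}(\bm{X})+\bm{\epsilon}$ with $\bm{\epsilon}\sim N_{n}(\bm{0},\sigma_{\epsilon}^{2}\bm{I}_{n})$, and then subtract. Throughout I use that the fitted vector is $\hat{\bm{\mu}}(\bm{X})=\bm{H}\bm{y}$ and a single prediction is $\hat{\mu}(\bm{x}_{*})=\bm{h}_{*}^{T}\bm{y}$, both linear in $\bm{y}$; this linearity is what makes every expectation reduce to the first and second moments of the Gaussian noise.

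First I would compute the training error. Writing the residual as $\bm{y}-\hat{\bm{\mu}}(\bm{X})=(\bm{I}_{n}-\bm{H})(\bm{\mu}(\bm{X})+\bm{\epsilon})$ and expanding the squared norm produces a deterministic term $\|(\bm{I}_{n}-\bm{H})\bm{\mu}(\bm{X})\|_{2}^{2}$, a cross term linear in $\bm{\epsilon}$ that vanishes because $\mathbb{E}\bm{\epsilon}=\bm{0}$, and a quadratic form $\mathbb{E}\,\bm{\epsilon}^{T}(\bm{I}_{n}-\bm{H})^{T}(\bm{I}_{n}-\bm{H})\bm{\epsilon}$. Using $\mathbb{E}\bm{\epsilon}\bm{\epsilon}^{T}=\sigma_{\epsilon}^{2}\bm{I}_{n}$, this quadratic form equals $\sigma_{\epsilon}^{2}\,\text{trace}\big((\bm{I}_{n}-\bm{H})^{T}(\bm{I}_{n}-\bm{H})\big)=\sigma_{\epsilon}^{2}\big(n-2\,\text{trace}(\bm{H})+\text{trace}(\bm{H}^{T}\bm{H})\big)$. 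Dividing by $n$ gives $\text{Err}T_{\bm{X}}=\frac{1}{n}\|\bm{\mu}(\bm{X})-\bm{H}\bm{\mu}(\bm{X})\|_{2}^{2}+\sigma_{\epsilon}^{2}-\frac{2\sigma_{\epsilon}^{2}}{n}\text{trace}(\bm{H})+\frac{\sigma_{\epsilon}^{2}}{n}\text{trace}(\bm{H}^{T}\bm{H})$.

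Next I would compute the random-X prediction error. Because the $n$ test points in \eqref{eq:ErrRX} are i.i.d.\ and enter the loss additively, the average collapses to a single fresh point $(\bm{x}_{*},y_{*})$ with $y_{*}=\mu(\bm{x}_{*})+\epsilon_{*}$, exactly as in the same-X reduction \eqref{eq:ErrS}. Writing $y_{*}-\hat{\mu}(\bm{x}_{*})=\big(\mu(\bm{x}_{*})-\bm{h}_{*}^{T}\bm{\mu}(\bm{X})\big)+\epsilon_{*}-\bm{h}_{*}^{T}\bm{\epsilon}$ and squaring, the deterministic part is $\big(\mu(\bm{x}_{*})-\bm{h}_{*}^{T}\bm{\mu}(\bm{X})\big)^{2}$. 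The crucial structural fact is that the test noise $\epsilon_{*}$ is independent of the training noise $\bm{\epsilon}$ and both are mean zero, so all cross terms vanish under $\mathbb{E}$; the surviving noise contribution is $\mathbb{E}\epsilon_{*}^{2}+\mathbb{E}(\bm{h}_{*}^{T}\bm{\epsilon})^{2}=\sigma_{\epsilon}^{2}+\sigma_{\epsilon}^{2}\|\bm{h}_{*}\|_{2}^{2}$. Taking $\mathbb{E}_{\bm{x}_{*}}$ then yields $\text{Err}R_{\bm{X}}=\mathbb{E}_{\bm{x}_{*}}\|\mu(\bm{x}_{*})-\bm{h}_{*}^{T}\bm{\mu}(\bm{X})\|_{2}^{2}+\sigma_{\epsilon}^{2}+\sigma_{\epsilon}^{2}\mathbb{E}_{\bm{x}_{*}}\|\bm{h}_{*}^{T}\|_{2}^{2}$.

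Finally, subtracting the two expressions, the irreducible-noise term $\sigma_{\epsilon}^{2}$ cancels between testing and training, and collecting the $\sigma_{\epsilon}^{2}$-coefficients reproduces \eqref{eq:def opt} once $\frac{1}{n}\text{trace}(2\bm{H})$ is identified with $\frac{2}{n}\text{trace}(\bm{H})$. I do not expect a deep obstacle here: the argument is careful bookkeeping of first and second moments together with the trace identity for a quadratic form in Gaussian noise. The one point to guard against is the temptation to simplify using idempotency. In ordinary least squares $\bm{H}$ is a symmetric projection, so $\text{trace}(\bm{H}^{T}\bm{H})=\text{trace}(\bm{H})$ and the bracket collapses to $\text{trace}(\bm{H})$; but keeping $\text{trace}(\bm{H}^{T}\bm{H})$ and $\text{trace}(2\bm{H})$ separate is precisely what keeps the identity valid for a general smoother matrix $\bm{H}$ (as needed later for kernel ridge regression). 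A secondary item to verify is that the reduction of the $n$-point test error to a single point is legitimate, which follows from additivity of the $L_{2}$ loss and the i.i.d.\ assumption on the test sample.
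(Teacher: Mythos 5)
Your proposal is correct: both intermediate expressions you derive, namely $\text{Err}T_{\bm{X}}=\frac{1}{n}\left\Vert (\bm{I}-\bm{H})\bm{\mu}(\bm{X})\right\Vert _{2}^{2}+\sigma_{\epsilon}^{2}\left(1-\frac{2}{n}\text{trace}(\bm{H})+\frac{1}{n}\text{trace}(\bm{H}^{T}\bm{H})\right)$ and $\text{Err}R_{\bm{X}}=\mathbb{E}_{\bm{x}_{*}}\left\Vert \mu(\bm{x}_{*})-\bm{h}_{*}^{T}\bm{\mu}(\bm{X})\right\Vert _{2}^{2}+\sigma_{\epsilon}^{2}\left(1+\mathbb{E}_{\bm{x}_{*}}\|\bm{h}_{*}\|_{2}^{2}\right)$, agree with the paper's, and the subtraction yields \eqref{eq:def opt} exactly. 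The high-level strategy (closed-form evaluation of both errors under the Gaussian model, then subtraction) is the same, but your handling of the training error differs in a way worth noting. The paper expands coordinate-wise, keeps the coupled cross term $\mathbb{E}\,y_{i}^{T}\bm{h}_{i}^{T}\bm{y}$ intact, and only at the end converts it into $\frac{1}{n}\text{trace}(2\bm{H})\sigma_{\epsilon}^{2}$ via the quadratic-form identity $\mathbb{E}\,\bm{y}^{T}\bm{A}\bm{y}=\text{trace}(\bm{A}\,\text{Var}\,\bm{y})+(\mathbb{E}\bm{y})^{T}\bm{A}\,\mathbb{E}\bm{y}$ together with an insert-and-cancel of $\frac{2}{n}\text{trace}(\bm{\beta}^{T}\bm{X}^{T}\bm{H}\bm{X}\bm{\beta})$. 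You instead substitute $\bm{y}=\bm{\mu}(\bm{X})+\bm{\epsilon}$ at the outset, so the residual is $(\bm{I}-\bm{H})(\bm{\mu}(\bm{X})+\bm{\epsilon})$ and the covariance penalty responsible for optimism appears automatically as the $-2\,\text{trace}(\bm{H})$ term inside $\text{trace}\left((\bm{I}-\bm{H})^{T}(\bm{I}-\bm{H})\right)$; no separate identity or cancellation trick is needed. Your route is shorter and makes the source of the $\text{trace}(2\bm{H})$ term transparent (it is the correlation between each $y_{i}$ and the fit $\bm{h}_{i}^{T}\bm{y}$, absent at a fresh test point), while the paper's route makes that same covariance explicit as an object ($\mathbb{E}\,y_{i}^{T}\bm{h}_{i}^{T}\bm{y}$) before evaluating it. Your closing caution about not invoking idempotency of $\bm{H}$ is also well placed: the stated formula keeps $\text{trace}(\bm{H}^{T}\bm{H})$ and $\text{trace}(2\bm{H})$ separate precisely so that it remains valid for general linear smoothers, and the paper's own proof parenthetically invokes $\bm{H}^{T}\bm{H}=\bm{H}$ in a step where it is not actually needed.
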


\begin{proof}
See Appendix \ref{sec:Proof-for-Proposition-1}. 
\end{proof}
In \eqref{eq:def opt}, the second line is $\Delta B_{\bm{X}}$ and
the third line is exactly (3.3) in \citet{luan_predictive_2021}.
Optimism is widely used as a complexity measure in modeling context
\citep{efron_estimation_2004,hastie_surprises_2020}, in addition,
\citet{ye1998measuring} showed that a scaled version of optimism
coincides with the degree of freedom. To show this fact, we want to
use the quantity in $\text{Opt }R_{\bm{X}}$ that is inside the last
bracket after $\sigma_{\epsilon}^{2}$ in \eqref{eq:def opt}. Specifically,
when $\bm{x}_{*}=\bm{X}$, we can cancel the first two terms and have
following expression, which is independent of signal $\mu$: 
\begin{align}
\text{Opt }R_{\bm{X}} & =\sigma_{\epsilon}^{2}\left(\|\bm{H}\|_{2}^{2}-\frac{1}{n}\text{trace \ensuremath{\left(\bm{H}^{T}\bm{H}\right)}}+\frac{1}{n}\text{trace}\left(2\bm{H}\right)\right)\\
 & =\sigma_{\epsilon}^{2}\left(\|\bm{H}\|_{2}^{2}-\frac{1}{n}\text{trace}\left(\bm{H}\right)\right).
\end{align}
This is the closed form expression when the model fitting procedure
can be described as a linear projection method with a certain choice
of basis functions (e.g., polynomial regression, B-splines \citep{gu2013smoothing}).
The optimism is related to GDF \citep{ye1998measuring}, Malow's $C_{p}$
and other complexity measures \citep{efron_estimation_2004}. In \eqref{eq:def opt},
we separate the expression into ``signal part'' involving $\mu$;
and the ``noise part'' involving $\sigma_{\epsilon}^{2}$. This
separation is different from equations (3), (4) and (5) in \citet{rosset2019fixed}
even without the $\text{Err}T_{\bm{X}}$.

\citet{luan_predictive_2021,hastie_surprises_2020} considered to
approximate the signal part using a leave-one-out cross validation
(LOOCV) technique with some adjustment. The LOOCV estimation is supported
by the numerical evidence when the training set $\bm{X}$ is fixed.
The original study focused on the estimation when the signal is fixed,
in our study below we derived asymptotic exact formula, showing how
this term depends on the signal.

We investigate the setting where the training set $\bm{X}$ is assumed
to be random and drawn from a distribution, as in $\text{Opt }R_{\bm{X}}$.
Unlike \citet{rosset2019fixed}(e.g., their Theorem 3), we do not
assume the model is correctly-specified and focus on the impact of
the actual signal on the behavior of optimism. Arguably, it is more
often than not that the model is not an unbiased estimate to the signal
in reality, our technical calculation can be extended to more general
models like linear smoothers at the cost of more complex notations.

Next, we would show that the optimism is signal-dependent, which is
different from the predictive model complexity measure \citep{luan_predictive_2021}.
That means, if the underlying data generating mechanism changes, then
the model complexity measure for a fitted model would also change.
A signal-independent model complexity measure could be defined through
applying the modeling procedure to white noise and compare the complexity
under white noise and nontrivial signals (e.g., the difference between
model optimism and white noise optimism).

\section{\label{subsec:Example-2:-Nonlinear}Optimism for Linear Regression
Model}

\subsection{Theoretical Results}

In the previous section, we have discussed the possible effect of
signal when we try to measure the model complexity in predictive setting.
In this section, we presume formally that we fit regression models
for a training dataset $(\bm{X},\bm{y})$ consisting of i.i.d. pairs
of input and responses and a testing dataset $(\bm{x}_{*,},y_{*})$
consisting of i.i.d. pairs of input and responses. Both of the rows
of training set $\bm{X}=\bm{X}_{n}\in\mathbb{R}^{n\times d}$ and
a new location in testing set $\bm{x}_{*}\in\mathbb{R}^{d}$ share
the same distribution (e.g., $N(\bm{0},\sigma^{2}\bm{I})$). Based
on the definitions of \eqref{eq:apprx_ErrTX} and \eqref{eq:ErrRX},
we can obtain the intuition that

\begin{align}
\text{Opt }R_{\bm{X}} & \coloneqq\text{Err}R_{\bm{X}}-\text{Err}T_{\bm{X}}\overset{\text{typically}}{\geq}0\label{eq:optR_X_ex2}
\end{align}
Although the above derivation focused on the $L_{2}$ loss function
in linear regression, the positivity holds in general model fitting
procedures. We prove this fact as a proposition below. 
\begin{prop}
\label{prop:(Positivity)-The-testing}(Positivity) The testing error
$\text{Err}R_{\bm{X}}$ is greater than the training error $\text{Err}T_{\bm{X}}$
for a loss function minimization procedure, therefore, the optimism
$\text{Opt }R_{\bm{X}}\geq0$. The trained model $\hat{\mu}_{\text{train}}$
is defined in the same functional space $\mathcal{F}_{n}$, which
is independent of $\{\bm{x}_{i},y_{i}\}_{i=1}^{n}$ and $\{\bm{x}_{*,i},y_{*,i}\}_{i=1}^{n}$
but may depend on sample size $n$: 
\begin{align}
\hat{\mu}_{\text{train}} & =\arg\min_{f\in\mathcal{F}_{n}}T_{\bm{X}}=\arg\min_{f\in\mathcal{F}_{n}}\frac{1}{n}\sum_{i=1}^{n}\ell(f(\bm{x}_{i}),y_{i}),\label{eq:train_lossL2}
\end{align}
For the optimism defined for $\hat{\mu}_{train}$ we have $\mathbb{E}_{\bm{X}}\text{Opt }R_{\bm{X}}\geq0$. 
\end{prop}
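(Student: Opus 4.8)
The plan is to prove the inequality by a symmetrization (``ghost sample'') argument that exploits two features of the setup: the minimizing class $\mathcal{F}_n$ is fixed in advance and does not see the data, and the testing pairs $\{(\bm{x}_{*,i},y_{*,i})\}_{i=1}^n$ are drawn from the same law as the training pairs $\{(\bm{x}_i,y_i)\}_{i=1}^n$. The idea is that the model we actually deploy, $\hat{\mu}_{\text{train}}$, is the empirical risk minimizer over $\mathcal{F}_n$ for the \emph{training} sample, so it cannot fit the independent \emph{testing} sample any better than the minimizer one would have obtained by training on the testing sample itself. Comparing these two minimizers and then invoking distributional symmetry will collapse the testing-side benchmark back onto the training error.

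First I would introduce the ``ghost'' estimator
\begin{equation}
\hat{\mu}_{\text{test}} \coloneqq \arg\min_{f\in\mathcal{F}_n}\frac{1}{n}\sum_{i=1}^n \ell\bigl(f(\bm{x}_{*,i}),y_{*,i}\bigr),
\end{equation}
that is, the same minimization procedure \eqref{eq:train_lossL2} applied to the testing sample rather than the training sample; this is well defined precisely because $\mathcal{F}_n$ is data-independent. By optimality of $\hat{\mu}_{\text{test}}$ on the testing sample, for every realization of the data we have the pointwise inequality
\begin{equation}
\frac{1}{n}\sum_{i=1}^n \ell\bigl(\hat{\mu}_{\text{test}}(\bm{x}_{*,i}),y_{*,i}\bigr) \;\le\; \frac{1}{n}\sum_{i=1}^n \ell\bigl(\hat{\mu}_{\text{train}}(\bm{x}_{*,i}),y_{*,i}\bigr).
\end{equation}
Taking the total expectation over $(\bm{X},\bm{y})$ and the testing sample, the right-hand side is, by the definition of the random-X error \eqref{eq:ErrRX}, exactly $\mathbb{E}_{\bm{X}}\text{Err}R_{\bm{X}}$.

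The remaining step is to identify the expectation of the left-hand side with the training error. Since $\mathcal{F}_n$ does not depend on the data and the testing sample is equal in distribution to the training sample, the pair consisting of the ghost estimator together with the data it was trained on, $(\hat{\mu}_{\text{test}},\{(\bm{x}_{*,i},y_{*,i})\}_i)$, has the same joint law as $(\hat{\mu}_{\text{train}},\{(\bm{x}_i,y_i)\}_i)$. Consequently
\begin{equation}
\mathbb{E}\,\frac{1}{n}\sum_{i=1}^n \ell\bigl(\hat{\mu}_{\text{test}}(\bm{x}_{*,i}),y_{*,i}\bigr) = \mathbb{E}\,\frac{1}{n}\sum_{i=1}^n \ell\bigl(\hat{\mu}_{\text{train}}(\bm{x}_i),y_i\bigr) = \mathbb{E}_{\bm{X}}\text{Err}T_{\bm{X}},
\end{equation}
and chaining the two displays yields $\mathbb{E}_{\bm{X}}\text{Err}T_{\bm{X}}\le\mathbb{E}_{\bm{X}}\text{Err}R_{\bm{X}}$, i.e.\ $\mathbb{E}_{\bm{X}}\text{Opt }R_{\bm{X}}\ge 0$.

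The main obstacle is that this distributional symmetry genuinely requires averaging over the training design $\bm{X}$: conditionally on a fixed $\bm{X}$ the training inputs are deterministic while the testing inputs are random, so the training and testing samples are \emph{not} exchangeable and the equidistribution step breaks down. This is exactly why the sharp conclusion is stated at the level of $\mathbb{E}_{\bm{X}}$ rather than conditionally on $\bm{X}$; a conditional positivity statement would need extra structure, such as the explicit closed form of Proposition~\ref{prop:Let} in the Gaussian linear case. The only remaining points — existence and measurability of the minimizers $\hat{\mu}_{\text{train}},\hat{\mu}_{\text{test}}$, with infima replacing minima should the former fail to be attained, and nonnegativity of $\ell$ so that every expectation is well defined — are routine and I would not belabor them.
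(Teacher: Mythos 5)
Your proposal is correct and is essentially the paper's own argument: the paper likewise introduces the ghost estimator $\hat{\mu}_{\text{test}}$ fitted by the same ERM procedure on the testing sample, applies its optimality inequality $\frac{1}{n}\sum_{i}\ell(\hat{\mu}_{\text{test}}(\bm{x}_{*,i}),y_{*,i})\leq\frac{1}{n}\sum_{i}\ell(\hat{\mu}_{\text{train}}(\bm{x}_{*,i}),y_{*,i})$, and then uses equidistribution of the training and testing samples (with $\mathcal{F}_{n}$ data-independent) to identify the expected ghost-sample risk of $\hat{\mu}_{\text{test}}$ with the expected training error of $\hat{\mu}_{\text{train}}$. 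Your added observation that the symmetry only holds after averaging over $\bm{X}$ is a correct and worthwhile clarification of why the conclusion is stated as $\mathbb{E}_{\bm{X}}\text{Opt }R_{\bm{X}}\geq0$.
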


\begin{proof}
See Appendix \ref{sec:Proof-of-the-main-THM}. 
\end{proof}
For more complicated regression functions like NN shown in Figure
\ref{fig:Opt_vs_epoch} below, the loss landscape could be more complicated.
There can be more than one stationary points on the corresponding
loss landscape, where the NN may not converge to the minimizer stationary
point. Therefore, the resulting fitted regression model may not be
an interpolator, and the step \eqref{eq:proof_train2test} in our
proof cannot proceed. The mis-specification can arise from wrong smoothness
(e.g., sigmoid) or hard misfit, in both situations the signal does
not live in the space spanned by the activation functions (e.g., ReLU).
Unfortunately, the correct activation (as a basis of interpolation,
e.g., linear) is not known to the practitioner.

The following theorem gives the asymptotic formula for scaled optimism
up to $O_{p}\left(\frac{1}{\sqrt{n}}\right)$ for linear regression
models with intercept, and for linear models without intercept \eqref{eq:LM no intercept},
it remains the same except that we need to assume that the design
matrix $\bm{X}$ has one fixed constant column consisting of 1's and
a $\bm{\Sigma}$ with the corresponding diagonal element degenerated
as 0.

\textbf{Assumptions A1. } Let $\hat{\bm{\eta}}=\frac{1}{n}\bm{X}^{T}\bm{y}(\bm{X})=\frac{1}{n}\bm{X}^{T}\bm{y}$
and $\hat{\bm{\Sigma}}=\frac{1}{n}\bm{X}^{T}\bm{X}$. We assume that
\begin{equation}
\left\Vert \hat{\bm{\eta}}-\bm{\eta}\right\Vert _{2}=O_{p}\left(\frac{1}{\sqrt{n}}\right),\left\Vert \hat{\bm{\Sigma}}-\bm{\Sigma}\right\Vert _{2}=O_{p}\left(\frac{1}{\sqrt{n}}\right)\label{eq:assumption_A1}
\end{equation}
where $\bm{\eta}=\mathbb{E}_{\bm{x}_{*}}\bm{x}_{*}y(\bm{x}_{*})=\mathbb{E}_{\bm{x}_{*}}\bm{x}_{*}y_{*}$
and $\bm{\Sigma}=\mathbb{E}(\bm{x}_{*}\bm{x}_{*}^{T})$. 
\begin{thm}
\label{thm:main theorem} Under Assumption A1, we can write down the
errors as

\begin{align*}
\mathbb{E}_{\bm{X}}\text{Err}R_{\bm{X}} & =\mathbb{E}_{\bm{x}_{*}}\left\Vert y_{*}-\bm{x}_{*}^{T}\hat{\bm{\beta}}\right\Vert _{2}^{2}\\
 & =\mathbb{E}_{\bm{x}_{*}}\left(y_{*}-\bm{x}_{*}^{T}\bm{\Sigma}^{-1}\bm{\eta}\right)^{2}\\
 & +\frac{1}{n}\mathbb{E}_{\bm{x}_{*}}\left(y_{*}-\bm{x}_{*}^{T}\bm{\Sigma}^{-1}\bm{\eta}\right)^{2}\left(\bm{x}_{*}^{T}\bm{\Sigma}^{-1}\bm{x}_{*}\right)+O_{p}\left(\frac{1}{n^{3/2}}\right).\\
\mathbb{E}_{\bm{X}}\text{Err}T_{\bm{X}} & =\frac{1}{n}\mathbb{E}_{\bm{X}}\left\Vert \bm{y}-\bm{X}\hat{\bm{\beta}}\right\Vert _{2}^{2}\\
 & =\mathbb{E}_{\bm{x}_{*}}\left(y_{*}-\bm{x}_{*}^{T}\bm{\Sigma}^{-1}\bm{\eta}\right)^{2}\\
 & -\frac{1}{n}\mathbb{E}_{\bm{x}_{*}}\left(y_{*}-\bm{x}_{*}^{T}\bm{\Sigma}^{-1}\bm{\eta}\right)^{2}\left(\bm{x}_{*}^{T}\bm{\Sigma}^{-1}\bm{x}_{*}\right)+O_{p}\left(\frac{1}{n^{3/2}}\right).
\end{align*}
The expected random optimism for the least squares estimator is 
\begin{align}
\mathbb{E}_{\bm{X}}\text{Opt }R_{\bm{X}}=\frac{2}{n}\mathbb{E}_{\bm{X}}\left[\mathbb{E}_{\bm{x}_{*}}\left\Vert y_{*}-\bm{x}_{*}^{T}\bm{\Sigma}^{-1}\bm{\eta}\right\Vert _{2}^{2}\left(\bm{x}_{*}^{T}\bm{\Sigma}^{-1}\bm{x}_{*}\right)\right]+O_{p}\left(\frac{1}{n^{3/2}}\right) & .\label{eq:general_opt_expression}
\end{align}
\end{thm}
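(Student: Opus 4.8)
The plan is to reduce everything to a single first-order perturbation of the least-squares solution around its population target $\bm{\beta}_0:=\bm{\Sigma}^{-1}\bm{\eta}$, driven by one population orthogonality identity. First I would introduce the population residuals $r_i:=y_i-\bm{x}_i^T\bm{\beta}_0$ and $r_*:=y_*-\bm{x}_*^T\bm{\beta}_0$, and record the normal-equation fact $\mathbb{E}[\bm{x}_* r_*]=\bm{\eta}-\bm{\Sigma}\bm{\beta}_0=\bm{0}$, which is the engine of the whole argument. Writing $\hat{\bm{\beta}}=\hat{\bm{\Sigma}}^{-1}\hat{\bm{\eta}}$ with $\bm{\Delta}_\Sigma=\hat{\bm{\Sigma}}-\bm{\Sigma}$ and $\bm{\Delta}_\eta=\hat{\bm{\eta}}-\bm{\eta}$ (both $O_p(n^{-1/2})$ by A1), a Neumann expansion of $(\bm{\Sigma}+\bm{\Delta}_\Sigma)^{-1}$ combined with the exact cancellation $\bm{\Delta}_\eta-\bm{\Delta}_\Sigma\bm{\beta}_0=\frac1n\bm{X}^T(\bm{y}-\bm{X}\bm{\beta}_0)=\frac1n\sum_i\bm{x}_i r_i=:\bm{S}$ yields the key expansion $\hat{\bm{\beta}}-\bm{\beta}_0=\bm{\Sigma}^{-1}\bm{S}+O_p(n^{-1})$, where $\bm{S}=O_p(n^{-1/2})$ is a centered sample average. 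The point to stress is that the same $\bm{S}$ governs both the estimator's leading perturbation (the drift term drops out precisely because $\bm{\eta}-\bm{\Sigma}\bm{\beta}_0=\bm{0}$) and, later, the in-sample cross term.

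For the random-X test error I would expand $(y_*-\bm{x}_*^T\hat{\bm{\beta}})^2=r_*^2-2r_*\bm{x}_*^T(\hat{\bm{\beta}}-\bm{\beta}_0)+(\bm{x}_*^T(\hat{\bm{\beta}}-\bm{\beta}_0))^2$ and take $\mathbb{E}_{\bm{x}_*,y_*}$ with the training data held fixed. Since the test point is independent of $\hat{\bm{\beta}}$, the cross term vanishes \emph{exactly} through $\mathbb{E}[\bm{x}_* r_*]=\bm{0}$, leaving the exact identity $\text{Err}R_{\bm{X}}=\mathbb{E}_{\bm{x}_*}r_*^2+(\hat{\bm{\beta}}-\bm{\beta}_0)^T\bm{\Sigma}(\hat{\bm{\beta}}-\bm{\beta}_0)$. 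Substituting the expansion turns the quadratic form into $\bm{S}^T\bm{\Sigma}^{-1}\bm{S}+O_p(n^{-3/2})$, and taking $\mathbb{E}_{\bm{X}}$ over the training sample annihilates the off-diagonal ($i\neq j$) terms of $\bm{S}^T\bm{\Sigma}^{-1}\bm{S}$ by independence and mean-zeroness, producing $\frac1n\mathbb{E}_{\bm{x}_*}[r_*^2(\bm{x}_*^T\bm{\Sigma}^{-1}\bm{x}_*)]$ — exactly the $+\frac1n$ correction.

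For the training error the identical expansion of $\frac1n\sum_i(y_i-\bm{x}_i^T\hat{\bm{\beta}})^2$ behaves oppositely precisely because the in-sample cross term does \emph{not} vanish: it equals $-2\bm{S}^T(\hat{\bm{\beta}}-\bm{\beta}_0)=-2\bm{S}^T\bm{\Sigma}^{-1}\bm{S}+O_p(n^{-3/2})$, while the in-sample quadratic term carries $\hat{\bm{\Sigma}}$, which I would replace by $\bm{\Sigma}$ at cost $O_p(n^{-3/2})$ to obtain $+\bm{S}^T\bm{\Sigma}^{-1}\bm{S}$. The sum $-2\bm{S}^T\bm{\Sigma}^{-1}\bm{S}+\bm{S}^T\bm{\Sigma}^{-1}\bm{S}=-\bm{S}^T\bm{\Sigma}^{-1}\bm{S}$ flips the sign, and the same moment computation gives the $-\frac1n$ term. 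Subtracting the two expansions, the leading terms $\mathbb{E}_{\bm{x}_*}r_*^2$ cancel and the two corrections add, yielding the factor of $2$ in \eqref{eq:general_opt_expression}.

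The main obstacle I anticipate is the bookkeeping of remainders at the level of expectations rather than in probability: the statement reports an $O_p(n^{-3/2})$ remainder for quantities already integrated over the training sample, so one must promote the pointwise $O_p(n^{-3/2})$ stochastic bounds to genuine $O(n^{-3/2})$ bounds on their expectations. This needs moment control beyond the bare $O_p(n^{-1/2})$ rates of A1 — uniform integrability (or matching moment bounds, e.g. finite fourth moments of $\bm{x}_*$ and $r_*$) for the products $\bm{\delta}^T\bm{\Sigma}\bm{R}$, $\bm{\delta}^T\bm{\Delta}_\Sigma\bm{\delta}$, and the second-order Neumann remainder, together with $\bm{\Sigma}$ bounded away from singularity so that the expansion is valid on the high-probability event $\{\|\bm{\Delta}_\Sigma\|\,\|\bm{\Sigma}^{-1}\|<\tfrac12\}$. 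Making these integrability steps precise is where the real work lies; the algebra of the three displayed identities is otherwise routine.
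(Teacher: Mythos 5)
Your proposal is correct, and its engine is the same as that of the paper's proof in Appendix \ref{sec:New-Proof}: the population orthogonality $\mathbb{E}\left[\bm{x}_{*}\left(y_{*}-\bm{x}_{*}^{T}\bm{\Sigma}^{-1}\bm{\eta}\right)\right]=\bm{0}$, the first-order expansion of $\hat{\bm{\beta}}-\bm{\Sigma}^{-1}\bm{\eta}$ in terms of the centered average $\bm{S}=\hat{\bm{\eta}}-\hat{\bm{\Sigma}}\bm{\Sigma}^{-1}\bm{\eta}=\frac{1}{n}\sum_{i}\bm{x}_{i}\left(y_{i}-\bm{x}_{i}^{T}\bm{\Sigma}^{-1}\bm{\eta}\right)$ (the paper's \eqref{eq:new_q1}), and the i.i.d.\ mean-zero moment computation $\mathbb{E}_{\bm{X}}\bm{S}^{T}\bm{\Sigma}^{-1}\bm{S}=\frac{1}{n}\mathbb{E}_{\bm{x}_{*}}\left[\left(y_{*}-\bm{x}_{*}^{T}\bm{\Sigma}^{-1}\bm{\eta}\right)^{2}\bm{x}_{*}^{T}\bm{\Sigma}^{-1}\bm{x}_{*}\right]$ (the paper's \eqref{eq:newp_2A}); your test-error derivation is the paper's essentially verbatim. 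Where you genuinely diverge is the training error: the paper exploits idempotency of the hat matrix, $\bm{H}^{T}\bm{H}=\bm{H}$, to reduce the residual sum of squares to $\frac{1}{n}\bm{y}^{T}\bm{y}-\hat{\bm{\eta}}^{T}\hat{\bm{\Sigma}}^{-1}\hat{\bm{\eta}}$ (see \eqref{eq:newp_3}) and then expands $\hat{\bm{\eta}}^{T}\hat{\bm{\Sigma}}^{-1}\hat{\bm{\eta}}$ around $\bm{\eta}^{T}\bm{\Sigma}^{-1}\bm{\eta}$, whereas you expand the residuals directly around the population residuals, exposing the transparent cancellation $-2\bm{S}^{T}\bm{\Sigma}^{-1}\bm{S}+\bm{S}^{T}\bm{\Sigma}^{-1}\bm{S}=-\bm{S}^{T}\bm{\Sigma}^{-1}\bm{S}$ that flips the sign of the $1/n$ correction. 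Your variant is more symmetric with the test-error computation and, because it never invokes the projection structure of least squares, it transfers essentially unchanged to regularized estimators; the paper must redo this half of the argument from scratch for ridge regression (Theorem \ref{thm:main theorem-ridge-2}, Appendix \ref{sec:Proof-of-Theorem ridge}), where the non-vanishing cross terms make the bookkeeping considerably heavier. Your closing caveat is also well taken and applies equally to the paper: Assumption A1 gives only rates in probability, so replacing $\hat{\bm{\Sigma}}$ by $\bm{\Sigma}$ and discarding higher-order terms \emph{inside} $\mathbb{E}_{\bm{X}}$ requires the extra moment or uniform-integrability control you describe; the paper silently commits this same abuse by equating expectations to expressions carrying $O_{p}\left(n^{-3/2}\right)$ remainders, so your added conditions are what would make either write-up fully rigorous.
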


\begin{proof}
See Appendix \ref{sec:New-Proof}; 
\end{proof}
We will investigate next set of results about the scenario when the
model is a perfect fit of the signal, the signal-dependent term vanishes. 
\begin{cor}
\label{cor:Under-the-positivity}Under the same assumptions of Theorem
\ref{thm:main theorem}, the term 
\[
\mathbb{E}_{\bm{X}}\left[\left\Vert y_{*}-\bm{x}_{*}^{T}\bm{\Sigma}^{-1}\bm{\eta}\right\Vert _{2}^{2}\left(\bm{x}_{*}^{T}\bm{\Sigma}^{-1}\bm{x}_{*}\right)\right]
\]
in \eqref{eq:general_opt_expression} attains zero if and only if
the function $\bm{\mu}(\bm{x})=\bm{x}^{T}\bm{\beta}$ for some $\bm{\beta}\in\mathbb{R}^{d+1}$. 
\end{cor}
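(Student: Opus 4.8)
The plan is to reduce the vanishing of the term to an almost-sure statement about the best linear predictor. Since the integrand involves only the population quantities $\bm{\Sigma},\bm{\eta}$ and the test pair $(\bm{x}_{*},y_{*})$, it does not depend on the training design $\bm{X}$, so the outer $\mathbb{E}_{\bm{X}}$ in \eqref{eq:general_opt_expression} is inert and the object of study is $\mathbb{E}[(y_{*}-\bm{x}_{*}^{T}\bm{b})^{2}\,(\bm{x}_{*}^{T}\bm{\Sigma}^{-1}\bm{x}_{*})]$, where I set $\bm{b}:=\bm{\Sigma}^{-1}\bm{\eta}$. First I would identify $\bm{b}$ as the population least-squares coefficient: from $\bm{\eta}=\mathbb{E}[\bm{x}_{*}y_{*}]$ and $\bm{\Sigma}=\mathbb{E}[\bm{x}_{*}\bm{x}_{*}^{T}]$, the identity $\bm{\Sigma}\bm{b}=\bm{\eta}$ is the population normal equation, which yields the orthogonality relation $\mathbb{E}[\bm{x}_{*}(y_{*}-\bm{x}_{*}^{T}\bm{b})]=\bm{0}$. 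Writing $y_{*}=\mu(\bm{x}_{*})+\epsilon_{*}$ with $\mathbb{E}[\epsilon_{*}\mid\bm{x}_{*}]=0$ and noting that the weight $\bm{x}_{*}^{T}\bm{\Sigma}^{-1}\bm{x}_{*}$ is a function of $\bm{x}_{*}$ alone, I would condition on $\bm{x}_{*}$, so the cross term drops and the quantity splits as $\mathbb{E}[(\mu(\bm{x}_{*})-\bm{x}_{*}^{T}\bm{b})^{2}\,\bm{x}_{*}^{T}\bm{\Sigma}^{-1}\bm{x}_{*}]+\sigma_{\epsilon}^{2}\,\mathbb{E}[\bm{x}_{*}^{T}\bm{\Sigma}^{-1}\bm{x}_{*}]$, the first being the signal (bias) part and the second the noise part. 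Consistent with the signal/noise separation recorded after \eqref{eq:def opt}, the ``attains zero'' assertion is about the signal-dependent part.

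Next I would invoke positivity. Because $\bm{\Sigma}^{-1}$ exists and is positive definite under the hypotheses of Theorem \ref{thm:main theorem}, the weight $\bm{x}_{*}^{T}\bm{\Sigma}^{-1}\bm{x}_{*}$ is strictly positive for every $\bm{x}_{*}\neq\bm{0}$; in the intercept convention $\bm{x}_{*}$ has a coordinate fixed to $1$ and hence is never the zero vector, so the weight is almost surely strictly positive. The signal part is therefore the integral of the nonnegative integrand $(\mu(\bm{x}_{*})-\bm{x}_{*}^{T}\bm{b})^{2}$ against a strictly positive weight, so it vanishes if and only if $\mu(\bm{x}_{*})=\bm{x}_{*}^{T}\bm{b}$ almost surely with respect to the law of $\bm{x}_{*}$.

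It remains to match this almost-sure linearity with the stated condition. For necessity, $\mu(\bm{x})=\bm{x}^{T}\bm{b}$ almost surely is exactly the claim that $\mu$ is linear, with $\bm{\beta}=\bm{b}\in\mathbb{R}^{d+1}$. For sufficiency, if $\mu(\bm{x})=\bm{x}^{T}\bm{\beta}$ for some $\bm{\beta}$, then $\bm{\eta}=\mathbb{E}[\bm{x}_{*}(\bm{x}_{*}^{T}\bm{\beta}+\epsilon_{*})]=\bm{\Sigma}\bm{\beta}$ using $\mathbb{E}[\bm{x}_{*}\epsilon_{*}]=\bm{0}$, so $\bm{b}=\bm{\Sigma}^{-1}\bm{\eta}=\bm{\beta}$ and the bias $\mu(\bm{x}_{*})-\bm{x}_{*}^{T}\bm{b}$ is identically zero, whence the signal part is zero.

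I expect the only real subtlety to be the bookkeeping around the noise: since $\sigma_{\epsilon}^{2}\,\mathbb{E}[\bm{x}_{*}^{T}\bm{\Sigma}^{-1}\bm{x}_{*}]>0$ whenever $\sigma_{\epsilon}^{2}>0$, the full term cannot vanish for any $\mu$, so the corollary must be read as a statement about the isolated signal-dependent component. Making this reduction explicit, together with the strict positivity of $\bm{x}_{*}^{T}\bm{\Sigma}^{-1}\bm{x}_{*}$ and the orthogonality that removes the cross term, is the conceptual heart of the argument; the remaining manipulations are routine.
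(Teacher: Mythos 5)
Your proof is correct and takes essentially the same route as the paper's own: positive definiteness of $\bm{\Sigma}$ makes the weight $\bm{x}_{*}^{T}\bm{\Sigma}^{-1}\bm{x}_{*}$ strictly positive, so the weighted squared bias vanishes if and only if $\mu(\bm{x}_{*})=\bm{x}_{*}^{T}\bm{\Sigma}^{-1}\bm{\eta}$ almost surely, identifying $\bm{\beta}=\bm{\Sigma}^{-1}\bm{\eta}$. Your additional bookkeeping --- the conditioning decomposition that isolates the noise contribution $\sigma_{\epsilon}^{2}\,\mathbb{E}\left[\bm{x}_{*}^{T}\bm{\Sigma}^{-1}\bm{x}_{*}\right]$ (hence the observation that the literal term cannot vanish when $\sigma_{\epsilon}^{2}>0$), and the normal-equation argument for the converse direction --- simply makes rigorous the signal-part reading that the paper's terse proof takes for granted.
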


\begin{proof}
From the \eqref{eq:general_opt_expression}, a non-negative random
variable $\left\Vert \bm{\Sigma}^{-1/2}\bm{x}_{*}\right\Vert _{2}^{2}>0$
unless $\bm{x}_{*}=\bm{0}$ due to the positive definiteness of the
$\bm{\Sigma}$. Therefore $\bm{x}_{*}^{T}\bm{\Sigma}^{-1}\bm{\eta}-\bm{\mu}(\bm{x}_{*})\equiv\bm{0}\Leftrightarrow\bm{\mu}(\bm{x}_{*})=\bm{x}_{*}^{T}\bm{\Sigma}^{-1}\bm{\eta}$
which makes $\bm{\mu}$ a linear function in $\bm{x}_{*}$ with coefficient
$\bm{\beta}=\bm{\Sigma}^{-1}\bm{\eta}$. And this also makes the second
term to be zero. 
\end{proof}
\begin{cor}
\label{cor:When-reducing-to-classical}When $\bm{X}_{i},\bm{x}_{*i}\sim N(\bm{0},\bm{\Sigma})$
and $y(\bm{x})=m(\bm{x})+\bm{\epsilon}$ for an additive independent
noise $\bm{\epsilon}\sim N(0,\sigma_{\epsilon}^{2})$ with $\sigma_{\epsilon}^{2}>0$,
we can yield formula \eqref{eq:general_opt_expression} and write
the expected scaled optimism as 
\begin{align}
\frac{n\mathbb{E}_{\bm{X}}\text{Opt }R_{\bm{X}}}{2\sigma_{\epsilon}^{2}}\sim & \frac{1}{\sigma_{\epsilon}^{2}}\mathbb{E}_{\bm{x}_{*}}\left[\left\Vert m(\bm{x}_{*})-\bm{x}_{*}^{T}\bm{\Sigma}^{-1}\bm{\eta}\right\Vert _{2}^{2}\left\Vert \bm{\Sigma}^{-1/2}\bm{x}_{*}\right\Vert _{2}^{2}\right]+d+O_{p}\left(\frac{1}{n^{1/2}}\right)\label{eq:Gaussian_opt_expression_Stein}
\end{align}
\end{cor}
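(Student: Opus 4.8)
The plan is to specialize the asymptotic identity \eqref{eq:general_opt_expression} of Theorem \ref{thm:main theorem} to the Gaussian additive-noise design; most of the work is an elementary moment computation once that identity is in hand. As a preliminary I would check that Assumption A1 holds in this setting: both $\hat{\bm{\Sigma}}=\frac{1}{n}\sum_{i}\bm{x}_{i}\bm{x}_{i}^{T}$ and $\hat{\bm{\eta}}=\frac{1}{n}\sum_{i}\bm{x}_{i}y_{i}$ are averages of i.i.d. terms with finite second moments under the Gaussian design (finiteness of $\mathbb{E}[\|\bm{x}_{i}\|^{2}y_{i}^{2}]$ follows from the Gaussian tails together with integrability of $m$), so in fixed dimension $d$ the central limit theorem gives $\|\hat{\bm{\Sigma}}-\bm{\Sigma}\|_{2}=O_{p}(n^{-1/2})$ and $\|\hat{\bm{\eta}}-\bm{\eta}\|_{2}=O_{p}(n^{-1/2})$. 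Hence \eqref{eq:general_opt_expression} applies, and since its leading term $\mathbb{E}_{\bm{x}_{*}}\|y_{*}-\bm{x}_{*}^{T}\bm{\Sigma}^{-1}\bm{\eta}\|_{2}^{2}(\bm{x}_{*}^{T}\bm{\Sigma}^{-1}\bm{x}_{*})$ is a deterministic population quantity, the outer $\mathbb{E}_{\bm{X}}$ acts only on the remainder.

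Next I would substitute $y_{*}=m(\bm{x}_{*})+\epsilon_{*}$ with $\epsilon_{*}\perp\bm{x}_{*}$, $\mathbb{E}\epsilon_{*}=0$, $\mathbb{E}\epsilon_{*}^{2}=\sigma_{\epsilon}^{2}$, and abbreviate the best-linear-predictor residual as $r(\bm{x}_{*})\coloneqq m(\bm{x}_{*})-\bm{x}_{*}^{T}\bm{\Sigma}^{-1}\bm{\eta}$. Expanding the square gives $\|y_{*}-\bm{x}_{*}^{T}\bm{\Sigma}^{-1}\bm{\eta}\|_{2}^{2}=r(\bm{x}_{*})^{2}+2r(\bm{x}_{*})\epsilon_{*}+\epsilon_{*}^{2}$; multiplying by the nonnegative weight $\bm{x}_{*}^{T}\bm{\Sigma}^{-1}\bm{x}_{*}=\|\bm{\Sigma}^{-1/2}\bm{x}_{*}\|_{2}^{2}$ and taking expectations, the cross term drops out because $\epsilon_{*}$ is mean-zero and independent of $\bm{x}_{*}$. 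This leaves a signal contribution $\mathbb{E}_{\bm{x}_{*}}[r(\bm{x}_{*})^{2}\|\bm{\Sigma}^{-1/2}\bm{x}_{*}\|_{2}^{2}]$ and a noise contribution $\sigma_{\epsilon}^{2}\,\mathbb{E}_{\bm{x}_{*}}[\|\bm{\Sigma}^{-1/2}\bm{x}_{*}\|_{2}^{2}]$.

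The third step evaluates the noise contribution, which is where the Gaussian design produces the classical degree-of-freedom count. Since $\bm{\Sigma}^{-1/2}\bm{x}_{*}\sim N(\bm{0},\bm{I}_{d})$, the quantity $\|\bm{\Sigma}^{-1/2}\bm{x}_{*}\|_{2}^{2}$ is $\chi_{d}^{2}$-distributed with mean $d$; equivalently $\mathbb{E}[\bm{x}_{*}^{T}\bm{\Sigma}^{-1}\bm{x}_{*}]=\text{tr}(\bm{\Sigma}^{-1}\bm{\Sigma})=d$. Plugging the two contributions back into \eqref{eq:general_opt_expression} and multiplying through by $n/(2\sigma_{\epsilon}^{2})$ gives the stated formula, with the remainder $O_{p}(n^{-3/2})$ amplified by the factor $n/(2\sigma_{\epsilon}^{2})$ to the claimed $O_{p}(n^{-1/2})$.

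I do not expect a genuine obstacle here; the only points needing care are bookkeeping. I would confirm that the mixed moment $\mathbb{E}_{\bm{x}_{*}}[r(\bm{x}_{*})^{2}\|\bm{\Sigma}^{-1/2}\bm{x}_{*}\|_{2}^{2}]$ is finite, which the Gaussian tails of $\bm{x}_{*}$ guarantee under mild growth control on $m$, and I would flag that the additive, homoskedastic, $\bm{x}_{*}$-independent noise is exactly what kills the cross term and yields the clean additive $+d$: with heteroskedastic noise $\sigma_{\epsilon}^{2}(\bm{x}_{*})$ an extra term $\mathbb{E}_{\bm{x}_{*}}[\sigma_{\epsilon}^{2}(\bm{x}_{*})\|\bm{\Sigma}^{-1/2}\bm{x}_{*}\|_{2}^{2}]$ would survive in place of $\sigma_{\epsilon}^{2}d$. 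Tracking the error term through the rescaling is immediate, and the vanishing of the signal term exactly when $m$ is linear reproduces Corollary \ref{cor:Under-the-positivity}.
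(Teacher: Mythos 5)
Your proposal is correct and takes essentially the same route as the paper's own proof: plug $y_{*}=m(\bm{x}_{*})+\epsilon$ into \eqref{eq:general_opt_expression}, expand the square, use independence and $\mathbb{E}\epsilon=0$ to kill the cross term, and rescale by $n/(2\sigma_{\epsilon}^{2})$ so the $O_{p}(n^{-3/2})$ remainder becomes $O_{p}(n^{-1/2})$. Your bookkeeping is in fact cleaner than the paper's: you correctly attribute the additive $d$ to $\mathbb{E}_{\bm{x}_{*}}\left\Vert \bm{\Sigma}^{-1/2}\bm{x}_{*}\right\Vert _{2}^{2}=\text{trace}\left(\bm{\Sigma}^{-1}\bm{\Sigma}\right)=d$ (the mean of a $\chi_{d}^{2}$ weight), matching the computation in the proof of Theorem \ref{thm:main theorem}, whereas the paper's appendix loosely credits it to ``$\left\Vert \bm{\epsilon}\right\Vert _{2}^{2}$ being chi-square with $d$ degrees of freedom'' even though the noise at a single test point is scalar.
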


\begin{proof}
See Appendix \ref{sec:Proof-of-Corollary-reducing}. 
\end{proof}
\begin{rem}
\label{rem:If-Stein}If more generally $\bm{X}_{i},\bm{x}_{*i}\sim N(\bm{\mu},\bm{\Sigma})$,
we can yield multivariate Stein's lemma to simplify the term $\bm{\Sigma}^{-1}\bm{\eta}=\left[\mathbb{E}_{\bm{X}}\left(\bm{X}^{T}\bm{X}\right)\right]^{-1}\left[\mathbb{E}_{\bm{X}}\bm{X}\bm{y}\right]$
in \eqref{eq:Gaussian_opt_expression_Stein} via when the $m$ is
continuously differentiable. Assuming this, we observe that 
\begin{align*}
\bm{\Sigma}^{-1}\bm{\eta} & =\left[\mathbb{E}_{\bm{X}}\left(\bm{X}^{T}\bm{X}\right)\right]^{-1}\cdot\left[\mathbb{E}_{\bm{X}}\bm{X}\left(m(\bm{X})+\bm{\epsilon}\right)\right]\\
 & =\left[\mathbb{E}(\bm{X}\bm{X}^{T})\right]{}^{-1}\mathbb{E}\left[\bm{X}m(\bm{X})\right].
\end{align*}
Then we can derive that $\mathbb{E}\bm{X}m(\bm{X})=\mathbb{E}(\bm{X}-\bm{\mu})m(X)+\bm{\mu}\mathbb{E}m(\bm{X})=\bm{\Sigma}\mathbb{E}[\nabla m(\bm{X})]+\bm{\mu}\mathbb{E}m(\bm{X})$.
By Woodbury lemma, $(\bm{\Sigma}+\bm{\mu}\bm{\mu}^{T})^{-1}=\bm{\Sigma}^{-1}-\bm{\Sigma}^{-1}\bm{\mu}\bm{\mu}^{T}\bm{\Sigma}^{-1}/(1+\bm{\mu}^{T}\bm{\Sigma}^{-1}\bm{\mu})$,
we have 
\begin{align}
 & \left[\mathbb{E}(\bm{X}\bm{X}^{T})\right]{}^{-1}\mathbb{E}\left[\bm{X}m(\bm{X})\right]\\
 & =\left(\bm{I}-\left(1+\bm{\mu}^{T}\bm{\Sigma}^{-1}\bm{\mu}\right)^{-1}\bm{\Sigma}^{-1}\bm{\mu}\bm{\mu}^{T}\right)\mathbb{E}[\nabla m(\bm{X})]+\left(\bm{\Sigma}+\bm{\mu}\bm{\mu}^{T}\right)^{-1}\bm{\mu}\mathbb{E}(m(\bm{X})).
\end{align}
\end{rem}

For 1-dimensional linear regression we have a formula for the scaled
optimism in \eqref{eq:general_opt_expression}: 
\begin{cor}
\label{cor:When-1d-normal-1}When $\bm{x}_{*}\sim N(0,1)$ and $\bm{X}\sim N(0,1)$
we have a special form of \eqref{eq:general_opt_expression} using
an independent standard normal random variable $Z$: 
\end{cor}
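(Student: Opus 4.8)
The plan is to read off the one-dimensional formula as a direct specialization of Corollary~\ref{cor:When-reducing-to-classical} (equivalently of the general expression \eqref{eq:general_opt_expression}) at $d=1$ with scalar covariance $\bm{\Sigma}=1$, the common variance of the $N(0,1)$ law shared by $\bm{x}_*$ and the rows of $\bm{X}$. First I would record the simplifications forced by the scalar design: since $\bm{\Sigma}^{-1/2}=1$, the weight $\left\Vert\bm{\Sigma}^{-1/2}\bm{x}_*\right\Vert_2^2$ in \eqref{eq:Gaussian_opt_expression_Stein} reduces to the squared magnitude of the scalar $\bm{x}_*$, and likewise $\bm{x}_*^T\bm{\Sigma}^{-1}\bm{x}_*$ in \eqref{eq:general_opt_expression} reduces to the same quantity. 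Because $\bm{x}_*\sim N(0,1)$ is itself standard normal, I would set $Z:=\bm{x}_*$, so that $Z$ is precisely the independent standard normal random variable named in the statement and every surviving expectation becomes a one-dimensional Gaussian integral in $Z$, with both weights equal to $Z^2$.

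Next I would pin down the population least-squares slope $\beta=\bm{\Sigma}^{-1}\bm{\eta}$. In one dimension $\bm{\Sigma}^{-1}\bm{\eta}=\eta=\mathbb{E}[\bm{x}_*y_*]=\mathbb{E}[Zm(Z)]$, the noise contribution dropping out because $\epsilon$ is mean-zero and independent of the design. Applying the scalar Stein identity $\mathbb{E}[Zm(Z)]=\mathbb{E}[m'(Z)]$ for $Z\sim N(0,1)$ --- valid whenever $m$ is continuously differentiable with at most polynomial growth --- I would rewrite $\eta=\mathbb{E}[m'(Z)]$, which is exactly the $\bm{\mu}=0$ case of the reduction already noted in Remark~\ref{rem:If-Stein}.

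Substituting these reductions into \eqref{eq:Gaussian_opt_expression_Stein}, the signal-dependent factor becomes $\mathbb{E}_Z\!\left[(m(Z)-Z\eta)^2Z^2\right]$, the dimension constant $d$ becomes $1$, and the remainder stays $O_p(n^{-1/2})$, yielding
\begin{align*}
\frac{n\,\mathbb{E}_{\bm{X}}\text{Opt }R_{\bm{X}}}{2\sigma_\epsilon^2}\sim\frac{1}{\sigma_\epsilon^2}\,\mathbb{E}_Z\!\left[\left(m(Z)-Z\,\mathbb{E}[m'(Z)]\right)^2Z^2\right]+1+O_p\!\left(\frac{1}{\sqrt{n}}\right).
\end{align*}
The argument is essentially a chain of substitutions inherited from the already-established general theorem, so the one step deserving genuine care is the Stein reduction: I must verify that $m$ is smooth enough (continuously differentiable, tails controlled) for the integration by parts to carry no boundary contribution, and check that freezing the general covariance at the scalar value $1$ does not disturb the $O_p(n^{-1/2})$ control inherited from Corollary~\ref{cor:When-reducing-to-classical}. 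Optionally one could expand $(m(Z)-Z\eta)^2Z^2$ and use the Gaussian moments $\mathbb{E}[Z^2]=1$ and $\mathbb{E}[Z^4]=3$ to express the leading term through moments of $m(Z)$ alone, but that refinement is purely cosmetic.
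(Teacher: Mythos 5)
Your core route is the same as the paper's: specialize the general expression \eqref{eq:general_opt_expression} (equivalently \eqref{eq:Gaussian_opt_expression_Stein} with $d=1$, $\bm{\Sigma}=1$), identify $Z=\bm{x}_{*}$, let the independent mean-zero noise contribute the additive constant $d=1$, and keep the signal term $\mathbb{E}_{Z}\left[(\mu(Z)-Z\eta)^{2}Z^{2}\right]$ with $\eta=\mathbb{E}Z\mu(Z)$; this is exactly what the paper's proof does. Two caveats are worth recording. First, your Stein substitution $\eta=\mathbb{E}[m'(Z)]$ is an unnecessary detour that the paper deliberately avoids in this corollary (Stein's lemma is invoked only in Remark \ref{rem:If-Stein}, for the nonzero-mean case with smooth $m$): it buys nothing here, it moves your final display away from the form \eqref{eq:main eq-1-1} actually being proved (which is stated in terms of $\mathbb{E}Z\mu(Z)$, $\mathbb{E}Z^{2}\mu(Z)^{2}$ and $\mathbb{E}Z^{3}\mu(Z)$), and it imposes a differentiability hypothesis absent from the statement and violated in the classical sense by the ReLU-type signals \eqref{eq:test fun} to which this corollary is later applied. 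Second, the expansion you dismiss as ``purely cosmetic'' is in fact the remaining content of the corollary: expanding $(\mu(Z)-Z\eta)^{2}Z^{2}$ and using $\mathbb{E}Z^{4}=3$ is precisely what produces the coefficient $3$ in front of $\left(\mathbb{E}Z\mu(Z)\right)^{2}$ and the cross term $-2\,\mathbb{E}Z^{3}\mu(Z)\cdot\mathbb{E}Z\mu(Z)$ in \eqref{eq:main eq-1-1}, so it must be carried out (as the paper does) to reach the stated identity. Dropping the Stein step and performing that expansion turns your outline into the paper's proof essentially verbatim.
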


\begin{align}
\frac{n\mathbb{E}_{\bm{X}}\text{Opt }R_{\bm{X}}}{2\sigma_{\epsilon}^{2}} & \overset{Z\sim N(0,1)}{\asymp}\frac{3\left(\mathbb{E}Z\mu(Z)\right)^{2}+\mathbb{E}Z^{2}\mu(Z)^{2}-2\mathbb{E}Z^{3}\mu(Z)\cdot\mathbb{E}Z\mu(Z)}{\sigma_{\epsilon}^{2}}+1+O_{p}\left(\frac{1}{n^{1/2}}\right).\label{eq:main eq-1-1}
\end{align}

\begin{proof}
See Appendix \ref{sec:Proof-of-Corollary}. 
\end{proof}
We can further write down the complexity measure when the actual signal
$\mu(x)$ is of the form $\sum_{i=0}^{\infty}A_{i}x^{i}$: 
\begin{cor}
\label{cor:analytic_signal to linear model-1}Under the same assumption
of Corollary \ref{cor:When-1d-normal-1}, when the signal $\mu(x)$
is of the form $\sum_{i=0}^{\infty}A_{i}x^{i}$, we have 
\begin{align}
\mathbb{E}_{\bm{X}}\frac{n}{2\sigma_{\epsilon}^{2}}\cdot\text{Opt }R_{\bm{X}} & \asymp\frac{1}{2\sigma_{\epsilon}^{2}}\cdot\left(F(A_{i},i\neq1)\right)+1+o(1),\label{eq:main eq-1}
\end{align}
which means that the signal part is a function that does not depend
on $A_{1}$, the linear part of the signal. 
\end{cor}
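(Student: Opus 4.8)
The plan is to substitute the power-series form $\mu(x)=\sum_{i=0}^{\infty}A_{i}x^{i}$ directly into the closed expression from Corollary~\ref{cor:When-1d-normal-1} and to track exactly how the linear coefficient $A_{1}$ enters. I would first split the signal as $\mu(Z)=A_{1}Z+r(Z)$, where $r(Z):=\sum_{i\neq1}A_{i}Z^{i}$ collects the constant term and all the nonlinear monomials. The whole argument rests on the elementary Gaussian moment identities for $Z\sim N(0,1)$, namely $\mathbb{E}Z^{2}=1$, $\mathbb{E}Z^{4}=3$, and $\mathbb{E}Z^{k}=0$ for odd $k$; these particular values are what force the cancellation.

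Next I would rewrite each of the three moments appearing in Corollary~\ref{cor:When-1d-normal-1} in terms of $A_{1}$ and the reduced moments $a_{r}:=\mathbb{E}Zr(Z)$, $b_{r}:=\mathbb{E}Z^{2}r(Z)^{2}$, and $c_{r}:=\mathbb{E}Z^{3}r(Z)$, each of which by construction involves only the coefficients $A_{i}$ with $i\neq1$. Using $\mathbb{E}Z^{2}=1$ and $\mathbb{E}Z^{4}=3$ gives
\begin{align*}
\mathbb{E}Z\mu(Z) &=A_{1}+a_{r},\\
\mathbb{E}Z^{3}\mu(Z) &=3A_{1}+c_{r},\\
\mathbb{E}Z^{2}\mu(Z)^{2} &=3A_{1}^{2}+2A_{1}c_{r}+b_{r}.
\end{align*}
Substituting into the signal numerator $S:=3(\mathbb{E}Z\mu)^{2}+\mathbb{E}Z^{2}\mu^{2}-2(\mathbb{E}Z^{3}\mu)(\mathbb{E}Z\mu)$ and collecting by powers of $A_{1}$, the coefficient of $A_{1}^{2}$ is $3+3-6=0$, that of $A_{1}a_{r}$ is $6-6=0$, and that of $A_{1}c_{r}$ is $2-2=0$, so every term containing $A_{1}$ disappears and $S=3a_{r}^{2}+b_{r}-2a_{r}c_{r}$. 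Identifying $F(A_{i},i\neq1)$ with this expression (up to the constant factor needed to reconcile the two normalizations of the displays) and carrying the $O_{p}(n^{-1/2})$ remainder of Corollary~\ref{cor:When-1d-normal-1} into the $o(1)$ term yields the claim.

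The computation itself is short, so there is no real analytic difficulty; the one point I would take care to articulate is \emph{why} the cancellation occurs, rather than presenting it as a numerical coincidence. Keeping the moments symbolic, the three vanishing coefficients are exactly $2-2\mathbb{E}Z^{2}$, $6\mathbb{E}Z^{2}-2\mathbb{E}Z^{4}$, and $3(\mathbb{E}Z^{2})^{2}+\mathbb{E}Z^{4}-2\mathbb{E}Z^{2}\mathbb{E}Z^{4}$, all of which vanish precisely because the standard normal has unit variance and fourth moment three (kurtosis $3$); for a design with a different fourth moment the $A_{1}$ dependence would persist. Finally I would note, by the parity of the Gaussian moments, that $a_{r}$ and $c_{r}$ involve only the odd nonlinear coefficients $A_{3},A_{5},\dots$, while $b_{r}$ is a quadratic form in all $A_{i}$ with $i\neq1$, confirming that $F$ is genuinely a function of the nonlinear part of the signal alone.
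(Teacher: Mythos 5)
Your proposal is correct and is essentially the paper's own argument: both substitute the power series into the formula of Corollary \ref{cor:When-1d-normal-1}, isolate the $A_{1}$ contributions via the Gaussian moments $\mathbb{E}Z^{2}=1$ and $\mathbb{E}Z^{4}=3$, and verify that the coefficients of $A_{1}^{2}$, $A_{1}a_{r}$, and $A_{1}c_{r}$ all cancel, leaving a function of the $A_{i}$, $i\neq1$, alone (the paper then additionally massages $F$ with Stein's lemma, which is simplification beyond what the statement requires). Your bookkeeping via $\mu(Z)=A_{1}Z+r(Z)$ and the three reduced moments $a_{r},b_{r},c_{r}$ is tidier than the paper's explicit double sums, and your symbolic remark that the cancellation hinges precisely on the unit variance and fourth moment $3$ of the design is a worthwhile observation the paper does not make explicit.
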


This result further confirms that the linear model only removes the
linear part (when there is an explicit linear part in the signal)
from the signal (as shown in Example \ref{exa:(Polynomial-signal)-When-1}
in Appendix \ref{sec:Computational-Examples-for}). When there is
not an explicit expression for the linear part in the signal, this
is less obvious (as shown in Example \ref{exa:Dirac signal} in Appendix
\ref{sec:Computational-Examples-for}). In the above corollaries \ref{cor:When-reducing-to-classical},
\ref{cor:When-1d-normal-1} and \ref{cor:analytic_signal to linear model-1},
we can observe that if we take the signal-independent part in \eqref{eq:general_opt_expression},
its scaled version coincide with the classical model degree of freedom.
In \citet{luan_predictive_2021}, they suggested that the signal-independent
part can be used as generic predictive complexity measure for a wide
class of models.

\subsection{More Theoretical Results}

A variant of the result in Theorem \ref{thm:main theorem} can be
elicited when we consider Eckhart-Young theorem in the context of
the low-rank regressions, where the input is projected onto a low-dimensional
space through projections \citep{ju2020overfitting,luo2024spherical}.
When computing the covariance matrix, it is a common practice to use
low-rank approximation to attain model sparsity or to reduce the cost
of repeated matrix inversions \citep{luo2022sparse,luo2024hybrid}.
Precisely, we use a rank-$k$ approximation $\bm{\Sigma}_{k}$ to
the matrix $\bm{\Sigma}=\mathbb{E}(\bm{x}_{*}\bm{x}_{*}^{T})$ in
prediction. The following theorem ensures that such a low-rank approximation
will not increase optimism that exceeds a perturbation bound \eqref{eq:general_opt_expression-1}. 
\begin{thm}
\label{thm:main theorem-low-rank} Under Assumption A1 and suppose
that $\bm{\Sigma}_{k}$ is a rank-$k$ approximation to the $\bm{\Sigma}$,
we can write down the expected random optimism for the rank-$k$ least
squares estimator is 
\begin{align}
 & \mathbb{E}_{\bm{X}}\text{Opt }R_{\bm{X}}\label{eq:general_opt_expression-1}\\
 & \leq\frac{2}{n}\mathbb{E}_{\bm{X}}\left[\left(\mathbb{E}_{\bm{x}_{*}}\left\Vert y_{*}-\bm{x}_{*}^{T}\left[\bm{\Sigma}_{k}^{-1}+\sigma_{k+1}^{-1}\bm{I}\right]\bm{\eta}\right\Vert _{2}^{2}\right)\cdot\left(\bm{x}_{*}^{T}\left(\bm{\Sigma}_{k}^{-1}+\sigma_{k+1}^{-1}\bm{I}\right)\bm{x}_{*}\right)\right]+O_{p}\left(\frac{1}{n^{3/2}}\right)\nonumber 
\end{align}
where $\sigma_{k+1}$ is the $(k+1)$-th largest singular value of
$\bm{\Sigma}$. 
\end{thm}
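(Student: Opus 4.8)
The plan is to obtain Theorem \ref{thm:main theorem-low-rank} by re-running the expansion behind Theorem \ref{thm:main theorem} with the full inverse $\bm{\Sigma}^{-1}$ replaced by the rank-$k$ Moore--Penrose pseudoinverse $\bm{\Sigma}_{k}^{+}$, and then dominating $\bm{\Sigma}_{k}^{+}$ by the regularized operator $\bm{M}:=\bm{\Sigma}_{k}^{+}+\sigma_{k+1}^{-1}\bm{I}$ in the Loewner order. First I would fix the spectral picture through the Eckart--Young theorem: writing the eigendecomposition $\bm{\Sigma}=\sum_{j=1}^{d}\sigma_{j}\bm{u}_{j}\bm{u}_{j}^{T}$ with $\sigma_{1}\geq\cdots\geq\sigma_{d}>0$, the best rank-$k$ approximation is $\bm{\Sigma}_{k}=\sum_{j\leq k}\sigma_{j}\bm{u}_{j}\bm{u}_{j}^{T}$ with $\|\bm{\Sigma}-\bm{\Sigma}_{k}\|_{2}=\sigma_{k+1}$ and $\bm{\Sigma}_{k}^{+}=\sum_{j\leq k}\sigma_{j}^{-1}\bm{u}_{j}\bm{u}_{j}^{T}$. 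Since $\sigma_{k+1}^{-1}\bm{I}\succeq\bm{0}$, the domination $\bm{\Sigma}_{k}^{+}\preceq\bm{M}$ is immediate, and it is exactly this inequality---with the Eckart--Young cutoff $\sigma_{k+1}$ supplying the regularization scale---that drives the bound. (Note that $\bm{M}$ does \emph{not} dominate $\bm{\Sigma}^{-1}$, since the discarded directions $j>k$ are capped at $\sigma_{k+1}^{-1}<\sigma_{j}^{-1}$; the comparison is genuinely against the rank-$k$ operator, not the full one.)

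The second step transfers this to the optimism formula. The rank-$k$ estimator sets $\hat{\bm{\beta}}_{k}=\hat{\bm{\Sigma}}_{k}^{+}\hat{\bm{\eta}}$, and repeating the $O_{p}(n^{-3/2})$ expansion of Theorem \ref{thm:main theorem} verbatim---Assumption A1 still controls the sample--population fluctuations, only the deterministic limiting operator changes from $\bm{\Sigma}^{-1}$ to $\bm{\Sigma}_{k}^{+}$---yields the leading term $\frac{2}{n}\mathbb{E}_{\bm{X}}\mathbb{E}_{\bm{x}_{*}}[\,\|y_{*}-\bm{x}_{*}^{T}\bm{\Sigma}_{k}^{+}\bm{\eta}\|_{2}^{2}\,(\bm{x}_{*}^{T}\bm{\Sigma}_{k}^{+}\bm{x}_{*})\,]$, the quantity to be bounded. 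Reading the right-hand side of \eqref{eq:general_opt_expression-1} as a single $\mathbb{E}_{\bm{x}_{*}}$ over the product (matching the structure of Theorem \ref{thm:main theorem}), the quadratic-form weight is then handled at once: by Loewner monotonicity of $\bm{x}\mapsto\bm{x}^{T}\bm{A}\bm{x}$ together with $\bm{\Sigma}_{k}^{+}\preceq\bm{M}$ we get $\bm{x}_{*}^{T}\bm{\Sigma}_{k}^{+}\bm{x}_{*}\leq\bm{x}_{*}^{T}\bm{M}\bm{x}_{*}$ for every $\bm{x}_{*}$, and since this weight multiplies a nonnegative squared residual the inequality passes through $\mathbb{E}_{\bm{x}_{*}}$ unchanged.

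The hard part will be the squared-residual factor, because passing from the coefficient $\bm{\Sigma}_{k}^{+}\bm{\eta}$ to $\bm{M}\bm{\eta}$ is \emph{not} monotone in $\bm{x}_{*}$: the identity $y_{*}-\bm{x}_{*}^{T}\bm{\Sigma}_{k}^{+}\bm{\eta}=(y_{*}-\bm{x}_{*}^{T}\bm{M}\bm{\eta})+\sigma_{k+1}^{-1}\bm{x}_{*}^{T}\bm{\eta}$ introduces a cross term of indefinite sign, and neither coefficient is the population risk minimizer $\bm{\Sigma}^{-1}\bm{\eta}$, so a pointwise comparison of residuals fails. I would instead control this perturbation by the same spectral gap $\sigma_{k+1}$: expand the residual through the displayed identity, bound the cross term by a Cauchy--Schwarz split against the $\bm{M}$-weighted residual, and reabsorb the remaining $\sigma_{k+1}^{-2}(\bm{x}_{*}^{T}\bm{\eta})^{2}$ contribution into the $\bm{M}$-quadratic form using $\bm{\Sigma}_{k}^{+}\preceq\bm{M}$ once more. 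This absorption is the crux, and is precisely where the choice $\sigma_{k+1}^{-1}\bm{I}$ (rather than an arbitrary ridge) is essential, since the discarded singular directions are capped at exactly this level. Combining the weight bound with this residual bound under $\mathbb{E}_{\bm{X}}\mathbb{E}_{\bm{x}_{*}}$, and carrying along the inherited $O_{p}(n^{-3/2})$ remainder, delivers the stated perturbation inequality \eqref{eq:general_opt_expression-1}.
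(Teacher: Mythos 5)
There is a genuine gap, and it sits exactly at the step you label ``verbatim.'' The expansion in the proof of Theorem \ref{thm:main theorem} (Appendix \ref{sec:New-Proof}) works only because the cross term
\begin{equation*}
\mathbb{E}_{\bm{x}_{*}}\left(y_{*}-\bm{x}_{*}^{T}\bm{\Sigma}^{-1}\bm{\eta}\right)\bm{x}_{*}^{T}=\bm{\eta}^{T}-\bm{\eta}^{T}\bm{\Sigma}^{-1}\bm{\Sigma}=\bm{0},
\end{equation*}
i.e., the limiting coefficient $\bm{\Sigma}^{-1}\bm{\eta}$ satisfies the population normal equations. For your rank-$k$ coefficient $\bm{\Sigma}_{k}^{+}\bm{\eta}$ this orthogonality fails: $\bm{\eta}^{T}-\bm{\eta}^{T}\bm{\Sigma}_{k}^{+}\bm{\Sigma}=\bm{\eta}^{T}\left(\bm{I}-\bm{P}_{k}\right)\neq\bm{0}$ whenever $\bm{\eta}$ has a component outside the top-$k$ eigenspace, so both the testing- and training-error expansions pick up cross terms of size $O_{p}\left(1/\sqrt{n}\right)$ (this non-zero row vector times $\bm{\Sigma}_{k}^{+}\bm{\eta}-\hat{\bm{\Sigma}}_{k}^{+}\hat{\bm{\eta}}=O_{p}\left(1/\sqrt{n}\right)$), which dominate the $O(1/n)$ leading term you claim and do not obviously cancel in the train--test difference. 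This is precisely the complication the paper has to fight in the ridge case (Theorem \ref{thm:main theorem-ridge-2}, Appendix \ref{sec:Proof-of-Theorem ridge}), where $\bm{\eta}^{T}-\bm{\eta}^{T}\bm{\Sigma}_{\lambda}^{-1}\bm{\Sigma}\neq0$ forces a lengthy separate argument; the pseudo-inverse coefficient is in the same boat, and you would additionally need a perturbation statement of the form $\Vert\hat{\bm{\Sigma}}_{k}^{+}-\bm{\Sigma}_{k}^{+}\Vert_{2}=O_{p}\left(1/\sqrt{n}\right)$, which does not follow from Assumption A1 alone (truncation of a random matrix needs an eigengap $\sigma_{k}>\sigma_{k+1}$, e.g., via Davis--Kahan). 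Finally, your absorption step cannot deliver the stated bound as written: Cauchy--Schwarz on the indefinite cross term gives $\left\Vert y_{*}-\bm{x}_{*}^{T}\bm{\Sigma}_{k}^{+}\bm{\eta}\right\Vert _{2}^{2}\leq2\left\Vert y_{*}-\bm{x}_{*}^{T}\bm{M}\bm{\eta}\right\Vert _{2}^{2}+2\sigma_{k+1}^{-2}\left(\bm{x}_{*}^{T}\bm{\eta}\right)^{2}$, and the leftover $\left(\bm{x}_{*}^{T}\bm{\eta}\right)^{2}$ term lives in the residual factor, not the weight factor, so ``reabsorbing it into the $\bm{M}$-quadratic form'' via $\bm{\Sigma}_{k}^{+}\preceq\bm{M}$ is not an available move; at best you obtain the right-hand side of \eqref{eq:general_opt_expression-1} inflated by constant factors.

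For contrast, the paper's proof (Appendix \ref{sec:Proof-of-Theoremlowrank}) never analyzes the sampling behavior of a rank-$k$ estimator at all, which is how it sidesteps the cross-term problem. It takes the already-established identity \eqref{eq:general_opt_expression} for the full least-squares optimism, inserts $\pm\bm{\Sigma}_{k}^{-1}$ inside both the residual factor and the quadratic-form factor (leaving the expression unchanged), and then dominates the correction terms by the operator-norm bounds $\left\Vert \bm{\Sigma}_{k}^{-1}-\bm{\Sigma}^{-1}\right\Vert _{2}\leq\sigma_{k+1}^{-1}$ coming from Eckart--Young; this is what produces the $\bm{\Sigma}_{k}^{-1}+\sigma_{k+1}^{-1}\bm{I}$ operators on the right-hand side, with no new asymptotic expansion and no Loewner comparison of estimators. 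If you insist on proving the statement for the literal rank-$k$ estimator (a reasonable reading of the theorem's wording, but not what the paper's proof does), you must first redo the ridge-type analysis with $\bm{\Sigma}_{\lambda}$ replaced by the truncated matrices --- tracking the non-vanishing cross terms and adding an eigengap assumption --- before any domination argument can start.
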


\begin{proof}
See Appendix \ref{sec:Proof-of-Theoremlowrank}. 
\end{proof}
In other words, the optimism is ``regularized by'' an amount $\sigma_{k+1}^{-1}\bm{I}$,
and we can choose the most appropriate rank $k$ based on the design
of $\bm{x}_{*}$. This form of covariance $\bm{\Sigma}_{k}^{-1}+\sigma_{k+1}^{-1}\bm{I}$
in \eqref{eq:general_opt_expression-1} inspired us to investigate
the related ridge linear regression model. Then, we state a variant
of Theorem \ref{thm:main theorem} also holds for ridge regression
and kernel ridge regressions under the following set of assumptions.

\textbf{Assumptions A2. } Let $\hat{\bm{\eta}}=\frac{1}{n}\bm{X}^{T}\bm{y}(\bm{X})=\frac{1}{n}\bm{X}^{T}\bm{y}$
and $\hat{\bm{\Sigma}}_{\lambda}=\frac{1}{n}\left(\bm{X}^{T}\bm{X}+\lambda\bm{I}\right)\in\mathbb{R}^{d\times d}$
for a fixed positive $\lambda$. We assume that 
\begin{equation}
\left\Vert \hat{\bm{\eta}}-\bm{\eta}\right\Vert _{2}=O_{p}\left(\frac{1}{\sqrt{n}}\right),\left\Vert \hat{\bm{\Sigma}}_{\lambda}-\bm{\Sigma}_{\lambda}\right\Vert _{2}=O_{p}\left(\frac{1}{\sqrt{n}}\right)\label{eq:assumption_A1-1-2}
\end{equation}
where $\bm{\eta}=\mathbb{E}_{\bm{x}_{*}}\bm{x}_{*}y(\bm{x}_{*})=\mathbb{E}_{\bm{x}_{*}}\bm{x}_{*}y_{*}$
and $\bm{\Sigma}_{\lambda}=\mathbb{E}_{\bm{x}_{*}}(\bm{x}_{*}\bm{x}_{*}^{T}+\lambda\bm{I})$.

By definitions of $\hat{\bm{\Sigma}}_{\lambda},\bm{\Sigma}_{\lambda}$,
Assumption A1 implies A2 for any $0\leq\lambda<\infty$. When $\lambda=0$,
this reduces to Theorem \ref{thm:main theorem}, hence can be considered
as a generalization to our main result. 
\begin{thm}
\label{thm:main theorem-ridge-2} Under Assumption A2, we can write
down the errors as

\begin{align*}
\mathbb{E}_{\bm{X}}\text{Err}R_{\bm{X}} & =\mathbb{E}_{\bm{x}_{*}}\left\Vert y_{*}-\bm{x}_{*}^{T}\hat{\bm{\beta}}\right\Vert _{2}^{2}\\
 & =\mathbb{E}_{\bm{x}_{*}}\left(y_{*}-\bm{x}_{*}^{T}\bm{\Sigma}_{\lambda}^{-1}\bm{\eta}\right)^{2}\\
 & +\frac{1}{n}\mathbb{E}_{\bm{x}_{*}}\left\Vert \bm{\Sigma}^{1/2}\bm{\Sigma}_{\lambda}^{-1}\left[\bm{x}_{*}y_{*}-\left(\bm{x}_{*}\bm{x}_{*}^{T}+\lambda\bm{I}\right)\bm{\Sigma}_{\lambda}^{-1}\bm{\eta}\right]\right\Vert _{2}^{2}+O_{p}\left(\frac{1}{n^{3/2}}\right).\\
\mathbb{E}_{\bm{X}}\text{Err}T_{\bm{X}} & =\frac{1}{n}\mathbb{E}_{\bm{X}}\left\Vert \bm{y}-\bm{X}\hat{\bm{\beta}}\right\Vert _{2}^{2}\\
 & =\mathbb{E}_{\bm{x}_{*}}\left(y_{*}-\bm{x}_{*}^{T}\bm{\Sigma}_{\lambda}^{-1}\bm{\eta}\right)^{2}\\
 & -\frac{1}{n}\mathbb{E}_{\bm{x}_{*}}\left\Vert \bm{\Sigma}^{-1/2}\left[\bm{x}_{*}y_{*}-\left(\bm{x}_{*}\bm{x}_{*}^{T}+\lambda\bm{I}\right)\bm{\Sigma}_{\lambda}^{-1}\bm{\eta}\right]\right\Vert _{2}^{2}+O_{p}\left(\frac{1}{n^{3/2}}\right).\\
\end{align*}
The expected random optimism for the least squares estimator is 
\begin{align}
\mathbb{E}_{\bm{X}}\text{Opt }R_{\bm{X}} & =\frac{1}{n}\mathbb{E}\left[\left(\bm{\Sigma}_{\lambda}^{-1}\bm{\Sigma}\bm{\Sigma}_{\lambda}^{-1}+\bm{\Sigma}_{\lambda}^{-1}\right)^{1/2}\left\Vert \left[\bm{x}_{*}y_{*}-\left(\bm{x}_{*}\bm{x}_{*}^{T}+\lambda\bm{I}\right)\bm{\Sigma}_{\lambda}^{-1}\bm{\eta}\right]\right\Vert _{2}^{2}\right]\nonumber \\
 & +O_{p}\left(\frac{1}{n^{3/2}}\right).\label{eq:easy-to-read-ridge-2}
\end{align}
\end{thm}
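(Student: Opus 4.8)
The plan is to follow the same stochastic-expansion (delta-method) template used for Theorem~\ref{thm:main theorem}, treating the ridge estimator $\hat{\bm{\beta}}=\hat{\bm{\Sigma}}_{\lambda}^{-1}\hat{\bm{\eta}}$ as a smooth function of the two empirical averages $\hat{\bm{\eta}}$ and $\hat{\bm{\Sigma}}_{\lambda}$ and expanding it around the population ridge minimizer $\bm{\beta}_{\lambda}:=\bm{\Sigma}_{\lambda}^{-1}\bm{\eta}$. Writing $\Delta\bm{\eta}=\hat{\bm{\eta}}-\bm{\eta}$ and $\Delta\bm{\Sigma}=\hat{\bm{\Sigma}}_{\lambda}-\bm{\Sigma}_{\lambda}$, Assumption~A2 gives $\|\Delta\bm{\eta}\|_{2},\|\Delta\bm{\Sigma}\|_{2}=O_{p}(n^{-1/2})$, so the Neumann expansion $\hat{\bm{\Sigma}}_{\lambda}^{-1}=\bm{\Sigma}_{\lambda}^{-1}-\bm{\Sigma}_{\lambda}^{-1}\Delta\bm{\Sigma}\,\bm{\Sigma}_{\lambda}^{-1}+O_{p}(n^{-1})$ is valid. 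First I would establish the influence-function representation
\[
\hat{\bm{\beta}}-\bm{\beta}_{\lambda}=\bm{\Sigma}_{\lambda}^{-1}\Big(\frac{1}{n}\sum_{i=1}^{n}\bm{v}_{i}\Big)+O_{p}(n^{-1}),\qquad \bm{v}_{i}=\bm{x}_{i}y_{i}-(\bm{x}_{i}\bm{x}_{i}^{T}+\lambda\bm{I})\bm{\beta}_{\lambda},
\]
where the per-sample scores $\bm{v}_{i}$ are i.i.d.\ with mean $\bm{\eta}-\bm{\Sigma}_{\lambda}\bm{\beta}_{\lambda}=\bm{0}$, and $\bm{v}=\bm{x}_{*}y_{*}-(\bm{x}_{*}\bm{x}_{*}^{T}+\lambda\bm{I})\bm{\Sigma}_{\lambda}^{-1}\bm{\eta}$ is exactly the vector appearing in the statement. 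This step is a direct substitution that uses $\hat{\bm{\Sigma}}_{\lambda}=\hat{\bm{\Sigma}}+\lambda\bm{I}$ and $\bm{\Sigma}_{\lambda}=\bm{\Sigma}+\lambda\bm{I}$, so the $\lambda\bm{I}$ terms cancel inside $\Delta\bm{\Sigma}$.

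For the testing error I would complete the square around $\bm{\beta}_{\lambda}$: conditional on the training data,
\[
\mathbb{E}_{\bm{x}_{*},y_{*}}(y_{*}-\bm{x}_{*}^{T}\hat{\bm{\beta}})^{2}=\mathbb{E}_{\bm{x}_{*}}(y_{*}-\bm{x}_{*}^{T}\bm{\beta}_{\lambda})^{2}-2\lambda\bm{\beta}_{\lambda}^{T}(\hat{\bm{\beta}}-\bm{\beta}_{\lambda})+(\hat{\bm{\beta}}-\bm{\beta}_{\lambda})^{T}\bm{\Sigma}(\hat{\bm{\beta}}-\bm{\beta}_{\lambda}).
\]
The crucial difference from the unregularized case is that the linear term no longer vanishes: the population normal equation becomes $\mathbb{E}_{\bm{x}_{*}}[\bm{x}_{*}(y_{*}-\bm{x}_{*}^{T}\bm{\beta}_{\lambda})]=\bm{\eta}-\bm{\Sigma}\bm{\beta}_{\lambda}=\lambda\bm{\beta}_{\lambda}\neq\bm{0}$. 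Substituting the influence representation and taking $\mathbb{E}_{\bm{X}}$ over the i.i.d.\ scores collapses the quadratic term to $\tfrac1n\mathbb{E}[\bm{v}^{T}\bm{\Sigma}_{\lambda}^{-1}\bm{\Sigma}\bm{\Sigma}_{\lambda}^{-1}\bm{v}]=\tfrac1n\mathbb{E}\|\bm{\Sigma}^{1/2}\bm{\Sigma}_{\lambda}^{-1}\bm{v}\|_{2}^{2}$, which is the claimed variance correction. For the training error I would Taylor-expand the residuals $y_{i}-\bm{x}_{i}^{T}\hat{\bm{\beta}}$ about $\bm{\beta}_{\lambda}$, producing a cross term $-2(\hat{\bm{\beta}}-\bm{\beta}_{\lambda})^{T}\big(\tfrac1n\sum_{i}(y_{i}-\bm{x}_{i}^{T}\bm{\beta}_{\lambda})\bm{x}_{i}\big)$ and a quadratic term $(\hat{\bm{\beta}}-\bm{\beta}_{\lambda})^{T}\hat{\bm{\Sigma}}(\hat{\bm{\beta}}-\bm{\beta}_{\lambda})$; here $\tfrac1n\sum_{i}(y_{i}-\bm{x}_{i}^{T}\bm{\beta}_{\lambda})\bm{x}_{i}=\tfrac1n\sum_{i}\bm{v}_{i}+\lambda\bm{\beta}_{\lambda}$ exposes the same empirical score average plus an \emph{identical} $\lambda\bm{\beta}_{\lambda}$ offset, and replacing $\hat{\bm{\Sigma}}$ by $\bm{\Sigma}$ (the discrepancy sits inside a quadratic form of two $O_{p}(n^{-1/2})$ factors, hence is $O_{p}(n^{-3/2})$) yields the stated training correction.

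The payoff is the optimism, obtained by subtracting the two expansions. The leading noise-free risks $\mathbb{E}_{\bm{x}_{*}}(y_{*}-\bm{x}_{*}^{T}\bm{\beta}_{\lambda})^{2}$ cancel, and the main obstacle is the two nonvanishing linear terms $-2\lambda\bm{\beta}_{\lambda}^{T}(\hat{\bm{\beta}}-\bm{\beta}_{\lambda})$: each contributes at order $O(n^{-1})$ through the second-order bias $\mathbb{E}_{\bm{X}}(\hat{\bm{\beta}}-\bm{\beta}_{\lambda})$, so neither is individually negligible, yet because they are \emph{identical} in the testing and training expansions they cancel exactly in the difference. What survives is the symmetric quadratic contribution, which collapses to \eqref{eq:easy-to-read-ridge-2}; setting $\lambda=0$ recovers $\tfrac2n\mathbb{E}[\bm{v}^{T}\bm{\Sigma}^{-1}\bm{v}]$ and hence Theorem~\ref{thm:main theorem}, a consistency check I would use throughout to detect bookkeeping errors in the $\lambda$-dependent matrices.

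The remaining work is to certify the $O_{p}(n^{-3/2})$ remainder using only the rates in A2: I would bound the discarded second-order Neumann term $\bm{\Sigma}_{\lambda}^{-1}\Delta\bm{\Sigma}\,\bm{\Sigma}_{\lambda}^{-1}\Delta\bm{\Sigma}\,\bm{\Sigma}_{\lambda}^{-1}\bm{\eta}$ and the fluctuation $\hat{\bm{\Sigma}}-\bm{\Sigma}$ inside the quadratic forms, controlling each by $\|\Delta\bm{\eta}\|_{2}$ and $\|\Delta\bm{\Sigma}\|_{2}$ together with boundedness of $\bm{\Sigma}_{\lambda}^{-1}$ (guaranteed for fixed $\lambda>0$). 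Finally, since every quantity is expressed through moments of $\bm{x}_{*}y_{*}$ and $\bm{x}_{*}\bm{x}_{*}^{T}$ and the operator $\bm{\Sigma}_{\lambda}$, the argument transfers verbatim to kernel ridge regression after replacing $\bm{x}$ by the feature map and $\bm{\Sigma}$ by the feature second-moment operator, which is how I would justify the asserted extension to kernel ridge models.
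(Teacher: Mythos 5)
Your route is in substance the same as the paper's (Appendix \ref{sec:Proof-of-Theorem ridge}): both expand the ridge estimator around the population coefficient $\bm{\beta}_{\lambda}=\bm{\Sigma}_{\lambda}^{-1}\bm{\eta}$ under the A2 rates, and your score vector $\bm{v}_{i}=\bm{x}_{i}y_{i}-(\bm{x}_{i}\bm{x}_{i}^{T}+\lambda\bm{I})\bm{\beta}_{\lambda}$ is exactly the bracketed vector appearing in the statement. Your intermediate steps are correct, and in places handled more cleanly than in the paper: the representation $\hat{\bm{\beta}}-\bm{\beta}_{\lambda}=\hat{\bm{\Sigma}}_{\lambda}^{-1}\bar{\bm{v}}$ (with $\bar{\bm{v}}=\frac{1}{n}\sum_{i}\bm{v}_{i}$) is in fact an exact identity, the population relation $\bm{\eta}-\bm{\Sigma}\bm{\beta}_{\lambda}=\lambda\bm{\beta}_{\lambda}$ correctly produces the non-vanishing linear terms, and you are right that these $O(1/n)$ bias terms are identical on the testing and training sides and cancel exactly in the optimism.

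The genuine gap is the final assembly, which you assert (``collapses to \eqref{eq:easy-to-read-ridge-2}'', ``yields the stated training correction'') rather than compute, and which in fact fails for $\lambda>0$. Carrying your own expansion to the end gives: testing correction $+\frac{1}{n}\mathbb{E}[\bm{v}^{T}\bm{\Sigma}_{\lambda}^{-1}\bm{\Sigma}\bm{\Sigma}_{\lambda}^{-1}\bm{v}]$; training correction $-\frac{2}{n}\mathbb{E}[\bm{v}^{T}\bm{\Sigma}_{\lambda}^{-1}\bm{v}]+\frac{1}{n}\mathbb{E}[\bm{v}^{T}\bm{\Sigma}_{\lambda}^{-1}\bm{\Sigma}\bm{\Sigma}_{\lambda}^{-1}\bm{v}]=-\frac{1}{n}\mathbb{E}[\bm{v}^{T}(\bm{\Sigma}^{-1}-\lambda^{2}\bm{\Sigma}^{-1}\bm{\Sigma}_{\lambda}^{-2})\bm{v}]$, which is \emph{not} the stated $-\frac{1}{n}\mathbb{E}\|\bm{\Sigma}^{-1/2}\bm{v}\|_{2}^{2}$; and hence optimism $=\frac{2}{n}\mathbb{E}[\bm{v}^{T}\bm{\Sigma}_{\lambda}^{-1}\bm{v}]$ --- the quadratic contributions cancel between test and train, and what survives is the training-side score cross term, not a ``symmetric quadratic contribution''. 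Since $\bm{\Sigma}_{\lambda}^{-1}\bm{\Sigma}\bm{\Sigma}_{\lambda}^{-1}+\bm{\Sigma}_{\lambda}^{-1}=2\bm{\Sigma}_{\lambda}^{-1}-\lambda\bm{\Sigma}_{\lambda}^{-2}$, this surviving term differs from \eqref{eq:easy-to-read-ridge-2} by $\frac{\lambda}{n}\mathbb{E}\|\bm{\Sigma}_{\lambda}^{-1}\bm{v}\|_{2}^{2}$, an order-$\lambda/n$ quantity that cannot be absorbed into $O_{p}(n^{-3/2})$ for fixed $\lambda>0$; your only verification device, the $\lambda=0$ consistency check, is blind to exactly this family of discrepancies because each carries a factor of $\lambda$. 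A concrete check: $d=1$, $\bm{\Sigma}=1$, $\lambda=1$, $y=x+\epsilon$, $\sigma_{\epsilon}^{2}=1$ gives $\mathbb{E}[\bm{v}^{2}]=3/2$; your completed expansion (and a direct brute-force computation) gives optimism $\frac{3}{2n}$, while \eqref{eq:easy-to-read-ridge-2} gives $\frac{9}{8n}$. The point where your argument must be reconciled with the paper's is precisely the paper's combined cross term, its term (3), which the paper discards as $O_{p}(n^{-3/2})$ but which your (correct) bookkeeping identifies as $2\mathbb{E}_{\bm{X}}[\bar{\bm{v}}^{T}\hat{\bm{\Sigma}}_{\lambda}^{-1}\bar{\bm{v}}]=\frac{2}{n}\mathbb{E}[\bm{v}^{T}\bm{\Sigma}_{\lambda}^{-1}\bm{v}]+O(n^{-3/2})$, a genuinely $\Theta(1/n)$ contribution. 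Until you perform the assembly explicitly and confront that mismatch, the proposal is not a proof of the displayed formulas.
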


\begin{proof}
See Appendix \ref{sec:Proof-of-Theorem ridge}. 
\end{proof}
\begin{rem}
Using Neumann series for $\left\Vert \bm{A}^{-1}\bm{B}\right\Vert _{2}<1$:
\begin{equation}
(\bm{A}+\bm{B})^{-1}=\bm{A}^{-1}-\bm{A}^{-1}\bm{B}\bm{A}^{-1}+\bm{A}^{-1}\bm{B}\bm{A}^{-1}\bm{B}\bm{A}^{-1}+\cdots\label{eq:Neumann}
\end{equation}
for $\bm{A}=\bm{\Sigma},\bm{B}=\bm{I}$, we can see that the effect
of low-rank approximation in linear models is connected to ridge linear
regression if we can find an $\lambda$ such that 
\begin{align*}
\left\Vert \bm{\Sigma}_{\lambda}^{-1}\right\Vert  & =\left\Vert \left(\bm{\Sigma}+\lambda\bm{I}\right)^{-1}\right\Vert \\
 & =\left\Vert \bm{\Sigma}^{-1}-\lambda\bm{\Sigma}^{-2}+\lambda^{2}\bm{\Sigma}^{-3}+\cdots\right\Vert \\
 & =\left\Vert \bm{\Sigma}_{k}^{-1}+\bm{\Sigma}^{-1}-\bm{\Sigma}_{k}^{-1}-\lambda\bm{\Sigma}^{-2}+\lambda^{2}\bm{\Sigma}^{-3}+\cdots\right\Vert \\
 & =\left\Vert \bm{\Sigma}_{k}^{-1}+\bm{\Sigma}^{-1}-\bm{\Sigma}_{k}^{-1}\right\Vert +O_{p}(\lambda\bm{\Sigma}^{-2})\\
 & \asymp\left\Vert \bm{\Sigma}_{k}^{-1}+\sigma_{k+1}^{-1}\bm{I}\right\Vert +O_{p}(\lambda\bm{\Sigma}^{-2})\text{ as in }\eqref{eq:general_opt_expression-1}.
\end{align*}
with our results in Theorems \ref{thm:main theorem-low-rank} and
\ref{thm:main theorem-ridge-2}. This means that when $\left\Vert \bm{\Sigma}^{-1}\right\Vert _{2}<1$,
with appropriate choices of $\lambda$'s, the ridge regression models
and low-rank approximated linear models can behave similarly in terms
of optimism (i.e., generalization errors).

Note that the $\lambda$ terms in \eqref{eq:easy-to-read-ridge-2}
depends on both signal $y_{*}$ and the model $\bm{\Sigma}_{\lambda}$,
which makes the signal-dependent and signal-independent parts no longer
separable as in \citet{luan_predictive_2021}. This motivates us to
consider optimism as a more general form of predictive complexity
that also applies to regularized models. In the case where $\lambda=0$,
the positivity of the optimism is ensured; but when regularization
is introduced, it is possible to obtain a negative optimism (See Appendix
\ref{sec:Proof-of-Theorem ridge} for detailed discussion of positivity
in line with Corollary \ref{cor:Under-the-positivity}). 
\end{rem}

It is clear that when $\lambda=0$, \eqref{eq:easy-to-read-ridge-2}
reduces to \eqref{eq:general_opt_expression}. When $\lambda\rightarrow\infty$,
the fitted model will be a constant model, hence produce the same
$\mathbb{E}_{\bm{X}}\text{Err}T_{\bm{X}}$ and $\mathbb{E}_{\bm{X}}\text{Err}R_{\bm{X}}$
and zero $\mathbb{E}_{\bm{X}}\text{Opt}R{}_{\bm{X}}$. To establish
at what rate $\mathbb{E}_{\bm{X}}\text{Opt}R{}_{\bm{X}}$ converges
to zero, we first note that $\bm{\Sigma}_{\lambda}^{-1}=\left(\bm{\Sigma}+\lambda\bm{I}\right)^{-1}=\lambda^{-1}\bm{I}-\lambda^{-2}\bm{\Sigma}^{1/2}\left(\lambda^{-1}\bm{\Sigma}+\bm{I}\right)^{-1}\bm{\Sigma}^{1/2}$
by Woodbury lemma. So 
\begin{align*}
\bm{\Sigma}_{\lambda}^{-1}\bm{\Sigma}\bm{\Sigma}_{\lambda}^{-1}+\bm{\Sigma}_{\lambda}^{-1} & =\left(\lambda^{-1}\bm{\Sigma}-\lambda^{-2}\bm{\Sigma}^{1/2}\left(\lambda^{-1}\bm{\Sigma}+\bm{I}\right)^{-1}\bm{\Sigma}^{3/2}\right)\bm{\Sigma}_{\lambda}^{-1}\\
 & =\lambda^{-1}\bm{\Sigma}\bm{\Sigma}_{\lambda}^{-1}-\lambda^{-2}\bm{\Sigma}^{1/2}\left(\lambda^{-1}\bm{\Sigma}+\bm{I}\right)^{-1}\bm{\Sigma}^{3/2}\bm{\Sigma}_{\lambda}^{-1}
\end{align*}
Using this expansion 
\begin{align*}
\lim_{\lambda\rightarrow\infty}\bm{\Sigma}_{\lambda} & =O(\lambda\bm{I}),\\
\lim_{\lambda\rightarrow\infty}\bm{\Sigma}_{\lambda}^{-1} & =\lambda^{-1}\bm{I}+O(\lambda^{-2}\bm{I}).
\end{align*}
Then using these two limits we analyze terms in \eqref{eq:easy-to-read-ridge-2},
we obtain that the optimism \eqref{eq:easy-to-read-ridge-2} converges
to $0$ at a rate $O\left(\lambda^{-1}\right)$.

When $\lambda=0$, we are fitting a linear model and can observe the
same trend (zero for $k>0.5$, non-zero for $k\leq0.5$) (See Figures
\ref{fig:Noise_N_MCMC_grid} and \ref{fig:Different-models-comparison}
for more details). When $\lambda\rightarrow\infty$, we are fitting
a horizontal straight line model and $k=0.5$ is the only correctly
fitted model with zero optimism. The interesting phenomenon is when
$\lambda\approx1000$, where the difference in signals (different
$k$'s) is highlighted in the optimism calculation. To describe this
generalization in the kernel ridge regression setting, we consider
feature mapping $\phi:\mathbb{R}^{d}\rightarrow\mathbb{R}^{q}$, and
$\bm{\Phi}=\left(\phi(\bm{x}_{1})^{T},\cdots,\phi(\bm{x}_{n})^{T}\right)\in\mathbb{R}^{n\times q}$
consisting of row feature vectors $\phi(\bm{x}_{i})\in\mathbb{R}^{q\times1}$.
We consider the following regression problem as a special case of
\eqref{eq:train_lossL2}: 
\begin{equation}
\hat{\mu}=\arg\min_{f\in\mathcal{H}_{K}}\frac{1}{n}\sum_{i=1}^{n}\left(y_{i}-f(\bm{x}_{i})\right)^{2}+\lambda\|f\|_{K}^{2},
\end{equation}
where we take the loss function $\ell$ as $\|\cdot\|_{2}$ and $\mathcal{F}_{n}=\mathcal{H}_{K}$
as the reproducing Hilbert kernel space \citep{aronszajn1950theory}
and its norm $\|\cdot\|_{K}$ induced by (the inner product) kernel
function $K:\mathbb{R}^{d}\times\mathbb{R}^{d}\rightarrow\mathbb{R}$.
Its solution is given by: 
\begin{align}
\hat{\mu}(\bm{x}_{*}) & =\phi(\bm{x}_{*})^{T}\underset{\in\mathbb{R}^{q\times q}}{\underbrace{\left(\bm{\Phi}^{T}\bm{\Phi}+\lambda\bm{I}\right)^{-1}}}\bm{\Phi}^{T}\bm{y},\nonumber \\
 & =\phi(\bm{x}_{*})^{T}\bm{\Phi}^{T}\underset{\in\mathbb{R}^{n\times n}}{\underbrace{\left(\bm{\Phi}\bm{\Phi}^{T}+\lambda\bm{I}\right)^{-1}}}\bm{y},\nonumber \\
 & =K(\bm{x}_{*},\bm{X})\left(K(\bm{X},\bm{X})+\lambda\bm{I}\right)^{-1}\bm{y},\label{eq:kernel ridge estimator}
\end{align}
where $\bm{\Phi}=\left(\phi(\bm{x}_{1})^{T},\cdots,\phi(\bm{x}_{n})^{T}\right)\in\mathbb{R}^{n\times q}$
consisting of row feature vectors $\phi(\bm{x}_{i})\in\mathbb{R}^{q\times1}$
via feature mapping $\phi:\mathbb{R}^{d}\rightarrow\mathbb{R}^{q}$,
$K(\bm{X},\bm{X})=\left\llbracket K(\bm{x}_{i},\bm{x}_{j})\right\rrbracket _{i,j=1}^{n}=\bm{\Phi}^{T}\bm{\Phi}\in\mathbb{R}^{q\times q}$
is the Gram matrix of the kernel $K:\mathbb{R}^{d}\times\mathbb{R}^{d}\rightarrow\mathbb{R}$,
$K(\bm{x}_{*},\bm{X})$ is the $1\times n$ kernelized vector $\left(K(\bm{x}_{*},\bm{x}_{1}),\cdots,K(\bm{x}_{*},\bm{x}_{n})\right)$
and $\lambda$ is the regularization parameter. The following assumption
holds if the feature mapping $\phi$ is Lipschitz bounded and Assumption
A2 holds.

\textbf{Assumptions A3. }Let $\hat{\bm{\eta}}_{\phi}=\frac{1}{n}\bm{\Phi}^{T}\bm{y}(\bm{X})=\frac{1}{n}\bm{\Phi}^{T}\bm{y}$
and $\hat{\bm{\Sigma}}_{\phi,\lambda}=\frac{1}{n}\left(\bm{\Phi}^{T}\bm{\Phi}+\lambda\bm{I}\right)\in\mathbb{R}^{q\times q}$
for a fixed positive $\lambda$. We assume that 
\begin{equation}
\left\Vert \hat{\bm{\eta}}_{\phi}-\bm{\eta}_{\phi}\right\Vert _{2}=O_{p}\left(\frac{1}{\sqrt{n}}\right),\left\Vert \hat{\bm{\Sigma}}_{\phi,\lambda}-\bm{\Sigma}_{\phi,\lambda}\right\Vert _{2}=O_{p}\left(\frac{1}{\sqrt{n}}\right)\label{eq:assumption_A1-1-1-1}
\end{equation}
where $\bm{\eta}_{\phi}=\mathbb{E}_{\bm{x}_{*}}\phi(\bm{x}_{*})y(\bm{x}_{*})=\mathbb{E}_{\bm{x}_{*}}\phi(\bm{x}_{*})y_{*}\in\mathbb{R}^{q\times1}$
and $\bm{\Sigma}_{\phi,\lambda}=\mathbb{E}(\phi(\bm{x}_{*})\phi(\bm{x}_{*})^{T}+\lambda\bm{I})\in\mathbb{R}^{q\times q},\bm{\Sigma}_{\phi}=\bm{\Sigma}_{\phi,0}$.

Under this assumption, the following result can be derived using identical
arguments as Theorem \ref{thm:main theorem-ridge-2} with $\bm{x}_{*}$
replaced with $\phi(\bm{x}_{*})$. 
\begin{thm}
\label{thm:main-thm-kernel}Under Assumption A3, the expected random
optimism for the least squares kernel ridge estimator \eqref{eq:kernel ridge estimator}
defined by the kernel $K(\cdot,\cdot)=\phi(\cdot)^{T}\phi(\cdot)$,
is 
\begin{align}
\mathbb{E}_{\bm{X}}\text{Opt }R_{\bm{X}} & =\frac{2}{n}\mathbb{E}_{\bm{X}}\left[\left\Vert \left(\bm{\Sigma}_{\phi}^{1/2}\bm{\Sigma}_{\phi,\lambda}^{-1}\right)\left[\phi(\bm{x}_{*})y_{*}-\left(\phi(\bm{x}_{*})\phi(\bm{x}_{*}){}^{T}+\lambda\bm{I}\right)\bm{\Sigma}_{\phi,\lambda}^{-1}\bm{\eta}_{\phi}\right]\right\Vert _{2}^{2}\right]\label{eq:general_opt_expression-2-1-1}\\
 & +O_{p}\left(\frac{1}{n^{3/2}}\right)\label{eq:easy-to-read-ridge-1-1}
\end{align}
\end{thm}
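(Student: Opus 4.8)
The plan is to obtain Theorem \ref{thm:main-thm-kernel} as a direct corollary of Theorem \ref{thm:main theorem-ridge-2} through the feature-space reparametrization $\bm{x}_{*}\mapsto\phi(\bm{x}_{*})$. The essential observation is that the kernel ridge estimator \eqref{eq:kernel ridge estimator} is, by its primal representation, nothing but the ridge regression estimator of Theorem \ref{thm:main theorem-ridge-2} written in the lifted coordinates $\phi(\bm{x}_{i})\in\mathbb{R}^{q}$. Indeed, writing $\bm{\Phi}^{T}\bm{\Phi}+\lambda\bm{I}=n\hat{\bm{\Sigma}}_{\phi,\lambda}$ and $\bm{\Phi}^{T}\bm{y}=n\hat{\bm{\eta}}_{\phi}$, the primal form of the fitted mean becomes $\hat{\mu}(\bm{x}_{*})=\phi(\bm{x}_{*})^{T}\hat{\bm{\Sigma}}_{\phi,\lambda}^{-1}\hat{\bm{\eta}}_{\phi}$, which is formally identical to the ridge fit $\bm{x}_{*}^{T}\hat{\bm{\Sigma}}_{\lambda}^{-1}\hat{\bm{\eta}}$ after replacing every occurrence of the raw predictor $\bm{x}_{*}$, the population second moment $\bm{\Sigma}$, its regularized version $\bm{\Sigma}_{\lambda}$, and the cross moment $\bm{\eta}$ with their feature-space analogues $\phi(\bm{x}_{*})$, $\bm{\Sigma}_{\phi}$, $\bm{\Sigma}_{\phi,\lambda}$, and $\bm{\eta}_{\phi}$. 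I would make this dictionary explicit at the outset, since everything else is bookkeeping once it is in place.

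With the dictionary fixed, I would next verify that Assumption A3 is precisely the image of Assumption A2 under the same substitution: $\hat{\bm{\eta}}_{\phi}$ and $\hat{\bm{\Sigma}}_{\phi,\lambda}$ concentrate around $\bm{\eta}_{\phi}$ and $\bm{\Sigma}_{\phi,\lambda}$ at the rate $O_{p}(n^{-1/2})$, which is exactly the hypothesis needed to run the perturbation expansion in the proof of Theorem \ref{thm:main theorem-ridge-2}. Because that argument uses only (i) the algebraic form $\hat{\bm{\beta}}=\hat{\bm{\Sigma}}_{\lambda}^{-1}\hat{\bm{\eta}}$ of the estimator and (ii) the two $O_{p}(n^{-1/2})$ spectral bounds, the entire chain of error decompositions for $\mathbb{E}_{\bm{X}}\text{Err}R_{\bm{X}}$ and $\mathbb{E}_{\bm{X}}\text{Err}T_{\bm{X}}$, together with the cancellation of the common leading term and the surviving $O(1/n)$ contribution, transfers verbatim. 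Taking the difference of the two feature-space expansions then yields \eqref{eq:general_opt_expression-2-1-1} with remainder $O_{p}(n^{-3/2})$, the prefactor reflecting the combination of the testing-error inflation and training-error deflation terms exactly as in the $\lambda=0$ version of the main theorem.

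The step that genuinely requires work, rather than transcription, is confirming that Assumption A3 holds under the stated sufficient condition that $\phi$ is Lipschitz bounded and Assumption A2 is in force. Here I would argue that Lipschitz boundedness of $\phi$ propagates the moment control of $\bm{x}_{*}$ to the lifted features, so that $\hat{\bm{\eta}}_{\phi}=\frac{1}{n}\sum_{i}\phi(\bm{x}_{i})y_{i}$ and the empirical feature covariance $\frac{1}{n}\sum_{i}\phi(\bm{x}_{i})\phi(\bm{x}_{i})^{T}$ are averages of i.i.d. terms with finite variance; a componentwise central limit theorem then delivers the $O_{p}(n^{-1/2})$ rates in spectral norm, and the deterministic shift $\lambda\bm{I}$ contributes no fluctuation. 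The delicate point worth flagging, and what I expect to be the main obstacle to a fully rigorous statement, is the role of the feature dimension $q$: the substitution is cleanest when $q<\infty$, so that $\hat{\bm{\Sigma}}_{\phi,\lambda}^{-1}$ is uniformly bounded and the Neumann expansion of $\hat{\bm{\Sigma}}_{\phi,\lambda}^{-1}$ around $\bm{\Sigma}_{\phi,\lambda}^{-1}$ converges. For a genuinely infinite-dimensional RKHS one would additionally need the relevant operators to be trace class and $\lambda>0$ to keep $\bm{\Sigma}_{\phi,\lambda}^{-1}$ bounded; it is precisely this boundedness that the Lipschitz-bounded hypothesis on $\phi$ is designed to supply, and I would present it as the sole nontrivial ingredient beyond the mechanical reduction.
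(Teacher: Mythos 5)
Your proposal is correct and follows essentially the same route as the paper: the paper's entire proof is the observation that the arguments of Theorem \ref{thm:main theorem-ridge-2} go through verbatim with $\bm{x}_{*}$ replaced by $\phi(\bm{x}_{*})$ (and $\bm{\Sigma},\bm{\Sigma}_{\lambda},\bm{\eta}$ by $\bm{\Sigma}_{\phi},\bm{\Sigma}_{\phi,\lambda},\bm{\eta}_{\phi}$), which is exactly your substitution dictionary together with the check that Assumption A3 is the image of Assumption A2 under that dictionary. Your additional discussion of when A3 actually holds (Lipschitz-bounded $\phi$, finite versus infinite $q$) is a sensible flag but is not part of the theorem's proof, since A3 is taken as a hypothesis.
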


\begin{proof}
This can be derived using identical arguments as in Appendix \ref{sec:Proof-of-Theorem ridge}
for Theorem \ref{thm:main theorem-ridge-2} with $\bm{x}_{*}$ replaced
with $\phi(\bm{x}_{*})$. 
\end{proof}
\begin{rem}
This result does not only apply to kernel ridge regressions (KRR)
\citep{hastie2009elements}, but also applicable to the posterior
mean estimator of a Gaussian process regression \citep{kanagawa2018gaussian,kimeldorf1970correspondence}.
This result in optimism also applies to GP regression with nugget
$\lambda$, hence available to us when we need optimism for model
selection as shown in \citet{luo2024hybrid} and kernel selection
as shown in \citet{allerbo2022bandwidth}. 
\end{rem}

Using NTK in KRR establishes a bridge between NN training and kernel
methods. The neural tangent kernel (NTK) provides a linearized framework
where the network output is approximated by a fixed kernel function.
The assumption of small weight changes ensures this equivalence and
validates the NTK's role as a linear approximation of NNs during training.
Consider a two-layer fully connected NN \citep{jacot2018neural,arora2019fine,geifman2020similarity}
with $m$ ReLU activation functions in the hidden layer, its functional
form is: 
\[
g(\bm{x};W,a)=\frac{1}{\sqrt{m}}\sum_{j=1}^{m}a_{j}\sigma(w_{j}^{T}\bm{x}),
\]
where: $W=\left(w_{1},\dots,w_{m}\right)\in\mathbb{R}^{d\times m}$
and $w_{j}\in\mathbb{R}^{d\times1}$are the bottom-layer weights,
$\bm{a}=(a_{1},\dots,a_{m})^{T}\in\mathbb{R}^{m}$ are the top-layer
weights and $\sigma(z)=\max\{z,0\}$ is the ReLU activation function.
During training, the bottom-layer weights $W$ are updated using gradient
descent (c.f., Section 2 and Proposition 1 of \citet{jacot2018neural}).
Let the change in weights be denoted by $\Delta W$, which is assumed
to be small. In this regime, the network output can be linearized
as: 
\[
g(x;W_{0}+\Delta W,\bm{a})\approx g(x;W_{0},\bm{a})+\nabla_{W}g(x;W_{0},\bm{a})\cdot\text{vec}(\Delta W),
\]
where $W_{0}$ is the initialization of weights, $\text{vec}(\Delta W)$
is the vectorization of the weight updates. The neural tangent kernel
$\Theta:\mathbb{R}^{d}\times\mathbb{R}^{d}\rightarrow\mathbb{R}$
is then defined through the mapping $\phi(\bm{x})=\nabla_{W}g(\bm{x};W_{0},\bm{a})$
as: 
\begin{equation}
\Theta(\bm{x},\bm{x}')=\nabla_{W}g(\bm{x};W_{0},\bm{a})^{T}\nabla_{W}g(\bm{x}';W_{0},\bm{a}),\label{eq:NTK_kernel}
\end{equation}
with the same architecture as in Algorithm \ref{alg:3-layer-neural-network}.
This kernel takes gradient only with respect to the bottom layer weights
$W$ but has parameters $W,\bm{a}$ and can be fitted as a kernel
regression model as detailed in Algorithm \ref{alg:Simulation-algorithm-NTK}.

Then, we can use the optimism to delineate the difference between
linear models and NN under the setup \citep{arora2019fine}, NTK acts
as a kernel that transforms the input space into a feature space where
regression is linear and regularized.

\subsection{Simulation Results\label{subsec:Simulation-Results}}

In this section, we show that if we use optimism as a model complexity
measure, the NN may have a very low complexity measure value because
the NN usually generalize well even when trained on one set but tested
on another.

In the subsequent simulation experiments, we set the $N=100$ and
$n_{\text{train}}=n_{\text{test}}=1000$ unless otherwise is stated.
We consider the following signal function $f_{k}$ parameterized by
$k\in[0,1]$ on the domain $[-1,1]\subset\mathbb{R}^{2}$ with additive
i.i.d. noise $\epsilon\sim N(0,\sigma^{2})$, i.e., $y(x)=f_{k}(x)+\epsilon$.
\begin{align}
f_{k}(x) & =\begin{cases}
\frac{0.5-k}{0.5}\max LU(x,0)=\frac{0.5-k}{0.5}\max\left(0,x\right) & k<0.5\\
\frac{k-0.5}{0.5}(-x) & k\geq0.5
\end{cases}\label{eq:test fun}
\end{align}

\begin{figure}
\centering

\includegraphics[clip,width=0.95\textwidth,viewport=0bp 0bp 1072bp 215.973bp]{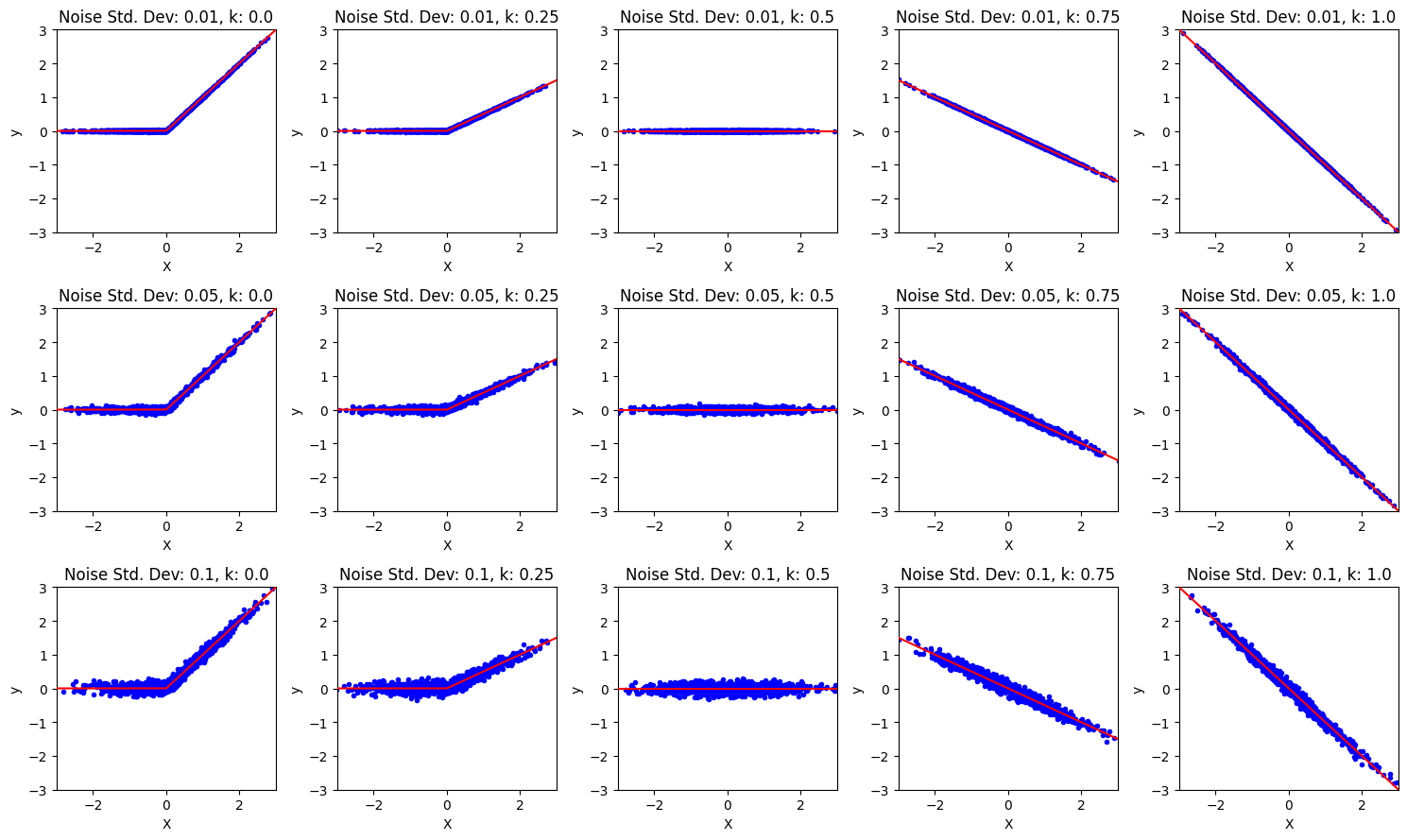}

\caption{\label{fig:Testing-signals-functions}Testing signals functions $f_{k}$
in \eqref{eq:test fun} with white noise variance $0.1$. The red
solid line indicates the true signal, the blue dots are sample points
with noise. \protect \protect}
\end{figure}

To empirically verify our results, we study the signal function \eqref{eq:test fun}
fitted to the following linear (and ridge regression with $\lambda=0.01,0.1$),
bended and 3-layer NN models with a specified number of hidden nodes
(as expressed in Algorithm \ref{alg:3-layer-neural-network} and Figure
\ref{fig:Opt_vs_epoch}). For the linear and bended (a.k.a., ReLU)
models, they assume explicit forms as:

\begin{align}
\mu(x)=\alpha x+\beta, & \text{(linear)}\label{eq:linear_assumption}\\
\mu(x)=\alpha+\beta\cdot\max(x,0), & \text{(bended)}\label{eq:bended_assumption}
\end{align}
where the optimization problem associated with models \eqref{eq:linear_assumption}
and \eqref{eq:bended_assumption} are both convex from a direct verification.

The 3-layer NN we consider can be described by its fitting procedure
in Algorithm \ref{alg:3-layer-neural-network} (See Appendix \ref{sec:Related-Algorithms-for})
has 50 hidden nodes and ReLU activation function as an architecture
choice. The choice of activation functions affects the weighting scheme
between layers and has certain degree of influence in the resulting
fit \citep{bishop1995neural}. However, we observe that when we increase
the number of nodes in the only hidden layer from 2 to 20, then the
mis-specification effect seems to become milder. The optimism of a
3-layer NN with 50 hidden nodes is close to correctly specified models
while the 3-layer NN with 2 hidden nodes shows random behavior. Its
corresponding NTK kernel regression model, however, behaves more like
simple linear and bended models.

The optimism can be computed using an MC Algorithm \ref{alg:Simulation-algorithm-for}
(See Appendix \ref{sec:Related-Algorithms-for} and \ref{sec:enough MC size})
where both the signal (varying with parameter $k$) and the noise
variance can change. We want to investigate how the scaled expected
optimism (divided by the known noise variance $\overline{\text{Opt}}=\frac{\text{Opt }}{2\cdot\sigma^{2}}\cdot n_{\text{train}}$)
and the raw expected optimism (simply $\text{Opt}$ in \eqref{eq:general_opt_expression})
changes when the noise variance changes. We fix the $k$ in the signal
function, resulting in different signals whose shapes are shown in
Figure \ref{fig:Testing-signals-functions}. The scaled expected optimism
of a KRR with NTK kernel, however, is not similar to the \ref{fig:Different-models-comparison-1}.
This empirical findings show that the expected optimism can tell kernel
models apart from the NN in practice.

For the effect of different noise variances (on different panels of
scaled expected optimism shown in Figure \ref{fig:Different-models-comparison}),
we observe the magnitude of generalization errors changes. Most importantly,
the relative magnitude of scaled optimism changes as the noise variance
increases, even if scaled by the noise variance. The NN has an increasing
optimism when the noise variance increases compared to linear models
(and ridge, kernel models). Increasing optimism indicates a worse
generalization ability as the additive noise in the signal increase
(i.e., signal-to-noise ratio decreases)

As for the effect of different values of $k$ (on the scale/magnitude
of scaled expected optimism shown in Figure \ref{fig:Different-models-comparison}),
this would depend on the specific form of the signal function and
how it interacts with the $\bm{x}$ variables in the model. If the
signal function does not accurately capture the true relationship
between the variables for certain values of $k$, then the model would
be mis-specified for those values of $k$, which explains the trends
for linear (correctly specified only when $k=1.0$) and bended (correctly
specified only when $k=0.0$) models in Figure \ref{fig:Different-models-comparison}.

When $k\geq0.5$, linear and NN converge to a correctly specified
linear model within its model family and result in relatively small
generalization errors. When $k=0$, bended and NN converge to a correctly
specified bended model within its model family and result in relatively
small generalization errors. When $k<0.5$, while parametric models
(i.e., linear bended) both converge to constant white noise and result
in nearly 0 generalization error, NN seem to be more sensitive to
the amount of noise. This echoes the empirical fact that the NN rarely
exhibits mis-specification due to its high flexibility. For kernel
regression with NTK kernel, its optimism is lower than mis-specified
linear and bended models, but never attain low optimism as good as
correctly specified models nor NN (except for $k=0.5$), arguably
presenting robustness against mis-spcifications.

Ridge models with $\lambda=0.1,0.01$ are among the worst models,
especially when the noise variance is low. In Figure \ref{fig:Different-models-comparison-1},
we can observe that when the model is mis-specified the regularization
only deteriorates the generalization. The kernel regression with NTK
exhibits different behavior in terms of expected optimism, compared
to NN. This comparison strengthens our theoretical results and support
the findings that the NN is different from simple kernelization.

Using our Theorem \ref{thm:main theorem}, we plug in the expression
\eqref{eq:test fun} of $f_{k}$ into \eqref{eq:general_opt_expression}
to compute the closed form of (scaled) optimism for linear model \eqref{eq:linear_assumption},
and with the assumption that both training and testing sets are standard
normal (See Appendix \ref{sec:Calculation-for} for detailed calculations).

\begin{align}
\mathbb{E}_{\bm{X}}\frac{n}{2\sigma_{\epsilon}^{2}}\cdot\text{Opt }R_{\bm{X}} & \asymp\begin{cases}
\frac{1}{2\sigma_{\epsilon}^{2}}\cdot\frac{3}{2}(1-2k)^{2} & k<0.5\\
0 & k\geq0.5
\end{cases}+1+o(1).\label{eq:test fun-1}
\end{align}
This formula perfectly coincides with the experimental results in
Figure \ref{fig:Different-models-comparison}, where linear model
shows a quadratic decreasing trend when $k<0.5,$ follwed by a nearly
zero generaliztion error for linear signals after $k\geq0.5$.

\begin{figure}
\centering \includegraphics[clip,width=0.95\textwidth,viewport=0bp 0bp 708bp 250.7215bp]{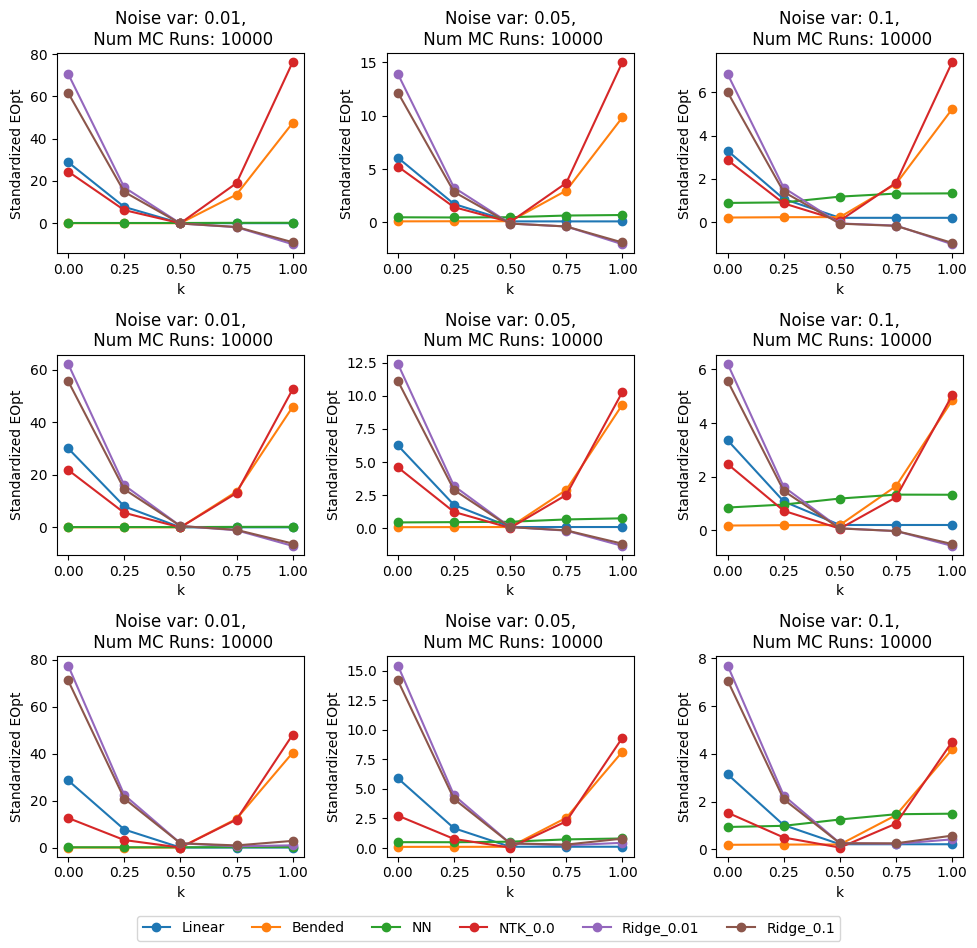}
\caption{\label{fig:Different-models-comparison}\label{fig:Different-models-comparison-1}Different
columns indicates difference additive noise variances $\sigma_{\epsilon}^{2}=0.01,0.05,0.1$
$k=0.0,0.1,...,0.9,1.0$ which controls the shapes of signals. In
each panel, the x-axis is the changing $k$, y-axis is the (scaled)
optimism computed from $N_{MC}=10,000$. The model $NTK\_0$ means
kernel regression using \eqref{eq:NTK_kernel} (See Algorithm \ref{alg:Simulation-algorithm-NTK}
in Appendix \ref{sec:Related-Algorithms-for}) with no regularization;
$Ridge\_\lambda$ means linear ridge regression with different regularization
paramters $\lambda$. }
\end{figure}

\subsection{Real-data Experiments}

\begin{figure}
\centering \includegraphics[width=0.75\textwidth]{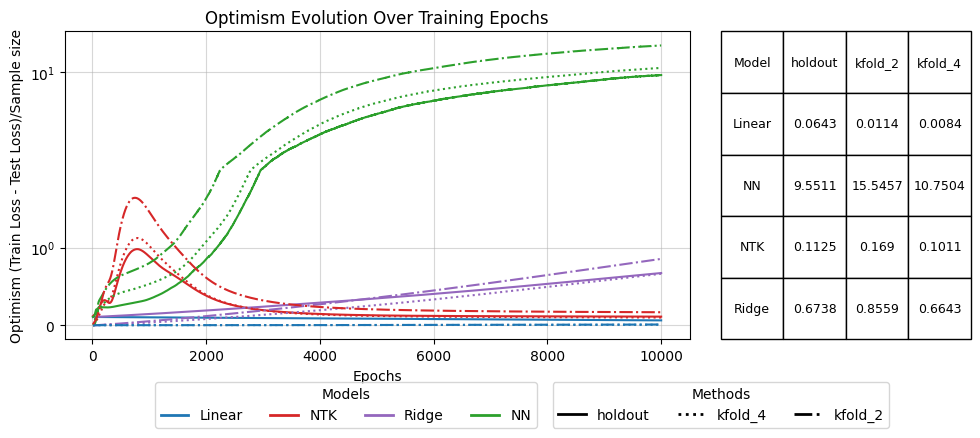}
\caption{\label{fig:Realdata-models-comparison-}Different models fitted on
the diabetes dataset \citep{efron2004least} with 442 samples and
10-dimensional input. The x-axis is the training epochs using the
same Adam optimizer (fixed learning rate 0.1), y-axis is the optimism
divided by sample size computed from $N_{MC}=\texttt{num\_runs}=10,000$
using hold-out (Algorithm \ref{alg:Holdout}) and $\texttt{k}$-fold
(Algorithm \ref{alg:KFold} with $\texttt{k}=2,4$) methods. The table
shows the optimism computed at epoch 10,000. Linear and NN models
have the same architecture as in Section \ref{subsec:Simulation-Results}.
The model $NTK$ means kernel regression using \eqref{eq:NTK_kernel}
(See Algorithm \ref{alg:Simulation-algorithm-NTK} in Appendix \ref{sec:Related-Algorithms-for})
with no regularization; $Ridge$ means linear ridge regression with
regularization paramter $\lambda=0.01$.}
\end{figure}

For real datasets, one cannot simulate multiple batches of testing
and training sets, Algorithms \ref{alg:Holdout} and \ref{alg:KFold}
in Appendix \ref{sec:Related-Algorithms-for} generalize the simulation
procedure of Algorithm \ref{alg:Simulation-algorithm-for} (which
generates synthetic training and testing data) to real-world datasets
where no such generation mechanism is available. Instead of sampling
from a known function, these methods estimate the generalization error
in terms of optimism \citep{efron_estimation_2004} by splitting or
re-sampling finite data. Algorithm \ref{alg:Holdout} (hold-out) divides
the dataset once into training and testing partitions, then computes
out-of-sample performance directly, the expectated values are average
across different hold-out splots. Algorithm \ref{alg:KFold} (\texttt{k}-fold
cross-validation \citep{geisser1975predictive}) partitions the data
into \texttt{k} folds, cycling each fold as the test set for a more
robust and often less biased estimate of error. While hold-out is
faster and suited for large datasets, \texttt{k}-fold is preferred
when data are limited or when a more stable error estimate is desired.
Both approaches replace synthetic generation with principled observed
data splitting, thereby offering practical methods to evaluate and
correct for overfitting in real-data scenarios.

In Figure \ref{fig:Realdata-models-comparison-}, we can observe that
these methods (Algorithm \ref{alg:Holdout} and \ref{alg:KFold})
yield similar estimation of raw testing minus training errors, and
normalized by the sample size 442.

First, the NN's optimism grows substantially as training proceeds,
underscoring that large-capacity models can strongly overfit if trained
for many epochs. By contrast, the Linear model's optimism remains
near zero throughout, reflecting its relatively low capacity and inability
to overfit severely on this dataset. The Ridge model and the NTK kernel
regression approach lie between these extremes, showing moderate overfitting
that eventually plateaus. Second, the final optimism estimates in
the table vary with the data-splitting strategy. In particular, the
$\texttt{k}$-fold estimates ($\texttt{k}=2,4$) often differ from
the hold-out estimate because cross-validation both uses the data
more efficiently and can lead to slightly different estimates of the
gap between train and test performance.

Overall, the figure highlights that higher-capacity NN models may
be more pone to growing train--test gaps over long training, yet
regularization (as in Ridge regression) mitigates overfitting but
does not eliminate it entirely. For real datasets, different resampling
methods can yield different numerical estimates of optimism, especially
in small-to-medium data settings like the diabetes dataset \citep{efron2004least}.

\section{\label{sec:Contribution-and-Discussion}Contribution and Discussion}

\subsection{Contributions}

In this paper, we study the performance of linear regression and its
variant kernel ridge regression in terms of the (scaled) optimism.
As a predictive model complexity measure, we defined and computed
expected optimism as the difference between testing and training errors
under random-X setting. Then, we derive the asymptotic analytically
closed expressions for the optimism for both linear (Theorem \ref{thm:main theorem})
and kernel ridge regressions (Theorem \ref{thm:main theorem-ridge-2}),
showing its positivity and its connection to low-rank approximated
model. A key contribution of our study is the closed-form expressions
for regression models and the extension of theoretical understanding
around the optimism metric --- the expected difference between testing
error and training error in model predictions under these models.

Our results show that the optimism is closely related to the model
capacity (e.g., degree of freedom in linear model), and the intrinsic
complexity of the underlying signal. With regularized and kernelized
models, the asymptotic results can be used to study more complex models.
By analyzing the asymptotic expressions for the optimism, we may gain
more insights into the factors that drive the double descent phenomenon
and understand how different models behave in the underparameterized,
interpolation threshold, and overparameterized regions.

Our paper further delineates how various types of regression functions
(linear, bended, NTK kernel) and regression NNs behave under different
signal settings, thus contributing a layered complexity to the understanding
of the double descent curve \citep{luan_predictive_2021,jiao2024assessment}.
With analytically closed asymptotic expected optimism, we can compute
it as a model predictive measure and we also find an interesting difference
in generalization behavior between NTK kernel and NN regressions,
showing that although NTK can approximate behavior of NN; NN is fundamentally
different from simple kernelizations using NTK kernels \citep{jacot2018neural}
in terms of optimism metrics.

\subsection{Future works}

The paper sets the groundwork for several promising directions of
research. One immediate area for further exploration is the application
of our theoretical findings to a different loss function than $L_{2}$
(e.g., $L_{1}$/LASSO regressions \citep{ju2020overfitting}, classification
\citep{belkin2018overfitting}), which could potentially validate
the applicability of the optimism metric across different types of
predictive modeling beyond regression.

Since NN is different from simple kernelizations in terms of optimism
behavior under different signals, it remains an open problem whether
a recursive kernelization (e.g., deep GP \citep{dunlop2018deep})
can approximates the NN better in terms of optimism in the context
of Theorem \ref{thm:main theorem}, \ref{thm:main theorem-ridge-2}
and \ref{thm:main-thm-kernel}. This could uncover additional insights
into the behavior of optimism in relation to kernel methods, and assists
in adjusting the number of layers and nodes \citep{martin2021implicit}~based
on generalization errors.

Extending the model complexity discussion into higher dimensional
input spaces, tensor regressions usually have low-rank structures
in the input space \citep{kolda2009tensor,luo2024efficient,kielstra2024linear},
therefore low-rank approximation are widely adopted in regression
models while Eckhart-Young type theorem no longer holds. One interesting
direction is to consider low-rank regression described in Theorem
\ref{thm:main theorem-low-rank} for tensor inputs to calibrate tensor
regressions and for rank estimation based on optimism.

Theorem \ref{thm:main-thm-kernel} provides a generic expression for
kernel ridge regression, however, its in-depth analysis will involve
approximation characterization of kernel random matrices $\bm{\Sigma}_{\phi},\bm{\Sigma}_{\phi,\lambda}$'s
as shown in \citet{koltchinskii2000random} when $n>d$. When $n<d$,
we can also directly inspect kernel function $K$ our random design
assumption for inner product and stationary kernels satisfy the assumption
for increasing dimension $d$ \citep{el2010spectrum}. This could
lead to a more comprehensive understanding of model generalization
behaviors in high-dimensional input spaces.

\section*{Acknowledgment}

HL was supported by U.S. Department of Energy under Contract DE-AC02-05CH11231
and U.S.~National Science Foundation NSF-DMS 2412403. Authors thank
Haoming Shi for proofreading our manuscripts and discussions on main
results.

 \bibliographystyle{plainnat}
\bibliography{Optimism}

\FloatBarrier

\appendix

\section{\label{sec:Related-Algorithms-for}Related Algorithms for Simulations}

In this section, we introduce algorithms used for asymptotic optimism
\eqref{eq:optR_X_ex2} for synthetic data and real data.

Algorithm \ref{alg:Simulation-algorithm-for} employs synthetically
generated data for both training and testing, wherein each run constructs
the input features and corresponding response values from a known
function (e.g., $f_{k}$ as in \eqref{eq:test fun}) with simulated
additive noise. This simulation-based framework is particularly useful
for controlled experiments and theoretical investigations, since it
allows the researcher to manipulate the level of noise or the complexity
of the signal and then observe how the model responds during training
and testing. However, this approach is not directly applicable to
real-world data scenarios because one typically does not have the
procedure of freely generating labeled samples from a specified known
function. Instead, in practice, data are finite and often cannot be
easily replaced or expanded through artificial means.

Algorithms \ref{alg:Holdout} and \ref{alg:KFold} address this limitation
by adapting the training and testing procedure to real datasets. The
core difference is that instead of generating new training and testing
sets at each run, the algorithms split or re-sample an existing dataset
in systematic ways to estimate both the training performance and the
generalization error. Algorithm \ref{alg:Holdout}, referred to as
the ``hold-out'' generalization, creates a single split of the dataset
into a training portion and a test portion, trains the model on the
training set over a specified number of epochs, and then evaluates
on the test set. By repeating this procedure several times with different
splits or different random initializations, one can measure how the
model performs on unseen data and thereby estimate its tendency to
overfit. The hold-out approach is straightforward to implement and
computationally less intensive; it is therefore appealing when the
dataset is large enough that a single (e.g., 80-20 split) split will
still produce a sufficiently reliable estimate of test performance.

Algorithm \ref{alg:KFold}, often referred to as the \texttt{k}-fold
cross-validation generalization, takes a more systematic approach
by partitioning the dataset into \texttt{k} roughly equal folds. Each
fold is used as a test set once, while the remaining \texttt{k-1}
folds are used for training. The average test performance across \texttt{k-1}
folds provides a more robust estimate of the model's generalization
ability because every data point has served as test data exactly once.
Although it is typically more computationally expensive than a single
hold-out split: since one must train and evaluate the model \texttt{k}
times. This procedure is especially valuable when the dataset is small
and the goal is to make the most efficient use of available data while
still obtaining a stable measure of test performance.

Deciding which method to adopt, hold-out or \texttt{k}-fold cross-validation,
generally depends on the size of the dataset, the computational costs
of training, and the desired precision in estimating generalization
error. When ample data are available and training the model is computationally
demanding, a single hold-out split (with or without repeated runs)
is often sufficient. In contrast, when the dataset is relatively small
or when a more reliable performance estimate is necessary, \texttt{k}-fold
cross-validation is typically preferred. Both of these real-data generalizations
of Algorithm \ref{alg:Simulation-algorithm-for} thus serve to replace
synthetic data generation with proven data-splitting or re-sampling
techniques, ensuring that model performance can be assessed appropriately
in practical applications. \FloatBarrier $ $ {\footnotesize{}{}{}}
\begin{algorithm}
\begin{itemize}
\item \textbf{\footnotesize{}{}{}Input:}{\footnotesize{}{}{} Original
dataset $\mathcal{D}=\{\bm{X},\bm{y}\}$ of size $N$, number of runs
}\texttt{\footnotesize{}{}num\_runs}{\footnotesize{}{}, number of
epochs }\texttt{\footnotesize{}{}num\_epochs}{\footnotesize{}{},
penalty term $\lambda$, and choice of optimizer (e.g., Adam/SGD).}{\footnotesize\par}
\item \textbf{\footnotesize{}{}{}For}{\footnotesize{}{}{} each run {}}\textbf{\footnotesize{}{}in}{\footnotesize{}{}
}\texttt{\footnotesize{}{}num\_runs}{\footnotesize{}{}:}{\footnotesize\par}
\begin{itemize}
\item {\footnotesize{}{}{}Generate training data $\bm{X}_{train}$ and
response values $\bm{y}_{train}=f_{k}(\bm{X}_{train})$ with noise
$N(0,\sigma_{\epsilon}^{2})$}{\footnotesize\par}
\item {\footnotesize{}{}{}Generate testing data $\bm{X}_{test}$ and response
values $\bm{y}_{test}=f_{k}(\bm{X}_{test})$ with noise $N(0,\sigma_{\epsilon}^{2})$}{\footnotesize\par}
\item {\footnotesize{}{}{}Initialize the model using a different random
seed (neural network or other function models)}{\footnotesize\par}
\item {\footnotesize{}{}{}Define the loss function (MSE) and the optimizer
(Adam/SGD)}{\footnotesize\par}
\item \textbf{\footnotesize{}{}For each}{\footnotesize{}{} epoch }\textbf{\footnotesize{}{}in}{\footnotesize{}{}
}\texttt{\footnotesize{}{}num\_epochs}{\footnotesize{}{}:}{\footnotesize\par}
\begin{itemize}
\item {\footnotesize{}{}{}Perform a forward pass/fitting of the model
with the training data $\bm{X}_{\mathrm{train}}$ and $\bm{y}_{\mathrm{train}}$.}{\footnotesize\par}
\item {\footnotesize{}{}{}Calculate the training loss with a penalty term
$\lambda\cdot\bm{I}$ using model prediction $\hat{\bm{y}}_{train}$
and $\bm{y}_{train}$}{\footnotesize\par}

\item {\footnotesize{}{}{}Perform a forward pass/prediction with the testing
data (without gradient computation)}{\footnotesize\par}
\item {\footnotesize{}{}Calculate the test loss: $\mathcal{L}_{\mathrm{test}}\;=\;\mathrm{MSE}\Bigl(\hat{\bm{y}}_{\mathrm{test}},\bm{y}_{\mathrm{test}}\Bigr).$
} 
\end{itemize}
\end{itemize}
\item {\footnotesize{}{}{}Compute the average $\overline{\mathcal{L}}_{\mathrm{train}}$
and the average $\overline{\mathcal{L}}_{\mathrm{test}}$ over all
runs, as well as any variability measures.}{\footnotesize\par}
\end{itemize}
{\footnotesize{}{}{}\caption{\label{alg:Simulation-algorithm-for}Simulation algorithm for estimating
optimism in linear and ridge regression models. This algorithm computes
the average performance of different models (linear, hinge, and bended)
over a specified number of training epochs and runs, which is assessed
by the mean train and test losses. We use it to study how different
levels of noise in the training data (as shown in Figure \ref{fig:Noise_N_MCMC_grid})
and different signal complexities (controlled by parameter $k$) affect
the models' learning process (as shown in Figure \ref{fig:Opt_vs_epoch})
and their ability to generalize from training data to unseen test
data (as shown in Figure \ref{fig:Different-models-comparison}).\protect
\protect \protect \\
 }
}{\footnotesize\par}
\end{algorithm}

{\footnotesize{}{}{} } 
\begin{algorithm}
{\footnotesize{}{}\caption{\label{alg:Holdout}Hold-out generalization of Algorithm \ref{alg:Simulation-algorithm-for}
for real-data computation of optimism.}
}{\footnotesize\par}

\begin{itemize}
\item \textbf{\footnotesize{}{}{}Input:}{\footnotesize{}{}{} Original
dataset $\mathcal{D}=\{\bm{X},\bm{y}\}$ of size $N$, number of runs
}\texttt{\footnotesize{}{}num\_runs}{\footnotesize{}{}, number of
epochs }\texttt{\footnotesize{}{}num\_epochs}{\footnotesize{}{},
penalty term $\lambda$, and choice of optimizer (e.g., Adam/SGD).}{\footnotesize\par}

\item \textbf{\footnotesize{}{}{}For}{\footnotesize{}{}{} each run {}}\textbf{\footnotesize{}{}in}{\footnotesize{}{}
}\texttt{\footnotesize{}{}num\_runs}{\footnotesize{}{}:}{\footnotesize\par}

{\footnotesize{}{} }{\footnotesize\par}

\begin{itemize}
\item {\footnotesize{}{}Split $\mathcal{D}$ into $(\mathcal{D}_{\mathrm{train}},\mathcal{D}_{\mathrm{test}})$,
for example using an 80\%--20\% random split.}{\footnotesize\par}

{\footnotesize{}{} }{\footnotesize\par}

\item {\footnotesize{}{}Let $\bm{X}_{\mathrm{train}},\bm{y}_{\mathrm{train}}$
be the }\textbf{\footnotesize{}{}hold-out}{\footnotesize{}{} training
set $\mathcal{D}_{\mathrm{train}}$ and $\bm{X}_{\mathrm{test}},\bm{y}_{\mathrm{test}}$
be }\textbf{\footnotesize{}{}a bootstrap sample}{\footnotesize{}{}
from the test set $\mathcal{D}_{\mathrm{test}}$.}{\footnotesize\par}

{\footnotesize{}{} }{\footnotesize\par}

\item {\footnotesize{}{}Initialize the model using a different random seed
(neural network or other function models).}{\footnotesize\par}

{\footnotesize{}{} }{\footnotesize\par}

\item {\footnotesize{}{}Define the loss function (MSE) and the optimizer
(Adam/SGD).}{\footnotesize\par}

{\footnotesize{}{} }{\footnotesize\par}

\item \textbf{\footnotesize{}{}For each}{\footnotesize{}{} epoch }\textbf{\footnotesize{}{}in}{\footnotesize{}{}
}\texttt{\footnotesize{}{}num\_epochs}{\footnotesize{}{}:}{\footnotesize\par}

{\footnotesize{}{} }{\footnotesize\par}

\begin{itemize}
\item {\footnotesize{}{}Perform a forward pass/predicting of the model
with the training data $\bm{X}_{\mathrm{train}}$ and $\bm{y}_{\mathrm{train}}$.}{\footnotesize\par}

{\footnotesize{}{} }{\footnotesize\par}

\item {\footnotesize{}{}Calculate the training loss with a penalty term
$\lambda\cdot\bm{I}$ using model prediction $\hat{\bm{y}}_{train}$
and $\bm{y}_{train}$ }{\footnotesize\par}

\item {\footnotesize{}{}Perform backpropagation and update the model parameters
(via optimizer).}{\footnotesize\par}

{\footnotesize{}{} }{\footnotesize\par}

\end{itemize}
\item {\footnotesize{}{}After the final epoch, perform a forward pass with
$\bm{X}_{\mathrm{test}}$ (no gradient computation).}{\footnotesize\par}

{\footnotesize{}{} }{\footnotesize\par}

\item {\footnotesize{}{}Calculate the test loss: $\mathcal{L}_{\mathrm{test}}\;=\;\mathrm{MSE}\Bigl(\hat{\bm{y}}_{\mathrm{test}},\bm{y}_{\mathrm{test}}\Bigr).$ }{\footnotesize\par}

\item {\footnotesize{}{}Store or record $\mathcal{L}_{\mathrm{train}}$
and $\mathcal{L}_{\mathrm{test}}$.}{\footnotesize\par}

{\footnotesize{}{} }{\footnotesize\par}

\end{itemize}
\item {\footnotesize{}{}{}Compute the average $\overline{\mathcal{L}}_{\mathrm{train}}$
and the average $\overline{\mathcal{L}}_{\mathrm{test}}$ over all
runs, as well as any variability measures.}{\footnotesize\par}

{\footnotesize{}{} }{\footnotesize\par}

\end{itemize}
{\footnotesize{}{} }{\footnotesize\par}

\end{algorithm}

{\footnotesize{}{}} 
\begin{algorithm}
{\footnotesize{}{}\caption{\label{alg:KFold}\texttt{k}-Fold generalization of Algorithm \ref{alg:Simulation-algorithm-for}
for real-data computation of optimism.}
}{\footnotesize\par}

\begin{itemize}
\item \textbf{\footnotesize{}{}{}Input:}{\footnotesize{}{}{} Original
dataset $\mathcal{D}=\{\bm{X},\bm{y}\}$ of size $N$, number of runs
}\texttt{\footnotesize{}{}num\_runs}{\footnotesize{}{}, number of
epochs }\texttt{\footnotesize{}{}num\_epochs}{\footnotesize{}{},
number of folds }\texttt{\footnotesize{}{}k}{\footnotesize{}{},
penalty term $\lambda$, and choice of optimizer (e.g., Adam/SGD).}{\footnotesize\par}

{\footnotesize{}{}
}{\footnotesize\par}

\item \textbf{\footnotesize{}{}{}For}{\footnotesize{}{}{} each run {}}\textbf{\footnotesize{}{}in}{\footnotesize{}{}
}\texttt{\footnotesize{}{}num\_runs}{\footnotesize{}{}:}{\footnotesize\par}

{\footnotesize{}{} }{\footnotesize\par}

\begin{itemize}
\item {\footnotesize{}{}Partition $\mathcal{D}$ into }\texttt{\footnotesize{}{}k}{\footnotesize{}{}
folds of (approximately) equal sizes that are disjoint. Let }\textbf{\footnotesize{}{}one
fold}{\footnotesize{}{} be $\mathcal{D}_{\mathrm{test}}$ = $(\bm{X}_{\mathrm{test}},\bm{y}_{\mathrm{test}})$. }{\footnotesize\par}

{\footnotesize{}{} }{\footnotesize\par}

\item \textbf{\footnotesize{}{}For each}{\footnotesize{}{} fold }\textbf{\footnotesize{}{}in}{\footnotesize{}{}
the rest }\texttt{\footnotesize{}{}k-1}{\footnotesize{}{} folds:}{\footnotesize\par}

{\footnotesize{}{} }{\footnotesize\par}

\begin{itemize}
\item {\footnotesize{}{}Let $\bm{X}_{\mathrm{train}},\bm{y}_{\mathrm{train}}$
be the }\textbf{\footnotesize{}{}fixed fold}{\footnotesize{}{} training
set $\mathcal{D}_{\mathrm{train}}$ and use the current fold of the
remaining }\texttt{\footnotesize{}{}k}{\footnotesize{}{}-1 folds
into $\mathcal{D}_{\mathrm{train}}$ = $(\bm{X}_{\mathrm{train}},\bm{y}_{\mathrm{train}})$.}{\footnotesize\par}

{\footnotesize{}{} }{\footnotesize\par}

\item {\footnotesize{}{}Initialize the model using a different random seed
(neural network or other function models).}{\footnotesize\par}

{\footnotesize{}{} }{\footnotesize\par}

\item {\footnotesize{}{}Define the loss function (MSE) and the optimizer
(Adam/SGD).}{\footnotesize\par}

{\footnotesize{}{} }{\footnotesize\par}

\item \textbf{\footnotesize{}{}For each}{\footnotesize{}{} epoch }\textbf{\footnotesize{}{}in}{\footnotesize{}{}
}\texttt{\footnotesize{}{}num\_epochs}{\footnotesize{}{}:}{\footnotesize\par}

{\footnotesize{}{} }{\footnotesize\par}

\begin{itemize}
\item {\footnotesize{}{}Perform a forward pass/predicting of the model
with the training data $\bm{X}_{\mathrm{train}}$ and $\bm{y}_{\mathrm{train}}$.}{\footnotesize\par}

{\footnotesize{}{} }{\footnotesize\par}

\item {\footnotesize{}{}Calculate the training loss with a penalty term
$\lambda\cdot\bm{I}$ using model prediction $\hat{\bm{y}}_{train}$
and $\bm{y}_{train}$.}{\footnotesize\par}

{\footnotesize{}{} }{\footnotesize\par}

\item {\footnotesize{}{}Perform backpropagation and update the model parameters
(via optimizer).}{\footnotesize\par}

{\footnotesize{}{} }{\footnotesize\par}

\end{itemize}
\item {\footnotesize{}{}After the final epoch, perform a forward pass with
$\bm{X}_{\mathrm{test}}$ (no gradient computation).}{\footnotesize\par}

{\footnotesize{}{} }{\footnotesize\par}

\item {\footnotesize{}{}Calculate the test loss: $\mathcal{L}_{\mathrm{test}}\;=\;\mathrm{MSE}\Bigl(\hat{\bm{y}}_{\mathrm{test}},\bm{y}_{\mathrm{test}}\Bigr).$ }{\footnotesize\par}

\item {\footnotesize{}{}Store or record $\mathcal{L}_{\mathrm{train}}$
and $\mathcal{L}_{\mathrm{test}}$ for this fold.}{\footnotesize\par}

{\footnotesize{}{} }{\footnotesize\par}

\end{itemize}
\item {\footnotesize{}{}{}Compute the average $\overline{\mathcal{L}}_{\mathrm{train}}$
and the average $\overline{\mathcal{L}}_{\mathrm{test}}$ over all
runs, as well as any variability measures.}{\footnotesize\par}

{\footnotesize{}{} 
 }{\footnotesize\par}

\end{itemize}
{\footnotesize{}{}
}{\footnotesize\par}

\end{itemize}
{\footnotesize{}{} }{\footnotesize\par}

\end{algorithm}

$ $

\medskip{}

{\footnotesize{}{}{}} 
\begin{algorithm}
\begin{itemize}
\item {\footnotesize{}{}{}class SimpleNN(nn.Module): }{\footnotesize\par}
\item {\footnotesize{}{}{}def \_\_init\_\_(seed,self): }{\footnotesize\par}
\begin{itemize}
\item {\footnotesize{}{}{}super(SimpleNN, self).\_\_init\_\_() }{\footnotesize\par}
\item {\footnotesize{}{}{}nn.manual\_seed(seed) }{\footnotesize\par}
\item {\footnotesize{}{}{}self.layers = nn.Sequential( nn.Linear(1, 50),
nn.ReLU(), nn.Linear(50, 50), nn.ReLU(), nn.Linear(50, 1) ) }{\footnotesize\par}
\end{itemize}
\item {\footnotesize{}{}{}net = SimpleNN() }{\footnotesize\par}
\item {\footnotesize{}{}{}criterion = nn.MSELoss() }{\footnotesize\par}
\item {\footnotesize{}{}{}optimizer = optim.Adam(net.parameters(), lr=0.01)
or }\\
 {\footnotesize{}{}{}optim.SGD(net.parameters(), lr=0.01, momentum=0.9) }{\footnotesize\par}
\item \textbf{\footnotesize{}{}For each}{\footnotesize{}{} epoch }\textbf{\footnotesize{}{}in}{\footnotesize{}{}
}\texttt{\footnotesize{}{}num\_epochs}{\footnotesize{}{}:}{\footnotesize\par}
\begin{itemize}
\item {\footnotesize{}{}{}optimizer.zero\_grad() }{\footnotesize\par}
\item {\footnotesize{}{}{}outputs = net(train\_X) }{\footnotesize\par}
\item {\footnotesize{}{}{}loss = criterion(outputs, train\_y) }{\footnotesize\par}
\item {\footnotesize{}{}{}loss.backward() }{\footnotesize\par}
\item {\footnotesize{}{}{}optimizer.step() }{\footnotesize\par}
\item \textbf{\footnotesize{}{}{}with}{\footnotesize{}{}{} torch.no\_grad(): }{\footnotesize\par}
\begin{itemize}
\item {\footnotesize{}{}{}outputs\_test = net(test\_X) }{\footnotesize\par}
\item {\footnotesize{}{}{}loss\_test = criterion(outputs\_test, test\_y) }{\footnotesize\par}
\end{itemize}
\end{itemize}
\end{itemize}
{\footnotesize{}{}{}\caption{\label{alg:3-layer-neural-network}3-layer NN construction in python
using pytorch. The network consists of linear input layer, with ReLU
of 50 outputs; hidden layer with ReLU of 50 outputs; output layer
with ReLU of 1 output.}
}{\footnotesize\par}
\end{algorithm}

\medskip{}

Above algorithms \ref{alg:Simulation-algorithm-for}, \ref{alg:Holdout}
and \ref{alg:KFold} also works for NN and NTK fitting, like the one
described in Algorithm \ref{alg:3-layer-neural-network} and \ref{alg:Simulation-algorithm-NTK}.
These two methods usually performs layer-wise fitting. Instead of
updating all layers as in Algorithm \ref{alg:3-layer-neural-network},
only the bottom-layer weights $W$ and the top-layer weights $\bm{a}$
are considered as kernel parameters and optimized, which approximates
the feature learning and top-layer fitting in the NN. The NTK features
$Z_{1}$ emulate the learned features of the NN where the ridge regression
penalty on the NTK kernel matrix $Z_{2}$ approximates the NN's regularization,
capturing overparameterization effects inherent in wide networks.
We do not use any regularization, namely $\lambda=0$ in this setting.
By dynamically computing the NTK features and kernel, this code emulates
the training of a NN while leveraging the fixed NTK, consistent with
the theoretical behavior of wide NNs.

{\footnotesize{}{}{}} 
\begin{algorithm}
\begin{itemize}
\item \textbf{\footnotesize{}{}{}Input:}{\footnotesize{}{}{} Original
dataset $\mathcal{D}=\{\bm{X},\bm{y}\}$ of size $N$, number of runs
}\texttt{\footnotesize{}{}num\_runs}{\footnotesize{}{}, number of
epochs }\texttt{\footnotesize{}{}num\_epochs}{\footnotesize{}{},
penalty term $\lambda$, and choice of optimizer (e.g., Adam/SGD).}{\footnotesize\par}
\item \textbf{\footnotesize{}{}{}For}{\footnotesize{}{}{} each run }\textbf{\footnotesize{}{}{}in}{\footnotesize{}{}{}
}\texttt{\footnotesize{}{}num\_runs}{\footnotesize{}{}: }{\footnotesize\par}
\begin{itemize}
\item {\footnotesize{}{}{}Generate training data $\bm{X}_{train}$ and
response values $\bm{y}_{train}=f_{k}(\bm{X}_{train})$ with noise
$N(0,\sigma_{\epsilon}^{2})$ }{\footnotesize\par}
\item {\footnotesize{}{}{}Generate testing data $\bm{X}_{test}$ and response
values $\bm{y}_{test}=f_{k}(\bm{X}_{test})$ with noise $N(0,\sigma_{\epsilon}^{2})$ }{\footnotesize\par}
\item {\footnotesize{}{}{}Initialize the model using a different random
seed }{\footnotesize\par}
\item {\footnotesize{}{}{}Initialize $W$: Bottom-layer weights; $\bm{a}$:
Top-layer weights both as i.i.d. $N(0,1)$ }{\footnotesize\par}
\item {\footnotesize{}{}{}Define the loss function (MSE) and the optimizer
(Adam/SGD) }{\footnotesize\par}
\item \textbf{\footnotesize{}{}{}For}{\footnotesize{}{}{} each epoch
}\textbf{\footnotesize{}{}{}in}{\footnotesize{}{}{} epochs: }{\footnotesize\par}
\begin{itemize}
\item {\footnotesize{}{}{}Perform a forward pass of the model with the
training data }{\footnotesize\par}
\begin{itemize}
\item {\footnotesize{}{}{}$Z_{1}$= ReLU($W^{T}\bm{X}_{train}$) as the
feature mapping $\phi$ }{\footnotesize\par}
\item {\footnotesize{}{}{}$Z_{2}$=$Z_{1}\cdot Z_{1}^{T}$ as the NTK
feature of the corresponding kernel $\Theta$ in \eqref{eq:NTK_kernel}.
}\\
\item {\footnotesize{}{}{}$\hat{\bm{y}}_{train}=Z_{1}\bm{a}$ }{\footnotesize\par}
\end{itemize}
\item {\footnotesize{}{}{}Calculate the training loss with a penalty term
$\lambda\cdot\text{trace}(Z_{2})$ using model prediction $\hat{\bm{y}}_{train}$
and $\bm{y}_{train}$ }{\footnotesize\par}
\item {\footnotesize{}{}{}Perform backpropagation and update the model
parameters $W,\bm{a}$ }{\footnotesize\par}
\item {\footnotesize{}{}{}Perform a forward pass with the testing data
(without gradient computation) }{\footnotesize\par}
\item {\footnotesize{}{}Calculate the test loss: $\mathcal{L}_{\mathrm{test}}\;=\;\mathrm{MSE}\Bigl(\hat{\bm{y}}_{\mathrm{test}},\bm{y}_{\mathrm{test}}\Bigr).$
} 
\end{itemize}
\end{itemize}
\item {\footnotesize{}{}{}Compute the ${\mathcal{L}}_{\mathrm{train}}$
and the average $\overline{\mathcal{L}}_{\mathrm{test}}$ over all
runs, as well as any variability measures (e.g., standard deviation).}{\footnotesize\par}
\end{itemize}
{\footnotesize{}{}{}\caption{\label{alg:Simulation-algorithm-NTK}Simulation algorithm for estimating
optimism in kernel regression model with NTK. This algorithm computes
the average performance of kernel ridge regression models, specifically,
this NTK kernel corresponds to a NN consists of linear input layer,
with ReLU of 50 outputs; hidden layer with ReLU of 50 outputs; output
layer with ReLU of 1 output as in Algorithm \ref{alg:3-layer-neural-network}.
\protect \protect \protect \\
 }
}{\footnotesize\par}
\end{algorithm}


\section{\label{sec:enough MC size}Simulation Monte-Carlo Sample Sizes}

\textbf{MC simulation settings.} From different simulation settings
in Figure \ref{fig:Noise_N_MCMC_grid}, we report the final scaled
expected optimism estimated from each simulation. We observe that
the MC error for this experiment is not negligible, especially at
the initial stages of the training (i.e., when the number of epochs
is small). This is due to a large variance of the scaled optimism
but is also affected by the magnitude of the initialization weights
(i.e., weights of nodes in NN, $\alpha,\beta$ in \eqref{eq:linear_assumption}
and \eqref{eq:bended_assumption}). Further increasing the MC sample
size could resolve this issue, but requires significantly more compute
time. In our experiments, we notice that the improvement of the estimate
is relatively small after the MC sample size exceeds 10,000. It is
also observed that: when the noise variance is small, the model fit
is basically determined by the signal shape. Then the raw optimism
is relatively stable; when the noise variance is large ($>1$, not
shown), the model fit is basically determined by the noise shape.
If the true relationship between the variables is not linear or does
not follow the specified signal function, then the model would be
mis-specified.

We choose the MC sample size to be 10000, which seems to guarantee
the accuracy of estimated model optimism for the signal we considered.

\textbf{Trends in optimism versus epoches.} Figure \ref{fig:Opt_vs_epoch}
shows us that for different signals of different complexities $k$,
the same NN takes different number of epochs to attain convergence.
For different $k=0.0,0.1,\cdots,1.0$, the scaled expected optimism,
as a measure of overfitting, exhibits a distinct trend in contrast
to the linear and bended models. Initially, the optimism is minimal,
reflecting the random initialization of weights in the NN. Then the
optimism increases to a peak, during which the model is trained to
fit the training data. As the training concludes, the optimism stabilizes
and shows a better performance over the testing dataset, indicating
the attainment of a balance between model complexity and generalization.

We choose the max iteration to be 1000, which seems to guarantee the
convergence of model training for the sample sizes we considered.

\begin{figure}
\centering

\includegraphics[width=0.95\textwidth]{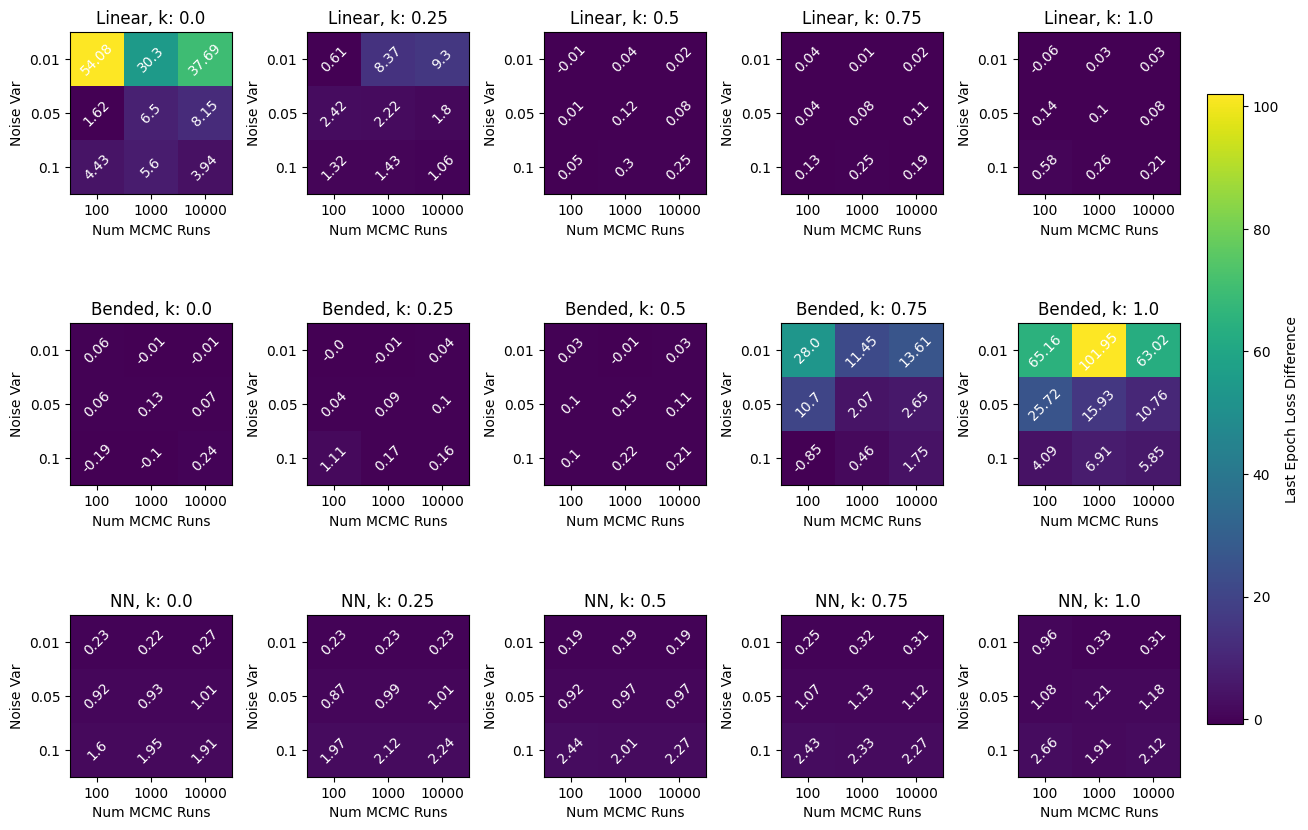}

\caption{\label{fig:Noise_N_MCMC_grid}Different rows of panels denote Linear,
Bended and NN models correspondingly; different columns of panels
denote $k=0.0,0.25,0.5,0.75,1.0$ which controls the shapes of signals.
In each panel, the x-axis is the changing number of MC runs, y-axis
is the noise variance added to the signal and the color indicates
the actual value (with text) of the (raw) optimism, i.e., the testing
minus training loss (at the last epoch).}
\end{figure}

\begin{figure}[ht]
\centering

\includegraphics[clip,width=0.95\textwidth,viewport=0bp 0bp 1073bp 777.902bp]{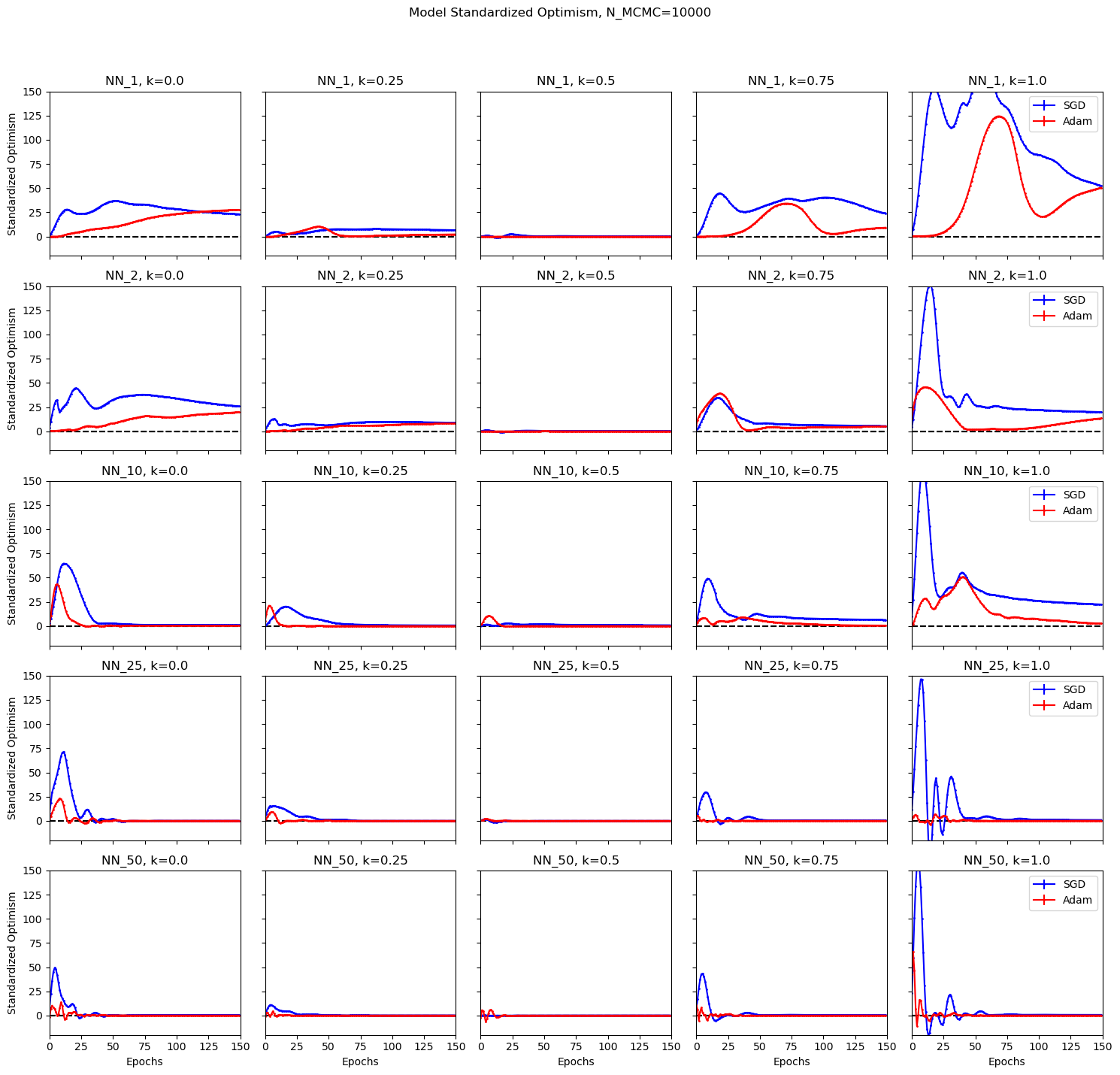}

\includegraphics[width=0.95\textwidth]{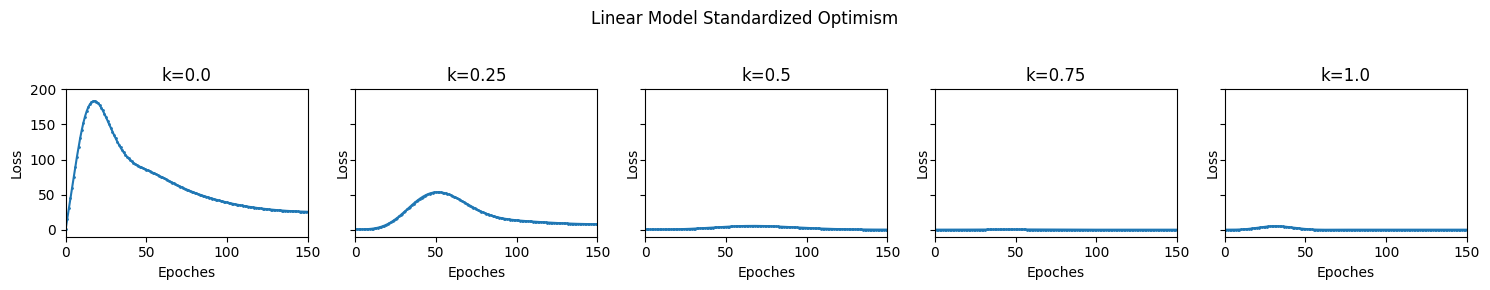}

\caption{\label{fig:Opt_vs_epoch}Expected Optimism (averaged from 10000 MC
simulations) versus the number of NN epochs for different $k$ in
\eqref{eq:test fun} with $\sigma_{\epsilon}^{2}=0.01$ for a training
set sampled from $N(0,1)$ of size 1000; and a testing set sampled
from $N(0,1)$ of size 1000. The network (with 2, 10, 25, 50 hidden
nodes) is trained with 1000 maximum epoches and reLU activation functions.
NNs are optimized via Adam optimizer with learning rate 0.01 or SGD
with learning rate 0.01 and momentum 0.9, and we provide the optimism
for the linear model for comparison.}
\end{figure}

\FloatBarrier

\section{\label{sec:Proof-for-Proposition-1}Proof of Proposition \ref{prop:Let}}

Recall the row vector notations $\bm{h}_{i}^{T}=\bm{x}_{i}^{T}\left(\bm{X}^{T}\bm{X}\right)^{-1}\bm{X}^{T}$
and $\bm{h}_{*}^{T}=\bm{x}_{*}^{T}\left(\bm{X}^{T}\bm{X}\right)^{-1}\bm{X}^{T}$
we defined and the fact that $\bm{x}_{i}^{T}\hat{\bm{\beta}}=\bm{h}_{i}^{T}\bm{y}=\hat{\mu}(\bm{x}_{i})$,
$\bm{x}_{*}^{T}\hat{\bm{\beta}}=\bm{h}_{*}^{T}\bm{y}=\hat{\mu}(\bm{x}_{*})$.
The in-sample optimism (or classical optimism) can be defined as\footnote{Note that $\mathbb{E}\bm{X}^{T}\bm{A}\bm{X}=\text{trace}(\bm{A}\text{Var}\bm{X})+(\mathbb{E}\bm{X})^{T}\bm{A}\mathbb{E}\bm{X}$},
\begin{align}
\text{Err}T_{\bm{X}} & \coloneqq\mathbb{E}_{\bm{y}\mid\bm{X}}T_{\bm{X}}=\frac{1}{n}\sum_{i=1}^{n}\mathbb{E}_{y_{i}\mid\bm{x}_{i}}\left\Vert y_{i}-\bm{x}_{i}^{T}\hat{\bm{\beta}}\right\Vert _{2}^{2}\nonumber \\
 & =\frac{1}{n}\sum_{i=1}^{n}\mathbb{E}_{y_{i}\mid\bm{x}_{i}}\left(y_{i}^{T}y_{i}-y_{i}^{T}\bm{h}_{i}^{T}\bm{y}-\bm{y}^{T}\bm{h}_{i}y_{i}+\bm{y}^{T}\bm{h}_{i}\bm{h}_{i}^{T}\bm{y}\right)\nonumber \\
 & =\sigma_{\epsilon}^{2}+\frac{1}{n}\sum_{i=1}^{n}\left(\mu(\bm{x}_{i})^{T}\mu(\bm{x}_{i})-2\mathbb{E}y_{i}^{T}\bm{h}_{i}^{T}\bm{y}+\sigma_{\epsilon}^{2}\text{trace \ensuremath{\left(\bm{h}_{i}\bm{h}_{i}^{T}\right)}}+\bm{\mu}(\bm{X})^{T}\bm{h}_{i}\bm{h}_{i}^{T}\bm{\mu}(\bm{X})\right)\nonumber \\
 & =\sigma_{\epsilon}^{2}\left(\text{trace \ensuremath{\left(\frac{1}{n}\bm{I}+\frac{1}{n}\bm{H}^{T}\bm{H}\right)}}\right)-\frac{2}{n}\sum_{i=1}^{n}\left(\mathbb{E}y_{i}^{T}\bm{h}_{i}^{T}\bm{y}\right)+\frac{1}{n}\bm{\mu}(\bm{X})^{T}(\bm{I}+\bm{H}^{T}\bm{H})\bm{\mu}(\bm{X})
\end{align}

We define out-of-sample prediction error (testing error) and calculate
the corresponding optimism. 
\begin{align*}
\text{Err}R_{\bm{X},\bm{y}} & \coloneqq\mathbb{E}_{y_{*}\mid\bm{x}_{*}}\left\Vert y_{*}-\bm{x}_{*}^{T}\hat{\bm{\beta}}\right\Vert _{2}^{2}\\
y_{*}\mid\bm{x}_{*} & \sim N_{1}(\bm{x}_{*}^{T}\bm{\beta},\sigma_{\epsilon}^{2})\in\mathbb{R}^{1}
\end{align*}
and the expectation of this quantity is defined as $\text{Err}R_{\bm{X},\bm{x}_{*}}\coloneqq\mathbb{E}_{\bm{y},y_{*}\mid\bm{X},\bm{x}_{*}}\left(\text{Err}R_{\bm{X},\bm{y}}\right)$.

We have the following expression for optimism $\text{Opt }R_{\bm{X}}$.

\begin{align}
\text{Opt }R_{\bm{X}} & \coloneqq\text{Err}R_{\bm{X}}-\text{Err}T_{\bm{X}}\label{eq:optR_X}\\
 & =\sigma_{\epsilon}^{2}\left(1+\mathbb{E}_{\bm{x}_{*}}\|\bm{h}_{*}\|_{2}^{2}\right)+\mathbb{E}_{\bm{x}_{*}}\left\Vert \mu(\bm{x}_{*})-\bm{h}_{*}^{T}\bm{\mu}(\bm{X})\right\Vert _{2}^{2}\nonumber \\
 & -\sigma_{\epsilon}^{2}\left(\text{trace \ensuremath{\left(\frac{1}{n}\bm{I}+\frac{1}{n}\bm{H}^{T}\bm{H}\right)}}\right)+\frac{2}{n}\sum_{i=1}^{n}\left(\mathbb{E}y_{i}^{T}\bm{h}_{i}^{T}\bm{y}\right)-\frac{1}{n}\bm{\mu}(\bm{X})^{T}(\bm{I}+\bm{H}^{T}\bm{H})\bm{\mu}(\bm{X})\nonumber \\
 & =\mathbb{E}_{\bm{x}_{*}}\left\Vert \mu(\bm{x}_{*})-\bm{h}_{*}^{T}\bm{\mu}(\bm{X})\right\Vert _{2}^{2}-\frac{1}{n}\bm{\mu}(\bm{X})^{T}(\bm{I}+\bm{H}^{T}\bm{H})\bm{\mu}(\bm{X})\nonumber \\
 & +\sigma_{\epsilon}^{2}\left(\mathbb{E}_{\bm{x}_{*}}\|\bm{h}_{*}^{T}\|_{2}^{2}-\frac{1}{n}\text{trace \ensuremath{\left(\bm{H}^{T}\bm{H}\right)}}\right)\nonumber \\
 & +\frac{2}{n}\sum_{i=1}^{n}\left(\mathbb{E}y_{i}^{T}\bm{h}_{i}^{T}\bm{y}\right).\label{eq:optR_X_1}
\end{align}
To simplify this expression further, we notice that 
\begin{align}
\frac{1}{n}\left\Vert \bm{\mu}(\bm{X})-\bm{H}\bm{\mu}(\bm{X})\right\Vert _{2}^{2} & =\frac{1}{n}\left(\bm{\mu}(\bm{X})^{T}\bm{\mu}(\bm{X})-2\bm{\mu}(\bm{X})^{T}\bm{H}\bm{\mu}(\bm{X})+\bm{\mu}(\bm{X})^{T}\bm{H}^{T}\bm{H}\bm{\mu}(\bm{X})\right)\nonumber \\
 & \text{Note that }\bm{H}^{T}\bm{H}=\bm{H}\nonumber \\
 & =\frac{1}{n}\bm{\mu}(\bm{X})^{T}(\bm{I}+\bm{H}^{T}\bm{H})\bm{\mu}(\bm{X})-\frac{2}{n}\text{trace}\left(\bm{\mu}(\bm{X})^{T}\bm{H}\bm{\mu}(\bm{X})\right)\nonumber \\
 & =\frac{1}{n}\bm{\mu}(\bm{X})^{T}(\bm{I}+\bm{H}^{T}\bm{H})\bm{\mu}(\bm{X})-\frac{2}{n}\text{trace}\left(\bm{\beta}^{T}\bm{X}^{T}\bm{HX\beta}\right)\label{eq:useful_expression1}
\end{align}
and use the fact that $\mathbb{E}\bm{X}^{T}\bm{A}\bm{X}=\text{trace}(\bm{A}\text{Var}\bm{X})+(\mathbb{E}\bm{X})^{T}\bm{A}\mathbb{E}\bm{X}$,
\begin{align}
\frac{2}{n}\sum_{i=1}^{n}\left(\mathbb{E}y_{i}^{T}\bm{h}_{i}^{T}\bm{y}\right)-\frac{2}{n}\text{trace}\left(\bm{\mu}(\bm{X})^{T}\bm{H}\bm{\mu}(\bm{X})\right) & =\frac{2}{n}\sum_{i=1}^{n}\left(\mathbb{E}y_{i}^{T}\bm{h}_{i}^{T}\bm{y}\right)-\frac{2}{n}\text{trace}\left(\mathbb{E}\bm{y}^{T}\bm{H}\mathbb{E}\bm{y}\right)\nonumber \\
 & =\frac{1}{n}\text{trace}\left(2\bm{H}\right)\cdot\sigma_{\epsilon}^{2}.\label{eq:useful_expression2}
\end{align}
We can insert $-\frac{2}{n}\text{trace}\left(\bm{\beta}^{T}\bm{X}^{T}\bm{HX\beta}\right)+\frac{2}{n}\text{trace}\left(\bm{\beta}^{T}\bm{X}^{T}\bm{HX\beta}\right)$
into (\ref{eq:optR_X_1}) and get, 
\begin{align}
\text{Opt }R_{\bm{X}} & =\mathbb{E}_{\bm{x}_{*}}\left\Vert \mu(\bm{x}_{*})-\bm{h}_{*}^{T}\bm{\mu}(\bm{X})\right\Vert _{2}^{2}\nonumber \\
 & \underset{\text{using \eqref{eq:useful_expression1}}}{\underbrace{-\frac{1}{n}\bm{\mu}(\bm{X})^{T}(\bm{I}+\bm{H}^{T}\bm{H})\bm{\mu}(\bm{X})+\frac{2}{n}\text{trace}\left(\bm{\beta}^{T}\bm{X}^{T}\bm{HX\beta}\right)}}\nonumber \\
 & +\sigma_{\epsilon}^{2}\left(\mathbb{E}_{\bm{x}_{*}}\|\bm{h}_{*}^{T}\|_{2}^{2}-\frac{1}{n}\text{trace \ensuremath{\left(\bm{H}^{T}\bm{H}\right)}}\right)\nonumber \\
 & \underset{\text{using \eqref{eq:useful_expression2}}}{\underbrace{+\frac{2}{n}\sum_{i=1}^{n}\left(\mathbb{E}y_{i}^{T}\bm{h}_{i}^{T}\bm{y}\right)-\frac{2}{n}\text{trace}\left(\bm{\mu}(\bm{X})^{T}\bm{H}\bm{\mu}(\bm{X})\right)}}\label{eq:optR_X_2}\\
 & =\mathbb{E}_{\bm{x}_{*}}\left\Vert \mu(\bm{x}_{*})-\bm{h}_{*}^{T}\bm{\mu}(\bm{X})\right\Vert _{2}^{2}-\frac{1}{n}\left\Vert \bm{\mu}(\bm{X})-\bm{H}\bm{\mu}(\bm{X})\right\Vert _{2}^{2}\nonumber \\
 & +\sigma_{\epsilon}^{2}\left(\mathbb{E}_{\bm{x}_{*}}\|\bm{h}_{*}^{T}\|_{2}^{2}-\frac{1}{n}\text{trace \ensuremath{\left(\bm{H}^{T}\bm{H}\right)}}\right)\nonumber \\
 & +\frac{1}{n}\text{trace}\left(2\bm{H}\right)\cdot\sigma_{\epsilon}^{2}
\end{align}
which can be reduced into following familiar form 
\begin{align}
\text{Opt }R_{\bm{X}} & =\mathbb{E}_{\bm{x}_{*}}\left\Vert \mu(\bm{x}_{*})-\bm{h}_{*}^{T}\mu(\bm{X})\right\Vert _{2}^{2}-\frac{1}{n}\left\Vert \bm{\mu}(\bm{X})-\bm{H}\bm{\mu}(\bm{X})\right\Vert _{2}^{2}\nonumber \\
 & +\sigma_{\epsilon}^{2}\left(\mathbb{E}_{\bm{x}_{*}}\|\bm{h}_{*}^{T}\|_{2}^{2}-\frac{1}{n}\text{trace \ensuremath{\left(\bm{H}^{T}\bm{H}\right)}}+\frac{1}{n}\text{trace}\left(2\bm{H}\right)\right).\label{eq:optimism_decomposition}
\end{align}
where the $\text{Opt }R_{\bm{X}}$ can be split into two parts as
shown in the main text: 
\begin{align*}
\text{signal part: } & \mathbb{E}_{\bm{x}_{*}}\left\Vert \mu(\bm{x}_{*})-\bm{h}_{*}^{T}\mu(\bm{X})\right\Vert _{2}^{2}-\frac{1}{n}\left\Vert \bm{\mu}(\bm{X})-\bm{H}\bm{\mu}(\bm{X})\right\Vert _{2}^{2}\\
\text{noise part: } & \mathbb{E}_{\bm{x}_{*}}\|\bm{h}_{*}^{T}\|_{2}^{2}-\frac{1}{n}\text{trace \ensuremath{\left(\bm{H}^{T}\bm{H}\right)}}+\frac{1}{n}\text{trace}\left(2\bm{H}\right)
\end{align*}

\section{\label{sec:Proof-of-the-main-THM}Proof of Proposition \ref{prop:(Positivity)-The-testing} }

Consider an empirical risk minimization prediction rule $\hat{\mu}$
over $\mathcal{F}_{n}$, the model fitted on training data is defined
as 
\begin{equation}
\hat{\mu}_{\text{train}}=\arg\min_{f\in\mathcal{F}_{n}}\frac{1}{n_{\text{train}}}\sum_{i=1}^{n_{\text{train}}}\ell(f(\bm{x}_{i}),y_{i}).\label{eq:trained_model_on_training_data}
\end{equation}
where the loss function is taken as the $L_{2}$ loss function $\ell(x,x')=\|x-x'\|_{2}^{2}$.
If the training data $\{\bm{x}_{i},y_{i}\}_{i=1}^{n}$ and the testing
data $\left\{ \bm{x}_{*,i},y_{*,i}\right\} _{i=1}^{n}$ follow the
same distribution, then \eqref{eq:trained_model_on_training_data}
and \eqref{eq:trained_model_on_test_data} define the same solution.

We also need to define the model fitted using the testing set as follows:
\begin{align}
\hat{\mu}_{\text{test}} & =\arg\min_{f\in\mathcal{F}_{n}}\frac{1}{n}\sum_{i=1}^{n}\ell(f(\bm{x}_{*,i}),y_{*,i}).\label{eq:trained_model_on_test_data}
\end{align}

Here, we assume that the model space $\mathcal{F}_{n}$ only depends
on the training sample size $n=n_{\text{train}}$, and does not depend
on the training data $(\bm{X}_{n},\bm{y}_{n})=\{\bm{x}_{i},y_{i}\}_{i=1}^{n}$.
We want to show that its testing error $\text{Err}R_{\bm{X}}\coloneqq\mathbb{E}_{y_{*}\mid\bm{x}_{*}}\left\Vert y_{*}-\bm{x}_{*}^{T}\hat{\bm{\beta}}\right\Vert _{2}^{2}$
is no smaller than its training error $\text{Err}T_{\bm{X}}\coloneqq\frac{1}{n}\sum_{i=1}^{n}\mathbb{E}_{y_{i}\mid\bm{x}_{i}}\left\Vert y_{i}-\bm{x}_{i}^{T}\hat{\bm{\beta}}\right\Vert _{2}^{2}$,
i.e., we want to prove 
\begin{align}
\text{Err}R_{\bm{X}} & =\mathbb{E}_{\left\{ \bm{x}_{*,i},y_{*,i}\right\} _{i=1}^{n},(\bm{X}_{n},\bm{y}_{n})}\left(\frac{1}{n}\sum_{i=1}^{n}\ell(\hat{\mu}_{\text{train}}(\bm{x}_{*,i}),y_{*,i})\right)\nonumber \\
 & \geq\mathbb{E}_{(\bm{x}_{*},y_{*}),(\bm{X}_{n},\bm{y}_{n})}\frac{1}{n}\sum_{i=1}^{n}\ell(\hat{\mu}_{\text{train}}(\bm{x}_{i}),y_{i})=\text{Err}T_{\bm{X}}
\end{align}
The equality comes from the fact that we assume the same distribution
for the training and testing sets. For test data point $(\bm{x}_{*},y_{*})$,
we have 
\begin{align*}
\mathbb{E}_{(\bm{x}_{*},y_{*})}\ell(\hat{\mu}_{\text{train}}(\bm{x}_{*}),y_{*}) & =\frac{1}{n}\sum_{i=1}^{n}\mathbb{E}_{\bm{x}_{*,i},y_{*,i}}\ell(\hat{\mu}_{\text{train}}(\bm{x}_{*,i}),y_{*,i}).
\end{align*}
The equality follows from taking $n_{\text{train}}$ independent identical
copies $\bm{x}_{*,i},y_{*,i}$of $(\bm{x}_{*},y_{*})$. 
\begin{align}
\frac{1}{n}\sum_{i=1}^{n}\mathbb{E}_{\bm{x}_{*,i},y_{*,i}}\ell(\hat{\mu}_{\text{train}}(\bm{x}_{*,i}),y_{*,i}) & =\mathbb{E}_{\left\{ \bm{x}_{*,i},y_{*,i}\right\} _{i=1}^{n}}\left(\frac{1}{n}\sum_{i=1}^{n}\ell(\hat{\mu}_{\text{train}}(\bm{x}_{*,i}),y_{*,i})\right)\\
 & \geq\mathbb{E}_{\left\{ \bm{x}_{*,i},y_{*,i}\right\} _{i=1}^{n}}\left(\frac{1}{n}\sum_{i=1}^{n}\ell(\hat{\mu}_{\text{test}}(\bm{x}_{*,i}),y_{*,i})\right)\label{eq:interpolation_minimizer_inequality}\\
 & =\mathbb{E}_{\left\{ \bm{x}_{i},y_{i}\right\} _{i=1}^{n}}\left(\frac{1}{n}\sum_{i=1}^{n}\ell(\hat{\mu}_{\text{train}}(\bm{x}_{i}),y_{i})\right).\label{eq:proof_train2test}
\end{align}
The first equality comes from the fact that we assume the same distribution
for the training and testing sets. The inequality comes from the definition
of $\hat{\mu}_{\text{test}}$ in \eqref{eq:trained_model_on_training_data}
that it minimizes the loss among all possible functions in the functional
space $\mathcal{F}_{n}$. The last equality comes from the fact that
the training and testing dataset follow the same distribution and
the definitions in \eqref{eq:trained_model_on_training_data} 
.

Collecting above arguments, we have 
\begin{align*}
\mathbb{E}_{(\bm{x}_{*},y_{*})}\ell(\hat{\mu}_{\text{train}}(\bm{x}_{*}),y_{*}) & \geq\mathbb{E}_{\left\{ \bm{x}_{*,i},y_{*,i}\right\} _{i=1}^{n}}\left(\frac{1}{n}\sum_{i=1}^{n}\ell(\hat{\mu}_{\text{train}}(\bm{x}_{*,i}),y_{*,i})\right),
\end{align*}
and we can take expectation with respect to the training data $(\bm{X}_{n},\bm{y}_{n})=\{(\bm{x}_{i},y_{i})\}_{i=1}^{n}$,
yielding 
\begin{align*}
\mathbb{E}_{(\bm{x}_{*},y_{*}),(\bm{X}_{n},\bm{y}_{n})}\ell(\hat{\mu}_{\text{train}}(\bm{x}_{*}),y_{*}) & \geq\mathbb{E}_{\left\{ \bm{x}_{*,i},y_{*,i}\right\} _{i=1}^{n},(\bm{X}_{n},\bm{y}_{n})}\left(\frac{1}{n}\sum_{i=1}^{n}\ell(\hat{\mu}_{\text{test}}(\bm{x}_{*,i}),y_{*,i})\right).
\end{align*}
For the above proof to hold, we emphasize that in \eqref{eq:trained_model_on_training_data}
the functional space $\mathcal{F}_{n}$ must be the same and independent
of training and testing dataset, although they can vary with the sample
size $n=n_{\text{test}}$.

\section{\label{sec:Calculation-for}Calculation for \eqref{eq:test fun-1}}

When $k<0.5$, the calculation follows as

\begin{align*}
\left(\mathbb{E}_{\bm{X}}x\mu(x)\right)^{2} & =\left(\mathbb{E}_{\bm{X}}x\cdot\frac{0.5-k}{0.5}\max\left(0,x\right)\right)^{2}\\
 & =\left(\int_{0}^{\infty}(1-2k)x^{2}\frac{1}{\sqrt{2\pi}}\exp\left(-\frac{x^{2}}{2}\right)dx\right)^{2}=\frac{1}{4}(1-2k)^{2}\\
\mathbb{E}_{\bm{X}}x^{2}\mu(x)^{2} & =\mathbb{E}_{\bm{X}}x^{2}\left(\frac{0.5-k}{0.5}\max\left(0,x\right)\right)^{2}\\
 & =\int_{0}^{\infty}(1-2k)^{2}x^{4}\frac{1}{\sqrt{2\pi}}\exp\left(-\frac{x^{2}}{2}\right)dx=\frac{3}{2}(1-2k)^{2}\\
\mathbb{E}_{\bm{X}}x^{3}\mu(x) & =\mathbb{E}_{\bm{X}}x^{3}\left(\frac{0.5-k}{0.5}\max\left(0,x\right)\right)\\
 & =\int_{0}^{\infty}(1-2k)x^{4}\frac{1}{\sqrt{2\pi}}\exp\left(-\frac{x^{2}}{2}\right)dx=\frac{3}{2}(1-2k)\\
\mathbb{E}_{\bm{X}}x^{3}\mu(x_{i})\cdot\mathbb{E}_{\bm{X}}x^{'}\mu(x^{'}) & =\frac{3}{4}(1-2k)^{2}
\end{align*}
\begin{align*}
\mathbb{E}_{\bm{X}}\frac{n}{2\sigma_{\epsilon}^{2}}\cdot\text{Opt }R_{\bm{X}} & \asymp\frac{1}{2\sigma_{\epsilon}^{2}}\cdot\frac{3}{2}(1-2k)^{2}+1+o(1).
\end{align*}

When $k\geq0.5$, the calculation follows as 
\begin{align*}
\left(\mathbb{E}_{\bm{X}}x\mu(x)\right)^{2} & =\left(\mathbb{E}_{\bm{X}}x\cdot\frac{k-0.5}{0.5}(-x)\right)^{2}\\
 & =\left(\int_{-\infty}^{\infty}(1-2k)x^{2}\frac{1}{\sqrt{2\pi}}\exp\left(-\frac{x^{2}}{2}\right)dx\right)^{2}=(1-2k)^{2}\\
\mathbb{E}_{\bm{X}}x^{2}\mu(x)^{2} & =\mathbb{E}_{\bm{X}}x^{2}\left(\frac{0.5-k}{0.5}\max\left(0,x\right)\right)^{2}\\
 & =\int_{0}^{\infty}(1-2k)^{2}x^{4}\frac{1}{\sqrt{2\pi}}\exp\left(-\frac{x^{2}}{2}\right)dx=3(1-2k)^{2}\\
\mathbb{E}_{\bm{X}}x^{3}\mu(x) & =\mathbb{E}_{\bm{X}}x^{3}\left(\frac{0.5-k}{0.5}\max\left(0,x\right)\right)\\
 & =\int_{0}^{\infty}(1-2k)x^{4}\frac{1}{\sqrt{2\pi}}\exp\left(-\frac{x^{2}}{2}\right)dx=3(1-2k)\\
\mathbb{E}_{\bm{X}}x^{3}\mu(x_{i})\cdot\mathbb{E}_{\bm{X}}x^{'}\mu(x^{'}) & =3(1-2k)^{2}
\end{align*}
\begin{align*}
\mathbb{E}_{\bm{X}}\frac{n}{2\sigma_{\epsilon}^{2}}\cdot\text{Opt }R_{\bm{X}} & \asymp0+1+o(1).
\end{align*}

\section{\label{sec:New-Proof}Proof of Theorem \ref{thm:main theorem}}

For testing error, we know that the coefficient estimate for training
set $\bm{X}\in\mathbb{R}^{n\times d}$ and single testing point $\bm{x}_{*}\in\mathbb{R}^{d\times1}$
as a column vector. We adopt the following notations. 
\begin{align*}
\hat{\bm{\beta}} & =\left(\bm{X}^{T}\bm{X}\right)^{-1}\bm{X}^{T}\bm{y}=\hat{\bm{\Sigma}}^{-1}\hat{\bm{\eta}}\in\mathbb{R}^{d\times1},\\
\hat{\bm{\Sigma}} & =\frac{1}{n}\left(\bm{X}^{T}\bm{X}\right)\in\mathbb{R}^{d\times d},\\
\bm{\Sigma} & =\mathbb{E}_{\bm{X}}\hat{\bm{\Sigma}}=\mathbb{E}_{\bm{x}_{*}}\bm{x}_{*}\bm{x}_{*}^{T}\in\mathbb{R}^{d\times d},\\
\hat{\bm{\eta}} & =\frac{1}{n}\left(\bm{X}^{T}\bm{y}\right)\in\mathbb{R}^{d\times1},\\
\bm{\eta} & =\mathbb{E}_{\bm{X}}\hat{\bm{\eta}}=\mathbb{E}_{\bm{x}_{*}}\bm{x}_{*}y_{*}\in\mathbb{R}^{d\times1}.
\end{align*}
Here we use the $y(\bm{x}_{*})$ to denote the observed response value
$y$ at this single testing point $\bm{x}_{*}$ which is not necessarily
in the training set. Now consider an arbitrary pair $(\bm{x}_{*},y_{*})$
as an independent draw from the same distribution of $\bm{X},\bm{y}$
and the $L_{2}$ loss function: 
\begin{align}
 & \mathbb{E}_{\bm{x}_{*}}\left\Vert y_{*}-\bm{x}_{*}^{T}\hat{\bm{\beta}}\right\Vert _{2}^{2}\nonumber \\
= & \mathbb{E}_{\bm{x}_{*}}\left\Vert y_{*}-\bm{x}_{*}^{T}\bm{\Sigma}^{-1}\bm{\eta}+\bm{x}_{*}^{T}\bm{\Sigma}^{-1}\bm{\eta}-\bm{x}_{*}^{T}\hat{\bm{\beta}}\right\Vert _{2}^{2}\nonumber \\
= & \mathbb{E}_{\bm{x}_{*}}\left\Vert \left(y_{*}-\bm{x}_{*}^{T}\bm{\Sigma}^{-1}\bm{\eta}\right)+\bm{x}_{*}^{T}\left(\bm{\Sigma}^{-1}\bm{\eta}-\hat{\bm{\Sigma}}^{-1}\hat{\bm{\eta}}\right)\right\Vert _{2}^{2}\nonumber \\
= & \mathbb{E}_{\bm{x}_{*}}\left(y_{*}-\bm{x}_{*}^{T}\bm{\Sigma}^{-1}\bm{\eta}\right)^{2}+\mathbb{E}_{\bm{x}_{*}}\left[\bm{x}_{*}^{T}\left(\bm{\Sigma}^{-1}\bm{\eta}-\hat{\bm{\Sigma}}^{-1}\hat{\bm{\eta}}\right)\right]^{2}\nonumber \\
+ & 2\mathbb{E}_{\bm{x}_{*}}\left(y_{*}-\bm{x}_{*}^{T}\bm{\Sigma}^{-1}\bm{\eta}\right)^{T}\cdot\bm{x}_{*}^{T}\left(\bm{\Sigma}^{-1}\bm{\eta}-\hat{\bm{\Sigma}}^{-1}\hat{\bm{\eta}}\right)\label{eq:newp_1}
\end{align}
where we observe that in \eqref{eq:newp_1} has a quadratic term 
\begin{align}
 & \mathbb{E}_{\bm{x}_{*}}\left[\bm{x}_{*}^{T}\left(\bm{\Sigma}^{-1}\bm{\eta}-\hat{\bm{\Sigma}}^{-1}\hat{\bm{\eta}}\right)\right]^{2}\nonumber \\
= & \mathbb{E}_{\bm{x}_{*}}\left(\bm{\Sigma}^{-1}\bm{\eta}-\hat{\bm{\Sigma}}^{-1}\hat{\bm{\eta}}\right)^{T}\bm{x}_{*}\bm{x}_{*}^{T}\left(\bm{\Sigma}^{-1}\bm{\eta}-\hat{\bm{\Sigma}}^{-1}\hat{\bm{\eta}}\right)\nonumber \\
= & \left(\bm{\Sigma}^{-1}\bm{\eta}-\hat{\bm{\Sigma}}^{-1}\hat{\bm{\eta}}\right)^{T}\bm{\Sigma}\left(\bm{\Sigma}^{-1}\bm{\eta}-\hat{\bm{\Sigma}}^{-1}\hat{\bm{\eta}}\right),
\end{align}
but the cross-product term in \eqref{eq:newp_1} vanishes due to the
fact that $\left(\bm{\eta}^{T}-\bm{\eta}^{T}\bm{\Sigma}^{-1}\bm{\Sigma}\right)=0$
under expectation with respect to the new observations $(\bm{x}_{*},y_{*})$:
\begin{align}
 & \mathbb{E}_{\bm{x}_{*}}\left(y_{*}-\bm{x}_{*}^{T}\bm{\Sigma}^{-1}\bm{\eta}\right)^{T}\cdot\bm{x}_{*}^{T}\left(\bm{\Sigma}^{-1}\bm{\eta}-\hat{\bm{\Sigma}}^{-1}\hat{\bm{\eta}}\right)\nonumber \\
= & \mathbb{E}_{\bm{x}_{*}}\left(y_{*}\bm{x}_{*}^{T}-\bm{\eta}^{T}\bm{\Sigma}^{-1}\bm{x}_{*}\bm{x}_{*}^{T}\right)\left(\bm{\Sigma}^{-1}\bm{\eta}-\hat{\bm{\Sigma}}^{-1}\hat{\bm{\eta}}\right)\nonumber \\
= & \left(\mathbb{E}_{\bm{x}_{*}}\bm{x}_{*}^{T}y_{*}-\bm{\eta}^{T}\bm{\Sigma}^{-1}\mathbb{E}_{\bm{x}_{*}}\bm{x}_{*}\bm{x}_{*}^{T}\right)\left(\bm{\Sigma}^{-1}\bm{\eta}-\hat{\bm{\Sigma}}^{-1}\hat{\bm{\eta}}\right)\nonumber \\
= & \underset{=0}{\underbrace{\left(\bm{\eta}^{T}-\bm{\eta}^{T}\bm{\Sigma}^{-1}\bm{\Sigma}\right)}}\left(\bm{\Sigma}^{-1}\bm{\eta}-\hat{\bm{\Sigma}}^{-1}\hat{\bm{\eta}}\right)\nonumber \\
= & 0.
\end{align}
Therefore if we take expectation with respect to training set, the
$\mathbb{E}_{\bm{X}}$\eqref{eq:newp_1} simplifies into \textcolor{red}{{}}
\begin{align}
\mathbb{E}_{\bm{X}}\eqref{eq:newp_1} & =\mathbb{E}_{\bm{x}_{*}}\left(y_{*}-\bm{x}_{*}^{T}\bm{\Sigma}^{-1}\bm{\eta}\right)^{2}+\mathbb{E}_{\bm{X}}\left(\bm{\Sigma}^{-1}\bm{\eta}-\hat{\bm{\Sigma}}^{-1}\hat{\bm{\eta}}\right)^{T}\bm{\Sigma}\left(\bm{\Sigma}^{-1}\bm{\eta}-\hat{\bm{\Sigma}}^{-1}\hat{\bm{\eta}}\right)\label{eq:newp_1B}\\
 & =\mathbb{E}_{\bm{x}_{*}}\left(y_{*}-\bm{x}_{*}^{T}\bm{\Sigma}^{-1}\bm{\eta}\right)^{2}+\underset{(1)}{\underbrace{\mathbb{E}_{\bm{X}}\left(\hat{\bm{\eta}}-\hat{\bm{\Sigma}}\bm{\Sigma}^{-1}\bm{\eta}\right)^{T}\bm{\Sigma}^{-1}\left(\hat{\bm{\eta}}-\hat{\bm{\Sigma}}\bm{\Sigma}^{-1}\bm{\eta}\right)}}+O_{p}\left(\frac{1}{n^{2}}\right).\label{eq:newp_2}
\end{align}
The step taken in \eqref{eq:newp_2} comes from the assumption that
$\left\Vert \hat{\bm{\eta}}-\bm{\eta}\right\Vert _{2}=O_{p}\left(\frac{1}{\sqrt{n}}\right),\left\Vert \hat{\bm{\Sigma}}-\bm{\Sigma}\right\Vert _{2}=O_{p}\left(\frac{1}{\sqrt{n}}\right)$
and the following manipulation of $\bm{\Sigma}^{-1}\bm{\eta}-\hat{\bm{\Sigma}}^{-1}\hat{\bm{\eta}}$
in \eqref{eq:new_q1}. First we observe that $\hat{\bm{\Sigma}}\bm{\Sigma}^{-1}-\bm{\Sigma}\hat{\bm{\Sigma}}^{-1}=O_{p}\left(\frac{1}{n}\right)$,
then 
\begin{align}
\left(\hat{\bm{\Sigma}}^{-1}-\bm{\Sigma}^{-1}\right) & =\hat{\bm{\Sigma}}^{-1}-\hat{\bm{\Sigma}}^{-1}\hat{\bm{\Sigma}}\bm{\Sigma}^{-1}\nonumber \\
 & =\hat{\bm{\Sigma}}^{-1}-\hat{\bm{\Sigma}}^{-1}\bm{\Sigma}\hat{\bm{\Sigma}}^{-1}+O_{p}\left(\frac{1}{n}\right)\nonumber \\
 & =\hat{\bm{\Sigma}}^{-1}-\hat{\bm{\Sigma}}^{-1}\left(\hat{\bm{\Sigma}}+O_{p}\left(\frac{1}{\sqrt{n}}\right)\right)\hat{\bm{\Sigma}}^{-1}+O_{p}\left(\frac{1}{n}\right)\nonumber \\
 & =O_{p}\left(\frac{1}{\sqrt{n}}\right).\label{eq:new_q0}
\end{align}
Then, we can estimate: 
\begin{align*}
\left(\hat{\bm{\eta}}-\hat{\bm{\Sigma}}\bm{\Sigma}^{-1}\bm{\eta}\right) & =\left(\left(\bm{\eta}+O_{p}\left(\frac{1}{\sqrt{n}}\right)\right)-\hat{\bm{\Sigma}}\left(\hat{\bm{\Sigma}}^{-1}+O_{p}\left(\frac{1}{\sqrt{n}}\right)\right)\bm{\eta}\right)\\
 & =O_{p}\left(\frac{1}{\sqrt{n}}\right).
\end{align*}
\begin{align}
\bm{\Sigma}^{-1}\bm{\eta}-\hat{\bm{\Sigma}}^{-1}\hat{\bm{\eta}} & =\hat{\bm{\Sigma}}^{-1}\left(\hat{\bm{\eta}}-\hat{\bm{\Sigma}}\bm{\Sigma}^{-1}\bm{\eta}\right)\nonumber \\
 & =\bm{\Sigma}^{-1}\left(\hat{\bm{\eta}}-\hat{\bm{\Sigma}}\bm{\Sigma}^{-1}\bm{\eta}\right)+\left(\hat{\bm{\Sigma}}^{-1}-\bm{\Sigma}^{-1}\right)\left(\hat{\bm{\eta}}-\hat{\bm{\Sigma}}\bm{\Sigma}^{-1}\bm{\eta}\right)\nonumber \\
 & =\bm{\Sigma}^{-1}\left(\hat{\bm{\eta}}-\hat{\bm{\Sigma}}\bm{\Sigma}^{-1}\bm{\eta}\right)+O_{p}\left(\frac{1}{n}\right).\label{eq:new_q1}
\end{align}
Now part (1) in \eqref{eq:newp_2} can be expanded using another arbitrary
pair $(\bm{x}_{*},y_{*})$ as an independent copy of $\bm{X},\bm{y}$:
\begin{align}
(1) & =\mathbb{E}_{\bm{X}}\left(\hat{\bm{\eta}}-\hat{\bm{\Sigma}}\bm{\Sigma}^{-1}\bm{\eta}\right)^{T}\bm{\Sigma}^{-1}\left(\hat{\bm{\eta}}-\hat{\bm{\Sigma}}\bm{\Sigma}^{-1}\bm{\eta}\right)\nonumber \\
 & =\mathbb{E}_{\bm{X}}\left[\hat{\bm{\eta}}-\frac{1}{n}\left(\bm{X}^{T}\bm{X}\right)\bm{\Sigma}^{-1}\bm{\eta}\right]^{T}\bm{\Sigma}^{-1}\left[\hat{\bm{\eta}}-\frac{1}{n}\left(\bm{X}^{T}\bm{X}\right)\bm{\Sigma}^{-1}\bm{\eta}\right]\nonumber \\
 & =\frac{1}{n}\mathbb{E}_{\bm{x}_{*}}\left[\bm{x}_{*}y_{*}-\left(\bm{x}_{*}\bm{x}_{*}^{T}\right)\bm{\Sigma}^{-1}\bm{\eta}\right]^{T}\bm{\Sigma}^{-1}\left[\bm{x}_{*}y_{*}-\left(\bm{x}_{*}\bm{x}_{*}^{T}\right)\bm{\Sigma}^{-1}\bm{\eta}\right]\nonumber \\
 & =\frac{1}{n}\mathbb{E}_{\bm{x}_{*}}\left(y_{*}-\bm{x}_{*}^{T}\bm{\Sigma}^{-1}\bm{\eta}\right)^{T}\left(\bm{x}_{*}^{T}\bm{\Sigma}^{-1}\bm{x}_{*}\right)\left(y_{*}-\bm{x}_{*}^{T}\bm{\Sigma}^{-1}\bm{\eta}\right)\nonumber \\
 & =\frac{1}{n}\mathbb{E}_{\bm{x}_{*}}\left(y_{*}-\bm{x}_{*}^{T}\bm{\Sigma}^{-1}\bm{\eta}\right)^{2}\left(\bm{x}_{*}^{T}\bm{\Sigma}^{-1}\bm{x}_{*}\right)\label{eq:newp_2A}
\end{align}
\textcolor{red}{{} }To sum up, plugging \eqref{eq:newp_2A} back
into \ref{eq:newp_2}: 
\begin{align}
\mathbb{E}_{\bm{X}}\mathbb{E}_{\bm{x}_{*}}\left\Vert y_{*}-\bm{x}_{*}^{T}\hat{\bm{\beta}}\right\Vert _{2}^{2} & =\mathbb{E}_{\bm{x}_{*}}\left(y_{*}-\bm{x}_{*}^{T}\bm{\Sigma}^{-1}\bm{\eta}\right)^{2}+\frac{1}{n}\mathbb{E}_{\bm{x}_{*}}\left(y_{*}-\bm{x}_{*}^{T}\bm{\Sigma}^{-1}\bm{\eta}\right)^{2}\left(\bm{x}_{*}^{T}\bm{\Sigma}^{-1}\bm{x}_{*}\right)+O_{p}\left(\frac{1}{n^{2}}\right).\label{eq:new_p_test_err}
\end{align}

For training error, we recall the definition of hat matrix $\bm{H}=\bm{X}\left(\bm{X}^{T}\bm{X}\right)^{-1}\bm{X}^{T}=\bm{X}\hat{\bm{\Sigma}}^{-1}\bm{X}^{T}$
and $\bm{H}^{T}\bm{H}=\bm{H}$, and take yet another arbitrary pair
$(\bm{x}_{*},y_{*})$ as an independent copy of $\bm{X},\bm{y}$:
\begin{align}
 & \frac{1}{n}\mathbb{E}_{\bm{X}}\left\Vert \bm{y}-\bm{X}\hat{\bm{\beta}}\right\Vert _{2}^{2}\nonumber \\
= & \frac{1}{n}\mathbb{E}_{\bm{X}}\bm{y}^{T}\left(\bm{I}-\bm{H}\right)\bm{y}\nonumber \\
= & \frac{1}{n}\mathbb{E}_{\bm{X}}\left(\bm{y}^{T}\bm{y}-n\cdot\hat{\bm{\eta}}^{T}\hat{\bm{\Sigma}}^{-1}\hat{\bm{\eta}}\right)\nonumber \\
= & \mathbb{E}_{\bm{x}_{*}}y_{*}^{2}-\mathbb{E}_{\bm{X}}\hat{\bm{\eta}}^{T}\hat{\bm{\Sigma}}^{-1}\hat{\bm{\eta}}\label{eq:newp_3}\\
= & \mathbb{E}_{\bm{x}_{*}}\left[\left(y_{*}-\bm{x}_{*}^{T}\bm{\Sigma}^{-1}\bm{\eta}\right)^{2}+2y_{*}\cdot\bm{x}_{*}^{T}\bm{\Sigma}^{-1}\bm{\eta}-\left(\bm{x}_{*}^{T}\bm{\Sigma}^{-1}\bm{\eta}\right)^{2}\right]\nonumber \\
 & -\mathbb{E}_{\bm{X}}\hat{\bm{\eta}}^{T}\hat{\bm{\Sigma}}^{-1}\hat{\bm{\eta}}.\label{eq:newp_4}
\end{align}
We can compute the expectation $\mathbb{E}_{\bm{x}_{*}}$ in \eqref{eq:newp_4},
where 
\begin{align*}
\mathbb{E}_{\bm{x}_{*}}\left(\bm{x}_{*}^{T}\bm{\Sigma}^{-1}\bm{\eta}\right)^{2} & =\mathbb{E}_{\bm{x}_{*}}\bm{x}_{*}^{T}\bm{\Sigma}^{-1}\bm{\eta}\bm{\eta}^{T}\bm{\Sigma}^{-1}\bm{x}_{*}\\
 & =\text{trace}\left(\bm{\Sigma}^{-1}\bm{\eta}\bm{\eta}^{T}\bm{\Sigma}^{-1}\cdot\bm{\Sigma}\right)+\mathbb{E}_{\bm{x}_{*}}\bm{x}_{*}\left(\bm{\Sigma}^{-1}\bm{\eta}\bm{\eta}^{T}\bm{\Sigma}^{-1}\right)\mathbb{E}_{\bm{x}_{*}}\bm{x}_{*}\\
 & =\bm{\eta}^{T}\bm{\Sigma}^{-1}\bm{\eta}.
\end{align*}
Then noticing that $\mathbb{E}_{\bm{x}_{*}}2y_{*}\cdot\bm{x}_{*}^{T}\bm{\Sigma}^{-1}\bm{\eta}=2\bm{\eta}^{T}\bm{\Sigma}^{-1}\bm{\eta}$
we can simplify 
\begin{align}
\eqref{eq:newp_4} & =\mathbb{E}_{\bm{x}_{*}}\left(y_{*}-\bm{x}_{*}^{T}\bm{\Sigma}^{-1}\bm{\eta}\right)^{2}+\bm{\eta}^{T}\bm{\Sigma}^{-1}\bm{\eta}-\underset{(2)}{\underbrace{\mathbb{E}_{\bm{X}}\hat{\bm{\eta}}^{T}\hat{\bm{\Sigma}}^{-1}\hat{\bm{\eta}}}}\label{eq:newp_5}
\end{align}
Now we want to compute the term (2) in \eqref{eq:newp_5}: 
\begin{align*}
(2)= & \mathbb{E}_{\bm{X}}\hat{\bm{\eta}}^{T}\hat{\bm{\Sigma}}^{-1}\hat{\bm{\eta}}\\
= & \mathbb{E}_{\bm{X}}\hat{\bm{\eta}}^{T}\hat{\bm{\Sigma}}^{-1}\hat{\bm{\Sigma}}\hat{\bm{\Sigma}}^{-1}\hat{\bm{\eta}}\\
= & \mathbb{E}_{\bm{X}}\left(\hat{\bm{\Sigma}}^{-1}\hat{\bm{\eta}}-\bm{\Sigma}^{-1}\bm{\eta}+\bm{\Sigma}^{-1}\bm{\eta}\right)^{T}\hat{\bm{\Sigma}}\left(\hat{\bm{\Sigma}}^{-1}\hat{\bm{\eta}}-\bm{\Sigma}^{-1}\bm{\eta}+\bm{\Sigma}^{-1}\bm{\eta}\right)\\
= & \mathbb{E}_{\bm{X}}\left(\hat{\bm{\Sigma}}^{-1}\hat{\bm{\eta}}-\bm{\Sigma}^{-1}\bm{\eta}\right)^{T}\hat{\bm{\Sigma}}\left(\hat{\bm{\Sigma}}^{-1}\hat{\bm{\eta}}-\bm{\Sigma}^{-1}\bm{\eta}\right)+\mathbb{E}_{\bm{X}}\bm{\eta}^{T}\bm{\Sigma}^{-1}\hat{\bm{\Sigma}}\bm{\Sigma}^{-1}\bm{\eta}\\
+ & 2\mathbb{E}_{\bm{X}}\bm{\eta}^{T}\bm{\Sigma}^{-1}\hat{\bm{\Sigma}}\left(\hat{\bm{\Sigma}}^{-1}\hat{\bm{\eta}}-\bm{\Sigma}^{-1}\bm{\eta}\right)\text{ and we use }\eqref{eq:new_q1},\\
= & \mathbb{E}_{\bm{X}}\left(\hat{\bm{\Sigma}}^{-1}\hat{\bm{\eta}}-\bm{\Sigma}^{-1}\bm{\eta}\right)^{T}\hat{\bm{\Sigma}}\left(\hat{\bm{\Sigma}}^{-1}\hat{\bm{\eta}}-\bm{\Sigma}^{-1}\bm{\eta}\right)+\mathbb{E}_{\bm{X}}\bm{\eta}^{T}\bm{\Sigma}^{-1}\hat{\bm{\Sigma}}\bm{\Sigma}^{-1}\bm{\eta}\\
+ & 2\bm{\eta}^{T}\left(\bm{I}+O_{p}\left(\frac{1}{\sqrt{n}}\right)\right)\left(\bm{\Sigma}^{-1}\underset{=0}{\underbrace{\mathbb{E}_{\bm{X}}\left(\hat{\bm{\eta}}-\hat{\bm{\Sigma}}\bm{\Sigma}^{-1}\bm{\eta}\right)}}+O_{p}\left(\frac{1}{n}\right)\right)\\
= & \mathbb{E}_{\bm{X}}\left(\hat{\bm{\Sigma}}^{-1}\hat{\bm{\eta}}-\bm{\Sigma}^{-1}\bm{\eta}\right)^{T}\hat{\bm{\Sigma}}\left(\hat{\bm{\Sigma}}^{-1}\hat{\bm{\eta}}-\bm{\Sigma}^{-1}\bm{\eta}\right)+\bm{\eta}^{T}\bm{\Sigma}^{-1}\bm{\eta}\\
+ & 2\mathbb{E}_{\bm{X}}\bm{\eta}^{T}\bm{\Sigma}^{-1}\hat{\bm{\Sigma}}\left[\bm{\Sigma}^{-1}\left(\hat{\bm{\eta}}-\hat{\bm{\Sigma}}\bm{\Sigma}^{-1}\bm{\eta}\right)\right]+O_{p}\left(\frac{1}{n^{3/2}}\right)\\
= & \mathbb{E}_{\bm{X}}\left(\hat{\bm{\Sigma}}^{-1}\hat{\bm{\eta}}-\bm{\Sigma}^{-1}\bm{\eta}\right)^{T}\hat{\bm{\Sigma}}\left(\hat{\bm{\Sigma}}^{-1}\hat{\bm{\eta}}-\bm{\Sigma}^{-1}\bm{\eta}\right)+\bm{\eta}^{T}\bm{\Sigma}^{-1}\bm{\eta}+O_{p}\left(\frac{1}{n^{3/2}}\right)\\
= & \mathbb{E}_{\bm{X}}\underset{=O_{p}\left(\frac{1}{n}\right)\text{ by }\eqref{eq:new_q1}}{\underbrace{\left(\hat{\bm{\Sigma}}^{-1}\hat{\bm{\eta}}-\bm{\Sigma}^{-1}\bm{\eta}\right)^{T}}}\left(\bm{\Sigma}\underset{=O_{p}\left(\frac{1}{\sqrt{n}}\right)\text{ by }\eqref{eq:new_q0}}{\underbrace{-\bm{\Sigma}+\hat{\bm{\Sigma}}}}\right)\underset{=O_{p}\left(\frac{1}{n}\right)\text{ by }\eqref{eq:new_q1}}{\underbrace{\left(\hat{\bm{\Sigma}}^{-1}\hat{\bm{\eta}}-\bm{\Sigma}^{-1}\bm{\eta}\right)^{T}}}+\bm{\eta}^{T}\bm{\Sigma}^{-1}\bm{\eta}+O_{p}\left(\frac{1}{n^{3/2}}\right)\\
= & \mathbb{E}_{\bm{X}}\left(\hat{\bm{\Sigma}}^{-1}\hat{\bm{\eta}}-\bm{\Sigma}^{-1}\bm{\eta}\right)^{T}\bm{\Sigma}\left(\hat{\bm{\Sigma}}^{-1}\hat{\bm{\eta}}-\bm{\Sigma}^{-1}\bm{\eta}\right)+\bm{\eta}^{T}\bm{\Sigma}^{-1}\bm{\eta}+O_{p}\left(\frac{1}{n^{3/2}}\right)\\
= & \bm{\eta}^{T}\bm{\Sigma}^{-1}\bm{\eta}+\frac{1}{n}\mathbb{E}_{\bm{x}_{*}}\left(y_{*}-\bm{x}_{*}^{T}\bm{\Sigma}^{-1}\bm{\eta}\right)^{2}\left(\bm{x}_{*}^{T}\bm{\Sigma}^{-1}\bm{x}_{*}\right)+O_{p}\left(\frac{1}{n^{3/2}}\right)\\
\end{align*}
where the last line follows the same argument as in \eqref{eq:newp_2A}:
\begin{align}
\frac{1}{n}\mathbb{E}_{\bm{X}}\left\Vert \bm{y}-\bm{X}\hat{\bm{\beta}}\right\Vert _{2}^{2} & =\mathbb{E}_{\bm{x}_{*}}\left(y_{*}-\bm{x}_{*}^{T}\bm{\Sigma}^{-1}\bm{\eta}\right)^{2}-\frac{1}{n}\mathbb{E}_{\bm{x}_{*}}\left(y_{*}-\bm{x}_{*}^{T}\bm{\Sigma}^{-1}\bm{\eta}\right)^{2}\left(\bm{x}_{*}^{T}\bm{\Sigma}^{-1}\bm{x}_{*}\right)\label{eq:new_p_train_err}\\
 & +O_{p}\left(\frac{1}{n^{3/2}}\right).\nonumber 
\end{align}

From \eqref{eq:def opt}, we take the difference between testing and
training error as optimism of the model: 
\begin{align*}
\mathbb{E}_{\bm{X}}\text{Opt }R_{\bm{X}} & \coloneqq\eqref{eq:new_p_test_err}-\eqref{eq:new_p_train_err}\\
 & =2\mathbb{E}_{\bm{x}_{*}}\left(y_{*}-\bm{x}_{*}^{T}\bm{\Sigma}^{-1}\bm{\eta}\right)^{2}\left(\bm{x}_{*}^{T}\bm{\Sigma}^{-1}\bm{x}_{*}\right)+O_{p}\left(\frac{1}{n^{3/2}}\right).
\end{align*}
\begin{align*}
\frac{n\mathbb{E}_{\bm{X}}\text{Opt }R_{\bm{X}}}{2\sigma_{\epsilon}^{2}}\sim & \frac{1}{\sigma_{\epsilon}^{2}}\cdot\mathbb{E}_{\bm{x}_{*}}\left(y_{*}-\bm{x}_{*}^{T}\bm{\Sigma}^{-1}\bm{\eta}\right)^{2}\left(\bm{x}_{*}^{T}\bm{\Sigma}^{-1}\bm{x}_{*}\right)+O\left(\frac{1}{n^{1/2}}\right)\\
\sim & \frac{1}{\sigma_{\epsilon}^{2}}\cdot\mathbb{E}_{\bm{x}_{*}}\left\Vert y_{*}-\bm{x}_{*}^{T}\bm{\Sigma}^{-1}\bm{\eta}\right\Vert ^{2}\left\Vert \bm{\Sigma}^{-1/2}\bm{x}_{*}\right\Vert ^{2}+O\left(\frac{1}{n^{1/2}}\right)
\end{align*}

Statement: Consider model $y_{*}=\mu(\bm{x}_{*})+\epsilon$ with an
independent additive noise $\mathbb{E}\bm{\epsilon}=0$ and a linear
function $\mu(\bm{x}_{*})=\bm{x}_{*}^{T}\bm{w}$ in $\bm{x}_{*}$.
Then, 
\begin{align*}
\frac{n\mathbb{E}_{\bm{X}}\text{Opt }R_{\bm{X}}}{2\sigma_{\epsilon}^{2}}\sim & \frac{1}{\sigma_{\epsilon}^{2}}\cdot\mathbb{E}_{\bm{x}_{*},\epsilon}\left(\mu(\bm{x}_{*})+\epsilon-\bm{x}_{*}^{T}\bm{\Sigma}^{-1}\bm{\eta}\right)^{2}\left(\bm{x}_{*}^{T}\bm{\Sigma}^{-1}\bm{x}_{*}\right)+O\left(\frac{1}{n^{1/2}}\right)\\
\sim & \frac{1}{\sigma_{\epsilon}^{2}}\cdot\mathbb{E}_{\bm{x}_{*},\epsilon}\left[\epsilon^{2}+\left(\mu(\bm{x}_{*})-\bm{x}_{*}^{T}\bm{\Sigma}^{-1}\bm{\eta}\right)^{2}\right]\left(\bm{x}_{*}^{T}\bm{\Sigma}^{-1}\bm{x}_{*}\right)+O\left(\frac{1}{n^{1/2}}\right)
\end{align*}
But we can observe that $\epsilon^{2}\sim\chi^{2}(1)$, then 
\begin{align*}
\mathbb{E}_{\bm{x}_{*},\epsilon}\epsilon^{2}\left(\bm{x}_{*}^{T}\bm{\Sigma}^{-1}\bm{x}_{*}\right) & =\mathbb{E}_{\epsilon}\epsilon^{2}\mathbb{E}_{\bm{x}_{*}}\left(\bm{x}_{*}^{T}\bm{\Sigma}^{-1}\bm{x}_{*}\right)\\
 & =1\cdot\left\{ \left(\mathbb{E}_{\bm{x}_{*}}\bm{x}_{*}\right)^{T}\cdot\bm{\Sigma}^{-1}\cdot\left(\mathbb{E}_{\bm{x}_{*}}\bm{x}_{*}\right)+\text{trace}\left(\bm{\Sigma}^{-1}Var\bm{x}_{*}\right)\right\} \\
 & =1\cdot(0+d)=d,
\end{align*}
therefore 
\begin{align*}
\frac{n\mathbb{E}_{\bm{X}}\text{Opt }R_{\bm{X}}}{2\sigma_{\epsilon}^{2}} & \sim\frac{1}{\sigma_{\epsilon}^{2}}\cdot\mathbb{E}_{\bm{x}_{*}}\left\Vert \mu(\bm{x}_{*})-\bm{x}_{*}^{T}\bm{\Sigma}^{-1}\bm{\eta}\right\Vert ^{2}\left\Vert \bm{\Sigma}^{-1/2}\bm{x}_{*}\right\Vert ^{2}+d+O\left(\frac{1}{n^{1/2}}\right).
\end{align*}
When there is an intercept term, we can repeat the above arguments
with augmented $\bm{X},\bm{x}_{*}$ (augmented by 1) and yield the
same result with $d$ replaced by $d+1$.

\newpage{}

\section{\label{sec:Proof-of-Corollary-reducing}Proof of Corollary \ref{cor:When-reducing-to-classical} }
\begin{proof}
Plug in the $y_{*}=y(\bm{x}_{*})=m(\bm{x}_{*})+\bm{\epsilon}$ back
into \eqref{eq:general_opt_expression}, we can take expectation first
with respect to $(\bm{x}_{*},\bm{\epsilon})$ (they are independent):
\begin{align*}
 & \mathbb{E}\left\Vert y_{*}-\bm{x}_{*}^{T}\bm{\Sigma}^{-1}\bm{\eta}\right\Vert _{2}^{2}\left\Vert \bm{\Sigma}^{-1/2}\bm{x}_{*}\right\Vert _{2}^{2}\\
 & =\mathbb{E}\left\Vert m(\bm{x}_{*})+\bm{\epsilon}-\bm{x}_{*}^{T}\bm{\Sigma}^{-1}\bm{\eta}\right\Vert _{2}^{2}\left\Vert \bm{\Sigma}^{-1/2}\bm{x}_{*}\right\Vert _{2}^{2}\\
 & =\mathbb{E}\left(\left\Vert m(\bm{x}_{*})-\bm{x}_{*}^{T}\bm{\Sigma}^{-1}\bm{\eta}\right\Vert _{2}^{2}+2\bm{\epsilon}^{T}\left(m(\bm{x}_{*})-\bm{x}_{*}^{T}\bm{\Sigma}^{-1}\bm{\eta}\right)+\left\Vert \bm{\epsilon}\right\Vert _{2}^{2}\right)\left\Vert \bm{\Sigma}^{-1/2}\bm{x}_{*}\right\Vert _{2}^{2}\\
 & =\mathbb{E}\left(\left\Vert m(\bm{x}_{*})-\bm{x}_{*}^{T}\bm{\Sigma}^{-1}\bm{\eta}\right\Vert _{2}^{2}+0+\sigma_{\epsilon}^{2}\cdot d\right)\left\Vert \bm{\Sigma}^{-1/2}\bm{x}_{*}\right\Vert _{2}^{2}
\end{align*}
where the last line follows from the fact that $\left\Vert \bm{\epsilon}\right\Vert _{2}^{2}$
is a chi-square distribution with degree of freedom $d$. 
\end{proof}

\section{\label{sec:Proof-of-Corollary}Proof of Corollary \ref{cor:When-1d-normal-1}}
\begin{proof}
Take $\bm{\xi}=0,\bm{\Sigma}=1$ and $d=1$ in \eqref{eq:general_opt_expression},
with the independent standard normal random variables $Z\sim N(0,1)$
$\left(Z^{2}\mathbb{E}Z\mu(Z)-Z\mu(Z)\right)^{2}$ 
\begin{align*}
\frac{n\mathbb{E}_{\bm{X}}\text{Opt }R_{\bm{X}}}{2\sigma_{\epsilon}^{2}}\sim & \frac{n}{\sigma_{\epsilon}^{2}}\cdot\mathbb{E}\left\Vert Z^{2}\mathbb{E}Z\mu(Z)-Z\mu(Z)\right\Vert _{2}^{2}+1+O\left(\frac{1}{n^{1/2}}\right)\\
= & \frac{n}{\sigma_{\epsilon}^{2}}\cdot\mathbb{E}\left\{ Z^{2}\mu(Z)^{2}-2Z^{3}\mu(Z)\mathbb{E}Z\mu(Z)+Z^{4}\left(\mathbb{E}Z\mu(Z)\right)^{2}\right\} \\
 & +1+O\left(\frac{1}{n^{1/2}}\right)\\
= & \frac{n}{\sigma_{\epsilon}^{2}}\cdot\left[\mathbb{E}Z^{2}\mu(Z)^{2}-2\mathbb{E}Z^{3}\mu(Z)\cdot\mathbb{E}Z\mu(Z)+\underset{=3}{\underbrace{\mathbb{E}Z^{4}}}\left(\mathbb{E}Z\mu(Z)\right)^{2}\right]+1+O\left(\frac{1}{n^{1/2}}\right)
\end{align*}
\end{proof}

\section{\label{sec:Computations-of-Special}Proof of Corollary \ref{cor:analytic_signal to linear model-1}}
\begin{proof}
We follow the same procedure 
\begin{align*}
\left(\mathbb{E}_{\bm{X}}x\mu(x)\right)^{2} & =\left(\mathbb{E}_{\bm{X}}\sum_{i=0}^{\infty}A_{i}x^{i+1}\right)^{2}=\left(\sum_{i\neq1}^{\infty}A_{i}\mathbb{E}_{\bm{X}}x^{i+1}+A_{1}\mathbb{E}_{\bm{X}}x^{2}\right)^{2}\\
\mathbb{E}_{\bm{X}}x^{2}\mu(x)^{2} & =\mathbb{E}_{\bm{X}}x^{2}\left(\sum_{i=0}^{\infty}A_{i}x^{i+1}\right)^{2}=\mathbb{E}_{\bm{X}}\left(\sum_{i=0,j=0}^{\infty}A_{i}A_{j}x^{i+j+2}\right)\\
 & =\mathbb{E}_{\bm{X}}\left(\left(\sum_{i\neq1}^{\infty}A_{i}\mathbb{E}_{\bm{X}}x^{i+1}\right)\left(\sum_{j\neq1}^{\infty}A_{j}\mathbb{E}_{\bm{X}}x^{j+1}\right)+2A_{1}\left(\sum_{j\neq1}^{\infty}A_{j}x^{j+3}\right)+2A_{1}^{2}x^{4}\right)\\
\mathbb{E}_{\bm{X}}x^{3}\mu(x) & =\mathbb{E}_{\bm{X}}\left(\sum_{i=0}^{\infty}A_{i}\mathbb{E}_{\bm{X}}x^{i+3}\right)=\sum_{i\neq1}^{\infty}A_{i}\mathbb{E}_{\bm{X}}x^{i+3}+A_{1}\mathbb{E}_{\bm{X}}x^{4}\\
\mathbb{E}_{\bm{X}}x^{3}\mu(x)\cdot\mathbb{E}_{\bm{X}}x^{'}\mu(x^{'}) & =\left(\sum_{i\neq1}^{\infty}A_{i}\mathbb{E}_{\bm{X}}x^{i+3}+A_{1}\mathbb{E}_{\bm{X}}x^{4}\right)\left(\sum_{i\neq1}^{\infty}A_{i}\mathbb{E}_{\bm{X}}x^{i+1}+A_{1}\mathbb{E}_{\bm{X}}x^{2}\right)
\end{align*}
Then, 
\begin{align*}
 & \mathbb{E}_{\bm{X}}\frac{n}{2\sigma_{\epsilon}^{2}}\cdot\text{Opt }R_{\bm{X}}\\
 & \asymp\frac{1}{2\sigma_{\epsilon}^{2}}\left\{ 6\left(\mathbb{E}_{\bm{X}}x_{i}\mu(x_{i})\right)^{2}+2\mathbb{E}_{\bm{X}}x_{i}^{2}\mu(x_{i})^{2}-4\mathbb{E}_{\bm{X}}x_{i}^{3}\mu(x_{i})\cdot\mathbb{E}_{\bm{X}}x_{\ell}\mu(x_{\ell})\right\} +1+o(1)\\
 & \asymp\frac{1}{2\sigma_{\epsilon}^{2}}\left\{ 6\cdot\left(\left(\sum_{i\neq1}^{\infty}A_{i}\mathbb{E}_{\bm{X}}x^{i+1}\right)^{2}+2A_{1}\left(\sum_{i\neq1}^{\infty}A_{i}\mathbb{E}_{\bm{X}}x^{i+1}\right)+A_{1}^{2}\right)\right.\\
 & +2\cdot\left(\mathbb{E}_{\bm{X}}\left(\sum_{i\neq1}^{\infty}A_{i}x^{i+1}\right)\left(\sum_{j\neq1}^{\infty}A_{j}x^{j+1}\right)+2\mathbb{E}_{\bm{X}}A_{1}x^{2}\left(\sum_{j\neq1}^{\infty}A_{j}x^{j+1}\right)+A_{1}^{2}\cdot\mathbb{E}_{\bm{X}}x^{4}\right)\\
 & -4\cdot\left(\mathbb{E}_{\bm{X}}\left(\sum_{i\neq1}^{\infty}A_{i}x^{i+3}\right)\mathbb{E}_{\bm{X}}\left(\sum_{j\neq1}^{\infty}A_{j}x^{j+1}\right)+\mathbb{E}_{\bm{X}}\left(\sum_{i\neq1}^{\infty}A_{i}x^{i+3}\right)\mathbb{E}_{\bm{X}}A_{1}x^{2}+\right.\\
 & -\left.\left.\mathbb{E}_{\bm{X}}A_{1}x^{4}\mathbb{E}_{\bm{X}}\left(\sum_{j\neq1}^{\infty}A_{j}\mathbb{E}_{\bm{X}}x^{j+1}\right)+\mathbb{E}_{\bm{X}}A_{1}x^{4}\mathbb{E}_{\bm{X}}A_{1}x^{2}\right)\right\} +1+o(1).
\end{align*}
\begin{align*}
 & \asymp\frac{1}{2\sigma_{\epsilon}^{2}}\left\{ 6\cdot\left(\left(\sum_{i\neq1}^{\infty}A_{i}\mathbb{E}_{\bm{X}}x^{i+1}\right)^{2}+2A_{1}\left(\sum_{i\neq1}^{\infty}A_{i}\mathbb{E}_{\bm{X}}x^{i+1}\right)+A_{1}^{2}\right)\right.\\
 & +2\cdot\left(\mathbb{E}_{\bm{X}}\left(\sum_{i\neq1}^{\infty}\sum_{j\neq1}^{\infty}A_{i}A_{j}x^{i+j+2}\right)+2A_{1}\mathbb{E}_{\bm{X}}\left(\sum_{j\neq1}^{\infty}A_{j}x^{j+3}\right)+3A_{1}^{2}\right)\\
 & -4\cdot\left(\mathbb{E}_{\bm{X}}\left(\sum_{i\neq1}^{\infty}A_{i}x^{i+3}\right)\mathbb{E}_{\bm{X}}\left(\sum_{j\neq1}^{\infty}A_{j}x^{j+1}\right)+\mathbb{E}_{\bm{X}}\left(\sum_{i\neq1}^{\infty}A_{i}x^{i+3}\right)A_{1}\right.\\
 & \left.+\left.3A_{1}\mathbb{E}_{\bm{X}}\left(\sum_{j\neq1}^{\infty}A_{j}\mathbb{E}_{\bm{X}}x^{j+1}\right)+3A_{1}^{2}\right)\right\} +1+o(1).
\end{align*}
Therefore, 
\begin{align*}
 & F(A_{i},i\neq1)\\
 & =6\left(\sum_{i\neq1}^{\infty}A_{i}\mathbb{E}_{\bm{X}}x^{i+1}\right)^{2}+2\left(\sum_{i\neq1}^{\infty}\sum_{j\neq1}^{\infty}A_{i}A_{j}\mathbb{E}_{\bm{X}}x^{i+j+2}\right)\\
 & -4\left(\sum_{i\neq1}^{\infty}A_{i}\mathbb{E}_{\bm{X}}x^{i+3}\right)\left(\sum_{j\neq1}^{\infty}A_{j}\mathbb{E}_{\bm{X}}x^{j+1}\right)\\
 & =6\left(\sum_{i\neq1}^{\infty}\sum_{j\neq1}^{\infty}A_{i}A_{j}\mathbb{E}_{\bm{X}}x^{i+1}\mathbb{E}_{\bm{X}}x^{j+1}\right)+2\left(\sum_{i\neq1}^{\infty}\sum_{j\neq1}^{\infty}A_{i}A_{j}\mathbb{E}_{\bm{X}}x^{i+j+2}\right)\\
 & -4\left(\sum_{i\neq1}^{\infty}\sum_{j\neq1}^{\infty}A_{i}A_{j}\mathbb{E}_{\bm{X}}x^{i+3}\mathbb{E}_{\bm{X}}x^{j+1}\right),
\end{align*}
by Stein's Lemma (See also Remark \ref{rem:If-Stein}), $\mathbb{E}_{\bm{X}}x^{i+3}=\mathbb{E}_{\bm{X}}x\cdot x^{i+2}=\mathbb{E}_{\bm{X}}(i+2)x^{i+1},$
\begin{align}
 & =\sum_{i\neq1}^{\infty}\sum_{j\neq1}^{\infty}\left(6A_{i}A_{j}-4A_{i}A_{j}(i+2)\right)\mathbb{E}_{\bm{X}}x^{i+1}\mathbb{E}_{\bm{X}}x^{j+1}+2A_{i}A_{j}\mathbb{E}_{\bm{X}}x^{i+j+2}\nonumber \\
 & =\sum_{i\neq1}^{\infty}\sum_{j\neq1}^{\infty}\left[\left(-2-4i\right)\mathbb{E}_{\bm{X}}x^{i+1}\mathbb{E}_{\bm{X}}x^{j+1}+2\mathbb{E}_{\bm{X}}x^{i+j+2}\right]A_{i}A_{j}\label{eq:F_A_i_moment_format-1}\\
 & =\sum_{i\neq1}^{\infty}\sum_{j\neq1}^{\infty}\left[\left(-4i\right)\mathbb{E}_{\bm{X}}x^{i+1}\mathbb{E}_{\bm{X}}x^{j+1}+2Cov\left(x^{i+1},x^{j+1}\right)\right]A_{i}A_{j}\nonumber \\
 & =\sum_{i\neq1}^{\infty}\left\{ \left[\left(-4i\right)\left(\mathbb{E}_{\bm{X}}x^{i+1}\right)^{2}+2Cov\left(x^{i+1},x^{i+1}\right)\right]A_{i}^{2}\right.\label{eq:F_A_i}\\
 & \left.+\sum_{j\neq1,i}^{\infty}\left[\left(-4i\right)\mathbb{E}_{\bm{X}}x^{i+1}\mathbb{E}_{\bm{X}}x^{j+1}+2Cov\left(x^{i+1},x^{j+1}\right)\right]A_{i}A_{j}\right\} .\nonumber 
\end{align}
This finishes the proof. 
\end{proof}

\section{\label{sec:Computational-Examples-for}Computational Examples using
Corollary \ref{cor:analytic_signal to linear model-1}}
\begin{example}
\label{exa:(Polynomial-signal)-When-1}(Polynomial signal) When $\mu(x)=A_{3}x^{3}+A_{2}x^{2}+A_{1}x^{1}+A_{0},x\sim N(0,1)$

\begin{align*}
\left(\mathbb{E}_{\bm{X}}x\mu(x)\right)^{2} & =\left(\mathbb{E}_{\bm{X}}A_{3}x^{4}+A_{2}x^{3}+A_{1}x^{2}+A_{0}x\right)^{2}=\left(3A_{3}+A_{1}\right)^{2}\\
\mathbb{E}_{\bm{X}}x^{2}\mu(x)^{2} & =\mathbb{E}_{\bm{X}}x^{2}\left(A_{3}^{2}x^{6}+A_{2}^{2}x^{4}+A_{1}^{2}x^{2}+A_{0}^{2}+2A_{3}A_{2}x^{5}+2A_{3}A_{1}x^{4}\right.\\
 & \left.+2A_{3}A_{0}x^{3}+2A_{2}A_{1}x^{3}+2A_{2}A_{0}x^{2}+2A_{1}A_{0}x^{1}\right)\\
 & =105A_{3}^{2}+15A_{2}^{2}+3A_{1}^{2}+A_{0}^{2}+30A_{3}A_{1}+6A_{2}A_{0}\\
\mathbb{E}_{\bm{X}}x_{i}^{3}\mu(x_{i}) & =\mathbb{E}_{\bm{X}}\left[A_{3}x^{6}+A_{2}x^{5}+A_{1}x^{4}+A_{0}x^{3}\right]=15A_{3}+3A_{1}\\
\mathbb{E}_{\bm{X}}x^{3}\mu(x)\cdot\mathbb{E}_{\bm{X}}x^{'}\mu(x^{'}) & =\left(15A_{3}+3A_{1}\right)\left(3A_{3}+A_{1}\right)=45A_{3}^{2}+24A_{3}A_{1}+3A_{1}^{2}
\end{align*}
\begin{align}
 & \mathbb{E}_{\bm{X}}\frac{n}{2\sigma_{\epsilon}^{2}}\cdot\text{Opt }R_{\bm{X}}\nonumber \\
 & \asymp\frac{1}{2\sigma_{\epsilon}^{2}}\left\{ 6\left(9A_{3}^{2}+6A_{3}A_{1}+A_{1}^{2}\right)+2\left(105A_{3}^{2}+15A_{2}^{2}+3A_{1}^{2}+A_{0}^{2}+30A_{3}A_{1}+6A_{2}A_{0}\right)\right.\nonumber \\
 & \left.-4\cdot\left(45A_{3}^{2}+24A_{3}A_{1}+3A_{1}^{2}\right)\right\} +1+o(1)\nonumber \\
 & \asymp\frac{1}{2\sigma_{\epsilon}^{2}}\cdot\left(2A_{0}^{2}+\underset{\text{exceeding terms caused by mis-specification}}{\underbrace{30A_{2}^{2}+84A_{3}^{2}+12A_{0}A_{2}}}\right)+1+o(1)\label{eq:polynomial_signal_term-1}
\end{align}

And $G(\mu,P_{\bm{X}})=g(A_{0},A_{1},A_{2},A_{3})=2A_{0}^{2}+30A_{2}^{2}+84A_{3}^{2}+12A_{0}A_{2}$. 
\end{example}

\begin{example}
\label{exa:Dirac signal}(Exponential signal) When $\mu(x)=\exp\left(-a(x-b)^{2}\right),x\sim N(0,1)$,
we have: 
\begin{align}
\left(\mathbb{E}_{\bm{X}}x\mu(x)\right)^{2}=\int_{\mathbb{R}}\frac{1}{\sqrt{2\pi}}\exp\left(-\frac{x^{2}}{2}\right)\times x\mu(x) & dx=\int_{\mathbb{R}}\frac{1}{\sqrt{2\pi}}\exp\left(-\frac{x^{2}}{2}\right)\times x\mu(x)dx\nonumber \\
 & =\int_{\mathbb{R}}\frac{1}{\sqrt{2\pi}}\exp\left(-\frac{x^{2}}{2}\right)\times x\exp\left(-a(x-b)^{2}\right)dx\nonumber \\
 & =\frac{abe^{-\frac{ab^{2}}{a+1}}}{\sqrt{2}(1+a)^{3/2}}\label{eq:exp1}
\end{align}
\begin{align}
\mathbb{E}_{\bm{X}}x^{2}\mu(x)^{2}=\int_{\mathbb{R}}\frac{1}{\sqrt{2\pi}}\exp\left(-\frac{x^{2}}{2}\right)\times x^{2}\mu(x)^{2} & dx=\int_{\mathbb{R}}\frac{1}{\sqrt{2\pi}}\exp\left(-\frac{x^{2}}{2}\right)\times x^{2}\mu(x)^{2}dx\nonumber \\
 & =\int_{\mathbb{R}}\frac{1}{\sqrt{2\pi}}\exp\left(-\frac{x^{2}}{2}\right)\times x^{2}\exp\left(-2a(x-b)^{2}\right)dx\nonumber \\
 & =\frac{(1+2a+8a^{2}b^{2})e^{-\frac{2ab^{2}}{2a+1}}}{2\sqrt{2}(1+2a)^{5/2}}\label{eq:exp2}
\end{align}
\begin{align}
\mathbb{E}_{\bm{X}}x^{3}\mu(x)=\int_{\mathbb{R}}\frac{1}{\sqrt{2\pi}}\exp\left(-\frac{x^{2}}{2}\right)\times x^{3}\mu(x) & dx=\int_{\mathbb{R}}\frac{1}{\sqrt{2\pi}}\exp\left(-\frac{x^{2}}{2}\right)\times x^{3}\mu(x)dx\nonumber \\
 & =\int_{\mathbb{R}}\frac{1}{\sqrt{2\pi}}\exp\left(-\frac{x^{2}}{2}\right)\times x^{3}\exp\left(-a(x-b)^{2}\right)dx\nonumber \\
 & =\frac{ab(3+3a+2a^{2}b^{2})e^{-\frac{ab^{2}}{a+1}}}{2\sqrt{2}(1+a)^{7/2}}\label{eq:exp3}
\end{align}
\begin{align*}
 & \mathbb{E}_{\bm{X}}\frac{n}{2\sigma_{\epsilon}^{2}}\cdot\text{Opt }R_{\bm{X}}\\
 & \asymp\frac{1}{2\sigma_{\epsilon}^{2}}\cdot\left(\frac{3a^{2}b^{2}}{(1+a)^{3}}e^{-\frac{2ab^{2}}{1+a}}+\frac{1+2a^{2}+8a^{2}b^{2}}{\sqrt{2}(1+2a)^{5/2}}e^{-\frac{2ab^{2}}{1+2a}}+\frac{a^{2}b^{2}(2+a(3+2ab^{2}))}{(1+a)^{5}}e^{-\frac{2ab^{2}}{1+a}}\right)+1+o(1)
\end{align*}
And $G(\mu,P_{\bm{X}})=g(a,b)=$\eqref{eq:exp1}+\eqref{eq:exp2}+\eqref{eq:exp3}. 
\end{example}

\section{\label{sec:Proof-of-Theoremlowrank}Proof of Theorem \ref{thm:main theorem-low-rank}}
\begin{proof}
From \eqref{eq:general_opt_expression}, we know that with a rank
$k$ approximation $\bm{\Sigma}_{k}$ to the matrix $\bm{\Sigma}$,
it becomes 
\begin{align}
 & \mathbb{E}_{\bm{X}}\text{Opt }R_{\bm{X}}\nonumber \\
 & =\frac{2}{n}\mathbb{E}_{\bm{X}}\left[\mathbb{E}_{\bm{x}_{*}}{\color{blue}\left\Vert y_{*}-\bm{x}_{*}^{T}\bm{\Sigma}_{k}^{-1}\bm{\eta}+\bm{x}_{*}^{T}\left[\bm{\Sigma}_{k}^{-1}-\bm{\Sigma}^{-1}\right]\bm{\eta}\right\Vert _{2}^{2}}{\color{red}\left(\bm{x}_{*}^{T}\left[\bm{\Sigma}_{k}^{-1}-\bm{\Sigma}_{k}^{-1}+\bm{\Sigma}^{-1}\right]\bm{x}_{*}\right)}\right]+O_{p}\left(\frac{1}{n^{3/2}}\right)\\
 & =\frac{2}{n}\mathbb{E}_{\bm{X}}\mathbb{E}_{\bm{x}_{*}}\left[{\color{blue}\left(\left\Vert y_{*}-\bm{x}_{*}^{T}\bm{\Sigma}_{k}^{-1}\bm{\eta}\right\Vert _{2}^{2}+2\left(\bm{x}_{*}^{T}\left[\bm{\Sigma}_{k}^{-1}-\bm{\Sigma}^{-1}\right]\bm{\eta}\right)\left(y_{*}-\bm{x}_{*}^{T}\bm{\Sigma}_{k}^{-1}\bm{\eta}\right)+\left\Vert \bm{x}_{*}^{T}\left[\bm{\Sigma}_{k}^{-1}-\bm{\Sigma}^{-1}\right]\bm{\eta}\right\Vert _{2}^{2}\right)}\right.\label{eq:EKY1}\\
 & \left.\cdot{\color{red}\left(\left(\bm{x}_{*}^{T}\bm{\Sigma}_{k}^{-1}\bm{x}_{*}\right)+\left(\bm{x}_{*}^{T}\left[\bm{\Sigma}^{-1}-\bm{\Sigma}_{k}^{-1}\right]\bm{x}_{*}\right)\right)}\right]+O_{p}\left(\frac{1}{n^{3/2}}\right)\nonumber 
\end{align}
Then, we suppose that $\bm{\Sigma}=\bm{U}\bm{\Lambda}\bm{V}^{T}$
is the singular value decomposition with orthogonal matrices $\bm{U},\bm{V}$,
$\bm{\Lambda}=\text{diag}(\sigma_{1},\cdots,\sigma_{d})$ such that
$\sigma_{1}\geq\cdots\geq\sigma_{d}\geq0$. Then $\bm{\Sigma}^{-1}=\bm{V}^{-T}\bm{\Lambda}^{-1}\bm{U}^{-1}$
and by Eckhart-Young theorem we have that with an optimal rank $k$
approximation to the original matrix $\bm{\Sigma}$, $\left\Vert \bm{\Sigma}_{k}-\bm{\Sigma}\right\Vert _{2}\geq\sigma_{k+1}$
and $\left\Vert \bm{\Sigma}_{k}^{-1}-\bm{\Sigma}^{-1}\right\Vert _{2}\leq\sigma_{k+1}^{-1}$.
Then \eqref{eq:EKY1} becomes 
\begin{align*}
 & \eqref{eq:EKY1}\\
\leq & \frac{2}{n}\mathbb{E}_{\bm{X}}\mathbb{E}_{\bm{x}_{*}}\left[{\color{blue}\left(\left\Vert y_{*}-\bm{x}_{*}^{T}\bm{\Sigma}_{k}^{-1}\bm{\eta}\right\Vert _{2}^{2}+2\sigma_{k+1}^{-1}\left(\bm{x}_{*}^{T}\bm{\eta}\right)\left(y_{*}-\bm{x}_{*}^{T}\bm{\Sigma}_{k}^{-1}\bm{\eta}\right)+\sigma_{k+1}^{-2}\left\Vert \bm{x}_{*}^{T}\bm{\eta}\right\Vert _{2}^{2}\right)}\right.\\
 & \left.\cdot{\color{red}\left(\bm{x}_{*}^{T}\left(\bm{\Sigma}_{k}^{-1}+\sigma_{k+1}^{-1}\bm{I}\right)\bm{x}_{*}\right)}\right]+O_{p}\left(\frac{1}{n^{3/2}}\right)\\
= & \frac{2}{n}\mathbb{E}_{\bm{X}}\left[{\color{blue}\left(\mathbb{E}_{\bm{x}_{*}}\left\Vert y_{*}-\bm{x}_{*}^{T}\left[\bm{\Sigma}_{k}^{-1}+\sigma_{k+1}^{-1}\bm{I}\right]\bm{\eta}\right\Vert _{2}^{2}\right)}\cdot{\color{red}\left(\bm{x}_{*}^{T}\left(\bm{\Sigma}_{k}^{-1}+\sigma_{k+1}^{-1}\bm{I}\right)\bm{x}_{*}\right)}\right]+O_{p}\left(\frac{1}{n^{3/2}}\right)
\end{align*}
$ $ 
\end{proof}

\section{\label{sec:Proof-of-Theorem ridge}Proof of Theorem \ref{thm:main theorem-ridge-2}}

\textbf{Assumptions A2. } Let $\hat{\bm{\eta}}=\frac{1}{n}\bm{X}^{T}\bm{y}(\bm{X})=\frac{1}{n}\bm{X}^{T}\bm{y}$
and $\hat{\bm{\Sigma}}_{\lambda}=\frac{1}{n}\left(\bm{X}^{T}\bm{X}+\lambda\bm{I}\right)\in\mathbb{R}^{d\times d}$
for a fixed positive $\lambda$. We assume that 
\begin{equation}
\left\Vert \hat{\bm{\eta}}-\bm{\eta}\right\Vert _{2}=O_{p}\left(\frac{1}{\sqrt{n}}\right),\left\Vert \hat{\bm{\Sigma}}_{\lambda}-\bm{\Sigma}_{\lambda}\right\Vert _{2}=O_{p}\left(\frac{1}{\sqrt{n}}\right)\label{eq:assumption_A1-1}
\end{equation}
where $\bm{\eta}=\mathbb{E}_{\bm{x}_{*}}\bm{x}_{*}y(\bm{x}_{*})=\mathbb{E}_{\bm{x}_{*}}\bm{x}_{*}y_{*}$
and $\bm{\Sigma}_{\lambda}=\mathbb{E}_{\bm{x}_{*}}(\bm{x}_{*}\bm{x}_{*}^{T}+\lambda\bm{I})$. 
\begin{thm}
\label{thm:main theorem-ridge-appendix} Under Assumption A2, we can
write down the errors as

\begin{align*}
\mathbb{E}_{\bm{X}}\text{Err}R_{\bm{X}} & =\mathbb{E}_{\bm{x}_{*}}\left\Vert y_{*}-\bm{x}_{*}^{T}\hat{\bm{\beta}}\right\Vert _{2}^{2}\\
 & =\mathbb{E}_{\bm{x}_{*}}\left(y_{*}-\bm{x}_{*}^{T}\bm{\Sigma}_{\lambda}^{-1}\bm{\eta}\right)^{2}\\
 & +\frac{1}{n}\mathbb{E}_{\bm{x}_{*}}\left\Vert \bm{\Sigma}^{1/2}\bm{\Sigma}_{\lambda}^{-1}\left[\hat{\bm{\eta}}-\hat{\bm{\Sigma}}_{\lambda}\bm{\Sigma}_{\lambda}^{-1}\bm{\eta}\right]\right\Vert _{2}^{2}\\
 & +2\mathbb{E}_{\bm{X}}\left(\bm{\eta}^{T}-\bm{\eta}^{T}\bm{\Sigma}_{\lambda}^{-1}\bm{\Sigma}\right)\left(\bm{\Sigma}_{\lambda}^{-1}\bm{\eta}-\hat{\bm{\Sigma}}_{\lambda}^{-1}\hat{\bm{\eta}}\right)+O_{p}\left(\frac{1}{n^{3/2}}\right).\\
\mathbb{E}_{\bm{X}}\text{Err}T_{\bm{X}} & =\frac{1}{n}\mathbb{E}_{\bm{X}}\left\Vert \bm{y}-\bm{X}\hat{\bm{\beta}}\right\Vert _{2}^{2}\\
 & =\frac{1}{n}\mathbb{E}_{\bm{X}}\left(\bm{y}-\bm{X}\bm{\Sigma}_{\lambda}^{-1}\bm{\eta}\right)^{2}\\
 & -\frac{1}{n}\mathbb{E}_{\bm{X}}\left\Vert \sqrt{\bm{X}^{T}\bm{X}}\bm{\Sigma}_{\lambda}^{-1}\left[\hat{\bm{\eta}}-\hat{\bm{\Sigma}}_{\lambda}\bm{\Sigma}_{\lambda}^{-1}\bm{\eta}\right]\right\Vert _{2}^{2}\\
 & +2\mathbb{E}_{\bm{X}}\left(\hat{\bm{\eta}}^{T}-\bm{\eta}^{T}\bm{\Sigma}_{\lambda}^{-1}\hat{\bm{\Sigma}}\right)\left(\bm{\Sigma}_{\lambda}^{-1}\bm{\eta}-\hat{\bm{\Sigma}}_{\lambda}^{-1}\hat{\bm{\eta}}\right)+O_{p}\left(\frac{1}{n^{3/2}}\right).\\
\end{align*}
The expected random optimism for the least squares estimator is 
\begin{align}
\mathbb{E}_{\bm{X}}\text{Opt }R_{\bm{X}} & =\frac{2}{n}\mathbb{E}_{\bm{x}_{*}}\left\Vert \bm{\Sigma}^{1/2}\bm{\Sigma}_{\lambda}^{-1}\left[\hat{\bm{\eta}}-\hat{\bm{\Sigma}}_{\lambda}\bm{\Sigma}_{\lambda}^{-1}\bm{\eta}\right]\right\Vert _{2}^{2}\label{eq:general_opt_expression-2}\\
 & +\frac{1}{n}\mathbb{E}_{\bm{X}}\left\Vert \sqrt{\bm{X}^{T}\bm{X}}\bm{\Sigma}_{\lambda}^{-1}\left[\hat{\bm{\eta}}-\hat{\bm{\Sigma}}_{\lambda}\bm{\Sigma}_{\lambda}^{-1}\bm{\eta}\right]\right\Vert _{2}^{2}\nonumber \\
 & +O_{p}\left(\frac{1}{n^{3/2}}\right)\label{eq:easy-to-read-ridge}
\end{align}
\end{thm}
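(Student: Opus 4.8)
The plan is to transcribe the ordinary-least-squares argument of Appendix~\ref{sec:New-Proof} with $\bm{\Sigma}$ replaced by the ridge-shifted moment matrix $\bm{\Sigma}_{\lambda}=\bm{\Sigma}+\lambda\bm{I}$, comparing the fitted coefficient $\hat{\bm{\beta}}=\hat{\bm{\Sigma}}_{\lambda}^{-1}\hat{\bm{\eta}}$ against the population ridge pseudo-truth $\bm{\beta}_{\lambda}\coloneqq\bm{\Sigma}_{\lambda}^{-1}\bm{\eta}$. I would first record that Assumption~A1 implies A2 because the $\lambda\bm{I}$ terms cancel in $\hat{\bm{\Sigma}}_{\lambda}-\bm{\Sigma}_{\lambda}=\hat{\bm{\Sigma}}-\bm{\Sigma}$, and that $\bm{\Sigma}_{\lambda}$ is invertible for every $\lambda>0$ even when $\bm{\Sigma}$ is singular. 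The two perturbation estimates that drive the computation are the ridge analogues of \eqref{eq:new_q0} and \eqref{eq:new_q1}: $\hat{\bm{\Sigma}}_{\lambda}^{-1}-\bm{\Sigma}_{\lambda}^{-1}=-\bm{\Sigma}_{\lambda}^{-1}(\hat{\bm{\Sigma}}_{\lambda}-\bm{\Sigma}_{\lambda})\bm{\Sigma}_{\lambda}^{-1}+O_{p}(n^{-1})=O_{p}(n^{-1/2})$ and $\bm{\beta}_{\lambda}-\hat{\bm{\beta}}=-\bm{\Sigma}_{\lambda}^{-1}\bm{w}+O_{p}(n^{-1})$, where $\bm{w}\coloneqq\hat{\bm{\eta}}-\hat{\bm{\Sigma}}_{\lambda}\bm{\beta}_{\lambda}=(\hat{\bm{\eta}}-\bm{\eta})-(\hat{\bm{\Sigma}}-\bm{\Sigma})\bm{\beta}_{\lambda}=O_{p}(n^{-1/2})$, the constant part vanishing since $\bm{\eta}-\bm{\Sigma}_{\lambda}\bm{\beta}_{\lambda}=\bm{0}$. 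Every $O_{p}(n^{-1})$ quantity is then reduced to a single-observation expectation by the independence identity used at \eqref{eq:newp_2A}.

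For the testing error I would insert $\pm\bm{x}_{*}^{T}\bm{\beta}_{\lambda}$ inside $\mathbb{E}_{\bm{x}_{*}}\|y_{*}-\bm{x}_{*}^{T}\hat{\bm{\beta}}\|_{2}^{2}$, splitting it into the irreducible term $\mathbb{E}_{\bm{x}_{*}}(y_{*}-\bm{x}_{*}^{T}\bm{\beta}_{\lambda})^{2}$, a quadratic term $(\bm{\beta}_{\lambda}-\hat{\bm{\beta}})^{T}\bm{\Sigma}(\bm{\beta}_{\lambda}-\hat{\bm{\beta}})$ obtained after integrating $\bm{x}_{*}$ against $\mathbb{E}\bm{x}_{*}\bm{x}_{*}^{T}=\bm{\Sigma}$, and a cross term. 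The decisive departure from OLS is that the cross term does not vanish: the population orthogonality relation is now $\bm{\eta}-\bm{\Sigma}\bm{\beta}_{\lambda}=\lambda\bm{\Sigma}_{\lambda}^{-1}\bm{\eta}\neq\bm{0}$, so it persists as the explicit third summand $2\mathbb{E}_{\bm{X}}(\bm{\eta}^{T}-\bm{\eta}^{T}\bm{\Sigma}_{\lambda}^{-1}\bm{\Sigma})(\bm{\Sigma}_{\lambda}^{-1}\bm{\eta}-\hat{\bm{\Sigma}}_{\lambda}^{-1}\hat{\bm{\eta}})$ in the stated $\mathbb{E}_{\bm{X}}\text{Err}R_{\bm{X}}$. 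Feeding $\bm{\beta}_{\lambda}-\hat{\bm{\beta}}=-\bm{\Sigma}_{\lambda}^{-1}\bm{w}+O_{p}(n^{-1})$ into the quadratic and applying the independence identity turns it into the $\bm{\Sigma}^{1/2}\bm{\Sigma}_{\lambda}^{-1}$-weighted term.

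For the training error I would insert $\pm\bm{X}\bm{\beta}_{\lambda}$ in $\tfrac{1}{n}\mathbb{E}_{\bm{X}}\|\bm{y}-\bm{X}\hat{\bm{\beta}}\|_{2}^{2}$. Two features replace the OLS shortcuts. The empirical metric here is the \emph{unregularized} $\tfrac{1}{n}\bm{X}^{T}\bm{X}=\hat{\bm{\Sigma}}\to\bm{\Sigma}$, which is why the quadratic is carried as $\tfrac{1}{n}\|\sqrt{\bm{X}^{T}\bm{X}}\,\bm{\Sigma}_{\lambda}^{-1}\bm{w}\|_{2}^{2}$, equal to $(\bm{\beta}_{\lambda}-\hat{\bm{\beta}})^{T}\hat{\bm{\Sigma}}(\bm{\beta}_{\lambda}-\hat{\bm{\beta}})+O_{p}(n^{-3/2})$; and the ridge residual is not orthogonal to the column space of $\bm{X}$, since the normal equations read $\bm{X}^{T}(\bm{y}-\bm{X}\hat{\bm{\beta}})=\lambda\hat{\bm{\beta}}$, equivalently $\hat{\bm{\eta}}-\hat{\bm{\Sigma}}\bm{\beta}_{\lambda}=-\hat{\bm{\Sigma}}(\bm{\beta}_{\lambda}-\hat{\bm{\beta}})+\tfrac{\lambda}{n}\hat{\bm{\beta}}$. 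Hence the training cross term survives and equals the stated $2\mathbb{E}_{\bm{X}}(\hat{\bm{\eta}}^{T}-\bm{\eta}^{T}\bm{\Sigma}_{\lambda}^{-1}\hat{\bm{\Sigma}})(\bm{\Sigma}_{\lambda}^{-1}\bm{\eta}-\hat{\bm{\Sigma}}_{\lambda}^{-1}\hat{\bm{\eta}})$; replacing $\hat{\bm{\Sigma}}_{\lambda}\to\bm{\Sigma}_{\lambda}$ and $\hat{\bm{\Sigma}}\to\bm{\Sigma}$ at cost $O_{p}(n^{-3/2})$ yields the displayed $\mathbb{E}_{\bm{X}}\text{Err}T_{\bm{X}}$.

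Finally, the optimism is the difference of the two. The irreducible terms $\mathbb{E}_{\bm{x}_{*}}(y_{*}-\bm{x}_{*}^{T}\bm{\beta}_{\lambda})^{2}$ cancel identically; the $\bm{\Sigma}$-weighted testing quadratic and the $\hat{\bm{\Sigma}}$-weighted training quadratic combine, using $\hat{\bm{\Sigma}}=\bm{\Sigma}+O_{p}(n^{-1/2})$ to align their weights to the retained order, into the factor-$2$ term \eqref{eq:general_opt_expression-2} together with the residual training quadratic \eqref{eq:easy-to-read-ridge}, while the two cross terms are merged through the identity $(\hat{\bm{\eta}}-\bm{\eta})-(\hat{\bm{\Sigma}}-\bm{\Sigma})\bm{\Sigma}_{\lambda}^{-1}\bm{\eta}=\hat{\bm{\eta}}-\hat{\bm{\Sigma}}_{\lambda}\bm{\Sigma}_{\lambda}^{-1}\bm{\eta}=\bm{w}$, which holds because $(\lambda\bm{I}+\bm{\Sigma})\bm{\Sigma}_{\lambda}^{-1}\bm{\eta}=\bm{\eta}$. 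I expect the bookkeeping of these non-vanishing cross terms to be the main obstacle: once $\lambda>0$, neither the population orthogonality $\bm{\eta}-\bm{\Sigma}\bm{\beta}_{\lambda}=\bm{0}$ nor the sample normal-equation orthogonality holds, so each cross term is genuinely of order $n^{-1}$ --- the same order retained in the answer --- and one must verify that the several $O_{p}(n^{-1})$ pieces recombine exactly through $\bm{\Sigma}_{\lambda}=\bm{\Sigma}+\lambda\bm{I}$ rather than being discarded as lower order; this is precisely the step that becomes vacuous at $\lambda=0$ and collapses the statement back to Theorem~\ref{thm:main theorem}.
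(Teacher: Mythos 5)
Your skeleton is the same as the paper's (Appendix \ref{sec:Proof-of-Theorem ridge}): expand both errors around the pseudo-truth $\bm{\beta}_{\lambda}=\bm{\Sigma}_{\lambda}^{-1}\bm{\eta}$, control the estimation error through $\bm{\beta}_{\lambda}-\hat{\bm{\beta}}=-\hat{\bm{\Sigma}}_{\lambda}^{-1}\bm{w}$ with $\bm{w}=\hat{\bm{\eta}}-\hat{\bm{\Sigma}}_{\lambda}\bm{\Sigma}_{\lambda}^{-1}\bm{\eta}=O_{p}(n^{-1/2})$, $\mathbb{E}_{\bm{X}}\bm{w}=\bm{0}$, and reduce expected quadratic forms in $\bm{w}$ to single-observation expectations. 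Your treatment of $\mathbb{E}_{\bm{X}}\text{Err}R_{\bm{X}}$, including the key point that the cross term survives because $\bm{\eta}-\bm{\Sigma}\bm{\beta}_{\lambda}=\lambda\bm{\Sigma}_{\lambda}^{-1}\bm{\eta}\neq\bm{0}$, is correct and matches the theorem.

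The gap sits in the training error and the final assembly --- exactly the step you defer as ``the main obstacle.'' With $\hat{\bm{\Sigma}}=\frac{1}{n}\bm{X}^{T}\bm{X}$, expanding around $\bm{\beta}_{\lambda}$ gives the exact identity
\begin{equation*}
\frac{1}{n}\left\Vert \bm{y}-\bm{X}\hat{\bm{\beta}}\right\Vert _{2}^{2}=\frac{1}{n}\left\Vert \bm{y}-\bm{X}\bm{\beta}_{\lambda}\right\Vert _{2}^{2}+2\left(\hat{\bm{\eta}}-\hat{\bm{\Sigma}}\bm{\beta}_{\lambda}\right)^{T}\left(\bm{\beta}_{\lambda}-\hat{\bm{\beta}}\right)+\left(\bm{\beta}_{\lambda}-\hat{\bm{\beta}}\right)^{T}\hat{\bm{\Sigma}}\left(\bm{\beta}_{\lambda}-\hat{\bm{\beta}}\right),
\end{equation*}
i.e.\ your cross term \emph{plus} the quadratic. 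The displayed $\mathbb{E}_{\bm{X}}\text{Err}T_{\bm{X}}$ carries the same cross term but the quadratic with a \emph{minus} sign, so your expansion and the display differ by twice the quadratic, which is $O_{p}(1/n)$ --- precisely the order being retained. The minus sign is legitimate in the OLS case (Theorem \ref{thm:main theorem}) only because there the cross term \emph{equals} $-2\times$ the quadratic exactly, by the normal equations; for $\lambda>0$ one has $\hat{\bm{\eta}}-\hat{\bm{\Sigma}}\bm{\beta}_{\lambda}=\bm{w}+\lambda\bm{\beta}_{\lambda}$, so the cross term equals $-2\bm{w}^{T}\hat{\bm{\Sigma}}_{\lambda}^{-1}\bm{w}-2\lambda\bm{\beta}_{\lambda}^{T}\hat{\bm{\Sigma}}_{\lambda}^{-1}\bm{w}$, which is \emph{not} $-2\times$ the quadratic. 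Consequently your sentence ``replacing $\hat{\bm{\Sigma}}_{\lambda}\to\bm{\Sigma}_{\lambda}$ and $\hat{\bm{\Sigma}}\to\bm{\Sigma}$ yields the displayed $\mathbb{E}_{\bm{X}}\text{Err}T_{\bm{X}}$'' is a non sequitur: one cannot keep the cross term and also flip the sign of the quadratic.

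The same missing bookkeeping undoes the optimism step. Subtracting your two decompositions, the irreducible terms cancel, the two quadratics (weights $\bm{\Sigma}$ versus $\hat{\bm{\Sigma}}$) cancel up to $O_{p}(n^{-3/2})$, and since $(\bm{\eta}-\bm{\Sigma}\bm{\beta}_{\lambda})-(\hat{\bm{\eta}}-\hat{\bm{\Sigma}}\bm{\beta}_{\lambda})=-\bm{w}$, the optimism collapses exactly to the cross-term difference
\begin{equation*}
\mathbb{E}_{\bm{X}}\text{Opt }R_{\bm{X}}=2\,\mathbb{E}_{\bm{X}}\left[\bm{w}^{T}\hat{\bm{\Sigma}}_{\lambda}^{-1}\bm{w}\right]+O_{p}\left(\frac{1}{n^{3/2}}\right),
\end{equation*}
an order-$1/n$ quantity weighted by $\bm{\Sigma}_{\lambda}^{-1}$. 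This is not the displayed combination \eqref{eq:general_opt_expression-2}--\eqref{eq:easy-to-read-ridge}, whose two terms carry weight $\bm{\Sigma}_{\lambda}^{-1}\bm{\Sigma}\bm{\Sigma}_{\lambda}^{-1}$ with total coefficient three (indeed, at $\lambda=0$ the two displayed terms coincide, so the display totals three times the common quadratic, whereas Theorem \ref{thm:main theorem} gives twice). Whether the non-vanishing cross terms can be traded for the displayed quadratics is the entire substance of the theorem, and it is where the paper's proof concentrates its effort: its term (3) in \eqref{eq:ridge_opt_general-2-1} is exactly this cross-term difference, and the long expansion \eqref{eq:inter-1}--\eqref{eq:inter-cross} is devoted to estimating it. Your proposal has no counterpart to that analysis; asserting that the cross terms ``are merged through the identity'' and that the $O_{p}(n^{-1})$ pieces ``recombine exactly'' restates the claim to be proven rather than proving it. Flagging the obstacle, as your final sentence does, is not the same as overcoming it.
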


\begin{rem}
(Positivity) The red part in \eqref{eq:easy-to-read-ridge} is analogous
to $\left\Vert y_{*}-\bm{x}_{*}^{T}\bm{\Sigma}^{-1}\bm{\eta}\right\Vert _{2}^{2}\left(\bm{x}_{*}^{T}\bm{\Sigma}^{-1}\bm{x}_{*}\right)$
in \eqref{eq:general_opt_expression} and remains positive regardless
of the choice of $\lambda$. Now note that $\Sigma_{\lambda_{1}}\succeq\Sigma_{\lambda_{2}}$
for $\lambda_{1}\geq\lambda_{2}$ (i.e., $\Sigma_{\lambda_{1}}-\Sigma_{\lambda_{2}}$
is positive definite) we assume that $0<\text{\ensuremath{\underbar{\ensuremath{\lambda}}}}\leq\lambda\leq\bar{\lambda}<\infty$,
then the blue parts in \eqref{eq:easy-to-read-ridge} consist of $\bm{\eta}^{T}\bm{\Sigma}_{\lambda}^{-1}\left(\bm{\Sigma}-\bm{\Sigma}_{\lambda}\right)\bm{\Sigma}_{\lambda}^{-1}\bm{\eta}\geq\bm{\eta}^{T}\bm{\Sigma}_{\bar{\lambda}}^{-1}\left(\bm{\Sigma}-\bm{\Sigma}_{\bar{\lambda}}\right)\bm{\Sigma}_{\text{\ensuremath{\bar{\lambda}}}}^{-1}\bm{\eta}$.
Therefore, $0<\text{\ensuremath{\underbar{\ensuremath{\lambda}}}}\leq\lambda$
is a sufficient condition to ensure positive optimism under Assumption
A2. 
\end{rem}

\textbf{Proof of Theorem \ref{thm:main theorem-ridge-appendix} (Theorem
\ref{thm:main theorem-ridge-2} in the main text):}

Parallel to the proof of Theorem \eqref{thm:main theorem} in Appendix
\eqref{sec:Proof-of-the-main-THM} Testing Error: We know that the
coefficient estimate for training set $\bm{X}\in\mathbb{R}^{n\times d}$
and single testing point $\bm{x}_{*}\in\mathbb{R}^{d\times1}$ as
a column vector. 
\begin{align*}
\hat{\bm{\beta}} & =\left(\bm{X}^{T}\bm{X}+\lambda\bm{I}\right)^{-1}\bm{X}^{T}\bm{y}=\hat{\bm{\Sigma}}_{\lambda}^{-1}\hat{\bm{\eta}},\\
\hat{\bm{\Sigma}}_{\lambda} & =\frac{1}{n}\left(\left(\begin{array}{cc}
\bm{X}^{T} & \sqrt{n\lambda}\bm{I}\end{array}\right)\left(\begin{array}{c}
\bm{X}\\
\sqrt{n\lambda}\bm{I}
\end{array}\right)\right)=\frac{1}{n}\left(\bm{X}^{T}\bm{X}+n\lambda\bm{I}\right)=\frac{1}{n}\bm{X}^{T}\bm{X}+\lambda\bm{I}\in\mathbb{R}^{d\times d},\\
\bm{\Sigma}_{\lambda} & =\mathbb{E}_{\bm{X}}\hat{\bm{\Sigma}}_{\lambda}=\mathbb{E}_{\bm{x}_{*}}\bm{x}_{*}\bm{x}_{*}^{T}+\lambda\bm{I}\in\mathbb{R}^{d\times d},
\end{align*}
Here we use the $y(\bm{x}_{*})$ to denote the observed values at
this single testing point $\bm{x}_{*}$ which is not necessarily in
the training set. Now consider an arbitrary pair $(\bm{x}_{*},y_{*})$
as an independent draw from the distribution of $\bm{X},\bm{y}$ and
the $L_{2}$ loss function: 
\begin{align}
 & \mathbb{E}_{\bm{x}_{*}}\left\Vert y_{*}-\bm{x}_{*}^{T}\hat{\bm{\beta}}\right\Vert _{2}^{2}\nonumber \\
 & =\mathbb{E}_{\bm{x}_{*}}\left\Vert y_{*}-\bm{x}_{*}^{T}\bm{\Sigma}_{\lambda}^{-1}\bm{\eta}+\bm{x}_{*}^{T}\bm{\Sigma}_{\lambda}^{-1}\bm{\eta}-\bm{x}_{*}^{T}\hat{\bm{\beta}}\right\Vert _{2}^{2}\nonumber \\
 & =\mathbb{E}_{\bm{x}_{*}}\left\Vert \left(y_{*}-\bm{x}_{*}^{T}\bm{\Sigma}_{\lambda}^{-1}\bm{\eta}\right)+\bm{x}_{*}^{T}\left(\bm{\Sigma}_{\lambda}^{-1}\bm{\eta}-\hat{\bm{\Sigma}}_{\lambda}^{-1}\hat{\bm{\eta}}\right)\right\Vert _{2}^{2}\nonumber \\
 & =\mathbb{E}_{\bm{x}_{*}}\left(y_{*}-\bm{x}_{*}^{T}\bm{\Sigma}_{\lambda}^{-1}\bm{\eta}\right)^{2}\nonumber \\
 & +\mathbb{E}_{\bm{x}_{*}}\left[\bm{x}_{*}^{T}\left(\bm{\Sigma}_{\lambda}^{-1}\bm{\eta}-\hat{\bm{\Sigma}}_{\lambda}^{-1}\hat{\bm{\eta}}\right)\right]^{2}\nonumber \\
 & +2\mathbb{E}_{\bm{x}_{*}}\left(y_{*}-\bm{x}_{*}^{T}\bm{\Sigma}_{\lambda}^{-1}\bm{\eta}\right)^{T}\cdot\bm{x}_{*}^{T}\left(\bm{\Sigma}_{\lambda}^{-1}\bm{\eta}-\hat{\bm{\Sigma}}_{\lambda}^{-1}\hat{\bm{\eta}}\right)\label{eq:newp_ridge_1-1}
\end{align}
where we observe that in \eqref{eq:newp_ridge_1-1}: 
\begin{align}
 & \mathbb{E}_{\bm{x}_{*}}\left[\bm{x}_{*}^{T}\left(\bm{\Sigma}_{\lambda}^{-1}\bm{\eta}-\hat{\bm{\Sigma}}_{\lambda}^{-1}\hat{\bm{\eta}}\right)\right]^{2}\nonumber \\
 & =\mathbb{E}_{\bm{x}_{*}}\left(\bm{\Sigma}_{\lambda}^{-1}\bm{\eta}-\hat{\bm{\Sigma}}_{\lambda}^{-1}\hat{\bm{\eta}}\right)^{T}\bm{x}_{*}\bm{x}_{*}^{T}\left(\bm{\Sigma}_{\lambda}^{-1}\bm{\eta}-\hat{\bm{\Sigma}}_{\lambda}^{-1}\hat{\bm{\eta}}\right)\nonumber \\
 & =\left(\bm{\Sigma}_{\lambda}^{-1}\bm{\eta}-\hat{\bm{\Sigma}}_{\lambda}^{-1}\hat{\bm{\eta}}\right)^{T}\bm{\Sigma}\left(\bm{\Sigma}_{\lambda}^{-1}\bm{\eta}-\hat{\bm{\Sigma}}_{\lambda}^{-1}\hat{\bm{\eta}}\right)\\
\nonumber 
\end{align}
Unlike the cross-product term $\left(\bm{\eta}^{T}-\bm{\eta}^{T}\bm{\Sigma}^{-1}\bm{\Sigma}\right)=0$
in \eqref{eq:newp_1} vanishes, we noticed that $\bm{\eta}^{T}-\bm{\eta}^{T}\bm{\Sigma}_{\lambda}^{-1}\bm{\Sigma}\neq0$
for any $\lambda\neq0$. 
\begin{align}
 & \mathbb{E}_{\bm{x}_{*}}\left(y_{*}-\bm{x}_{*}^{T}\bm{\Sigma}_{\lambda}^{-1}\bm{\eta}\right)^{T}\cdot\bm{x}_{*}^{T}\left(\bm{\Sigma}_{\lambda}^{-1}\bm{\eta}-\hat{\bm{\Sigma}}_{\lambda}^{-1}\hat{\bm{\eta}}\right)\nonumber \\
= & \mathbb{E}_{\bm{x}_{*}}\left(y_{*}\bm{x}_{*}^{T}-\bm{\eta}^{T}\bm{\Sigma}_{\lambda}^{-1}\bm{x}_{*}\bm{x}_{*}^{T}\right)\left(\bm{\Sigma}_{\lambda}^{-1}\bm{\eta}-\hat{\bm{\Sigma}}_{\lambda}^{-1}\hat{\bm{\eta}}\right)\nonumber \\
= & \left(\mathbb{E}_{\bm{x}_{*}}y_{*}\bm{x}_{*}^{T}-\bm{\eta}^{T}\bm{\Sigma}_{\lambda}^{-1}\mathbb{E}_{\bm{x}_{*}}\bm{x}_{*}\bm{x}_{*}^{T}\right)\left(\bm{\Sigma}_{\lambda}^{-1}\bm{\eta}-\hat{\bm{\Sigma}}_{\lambda}^{-1}\hat{\bm{\eta}}\right)\nonumber \\
= & \underset{O_{p}\left(\frac{1}{\sqrt{n}}\right)}{\underbrace{\left(\bm{\eta}^{T}-\bm{\eta}^{T}\bm{\Sigma}_{\lambda}^{-1}\bm{\Sigma}\right)}}\left(\bm{\Sigma}_{\lambda}^{-1}\bm{\eta}-\hat{\bm{\Sigma}}_{\lambda}^{-1}\hat{\bm{\eta}}\right)\label{eq:newp_1_RIDGE}\\
\neq & 0.\nonumber 
\end{align}
Taking the expectation with respect to the training set, then $\mathbb{E}_{\bm{X}}$\eqref{eq:newp_ridge_1-1}
still simplifies into 
\begin{align}
\mathbb{E}_{\bm{X}}\eqref{eq:newp_ridge_1-1} & =\mathbb{E}_{\bm{X}}\left\{ \mathbb{E}_{\bm{x}_{*}}\left(y_{*}-\bm{x}_{*}^{T}\bm{\Sigma}_{\lambda}^{-1}\bm{\eta}\right)^{2}+\mathbb{E}_{\bm{x}_{*}}\left[\bm{x}_{*}^{T}\left(\bm{\Sigma}_{\lambda}^{-1}\bm{\eta}-\hat{\bm{\Sigma}}_{\lambda}^{-1}\hat{\bm{\eta}}\right)\right]^{2}\right.\nonumber \\
 & +\left.2\left(\bm{\eta}^{T}-\bm{\eta}^{T}\bm{\Sigma}_{\lambda}^{-1}\bm{\Sigma}\right)\left(\bm{\Sigma}_{\lambda}^{-1}\bm{\eta}-\hat{\bm{\Sigma}}_{\lambda}^{-1}\hat{\bm{\eta}}\right)\right\} \nonumber \\
 & =\mathbb{E}_{\bm{x}_{*}}\left(y_{*}-\bm{x}_{*}^{T}\bm{\Sigma}_{\lambda}^{-1}\bm{\eta}\right)^{2}\nonumber \\
 & +\underset{(1*)}{\underbrace{\mathbb{E}_{\bm{X}}\left(\bm{\Sigma}_{\lambda}^{-1}\bm{\eta}-\hat{\bm{\Sigma}}_{\lambda}^{-1}\hat{\bm{\eta}}\right)^{T}\bm{\Sigma}\left(\bm{\Sigma}_{\lambda}^{-1}\bm{\eta}-\hat{\bm{\Sigma}}_{\lambda}^{-1}\hat{\bm{\eta}}\right)}}\nonumber \\
 & +2\underset{(1**)}{\underbrace{\mathbb{E}_{\bm{X}}\left(\bm{\eta}^{T}-\bm{\eta}^{T}\bm{\Sigma}_{\lambda}^{-1}\bm{\Sigma}\right)\left(\bm{\Sigma}_{\lambda}^{-1}\bm{\eta}-\hat{\bm{\Sigma}}_{\lambda}^{-1}\hat{\bm{\eta}}\right)}}.\label{eq:newp_ridge_2}\\
\nonumber 
\end{align}
 And the part (1{*}) in \eqref{eq:newp_ridge_2} can be expanded using
definition of symbols, using another arbitrary pair $(\bm{x}_{*},y_{*})$
as an independent copy of $\bm{X},\bm{y}$: {} 
\begin{align}
(1*) & =\mathbb{E}_{\bm{X}}\left(\bm{\Sigma}_{\lambda}^{-1}\bm{\eta}-\hat{\bm{\Sigma}}_{\lambda}^{-1}\hat{\bm{\eta}}\right)^{T}\bm{\Sigma}\left(\bm{\Sigma}_{\lambda}^{-1}\bm{\eta}-\hat{\bm{\Sigma}}_{\lambda}^{-1}\hat{\bm{\eta}}\right)\nonumber \\
 & =\mathbb{E}_{\bm{X}}\left(\hat{\bm{\eta}}-\hat{\bm{\Sigma}}_{\lambda}\bm{\Sigma}_{\lambda}^{-1}\bm{\eta}\right)^{T}\bm{\Sigma}_{\lambda}^{-1}\bm{\Sigma}\bm{\Sigma}_{\lambda}^{-1}\left(\hat{\bm{\eta}}-\hat{\bm{\Sigma}}_{\lambda}\bm{\Sigma}_{\lambda}^{-1}\bm{\eta}\right)+O_{p}\left(\frac{1}{n^{2}}\right)\nonumber \\
 & =\frac{1}{n}\mathbb{E}_{\bm{x}_{*}}\left[\bm{x}_{*}y_{*}-\left(\bm{x}_{*}\bm{x}_{*}^{T}+\lambda\bm{I}\right)\bm{\Sigma}_{\lambda}^{-1}\bm{\eta}\right]^{T}\bm{\Sigma}_{\lambda}^{-1}\bm{\Sigma}\bm{\Sigma}_{\lambda}^{-1}\left[\bm{x}_{*}y_{*}-\left(\bm{x}_{*}\bm{x}_{*}^{T}+\lambda\bm{I}\right)\bm{\Sigma}_{\lambda}^{-1}\bm{\eta}\right]+O_{p}\left(\frac{1}{n^{2}}\right)\nonumber \\
 & =\frac{1}{n}\mathbb{E}_{\bm{x}_{*}}\left\Vert \bm{\Sigma}^{1/2}\bm{\Sigma}_{\lambda}^{-1}\left[\bm{x}_{*}y_{*}-\left(\bm{x}_{*}\bm{x}_{*}^{T}+\lambda\bm{I}\right)\bm{\Sigma}_{\lambda}^{-1}\bm{\eta}\right]\right\Vert _{2}^{2}+O_{p}\left(\frac{1}{n^{2}}\right).\label{eq:ridge_cross_term-1}\\
\nonumber 
\end{align}
To sum up, we can plug \eqref{eq:ridge_cross_term-1} back into \eqref{eq:newp_ridge_2}
and yield
\begin{align}
\mathbb{E}_{\bm{x}_{*}}\left\Vert y_{*}-\bm{x}_{*}^{T}\hat{\bm{\beta}}\right\Vert _{2}^{2} & =\mathbb{E}_{\bm{x}_{*}}\left(y_{*}-\bm{x}_{*}^{T}\left(\bm{\Sigma}+\lambda\bm{I}\right)^{-1}\bm{\eta}\right)^{2}\nonumber \\
 & +\frac{1}{n}\mathbb{E}_{\bm{x}_{*}}\left\Vert \bm{\Sigma}^{1/2}\bm{\Sigma}_{\lambda}^{-1}\left[\bm{x}_{*}y_{*}-\left(\bm{x}_{*}\bm{x}_{*}^{T}+\lambda\bm{I}\right)\bm{\Sigma}_{\lambda}^{-1}\bm{\eta}\right]\right\Vert _{2}^{2}\nonumber \\
 & +2\mathbb{E}_{\bm{X}}\left(\bm{\eta}^{T}-\bm{\eta}^{T}\bm{\Sigma}_{\lambda}^{-1}\bm{\Sigma}\right)\left(\bm{\Sigma}_{\lambda}^{-1}\bm{\eta}-\hat{\bm{\Sigma}}_{\lambda}^{-1}\hat{\bm{\eta}}\right)\nonumber \\
 & +O_{p}\left(\frac{1}{n^{3/2}}\right)\text{ by the magnitude in }\eqref{eq:ridge_cross_term-1}.\label{eq:new_ridge_test_errs}\\
\nonumber 
\end{align}

Similarly for training error, we recall that $\hat{\bm{\beta}}=\hat{\bm{\Sigma}}_{\lambda}^{-1}\hat{\bm{\eta}}$
and the rest derivation is similar to that for testing error, we have
\begin{align}
 & \frac{1}{n}\mathbb{E}_{\bm{X}}\left\Vert \bm{y}-\bm{X}\hat{\bm{\beta}}\right\Vert _{2}^{2}\text{ where }\hat{\bm{\beta}}=\hat{\bm{\Sigma}}_{0}^{-1}\hat{\bm{\eta}}_{0}\nonumber \\
 & =\frac{1}{n}\mathbb{E}_{\bm{X}}\left(\bm{y}^{T}\bm{y}-2\bm{y}^{T}\bm{X}\hat{\bm{\Sigma}}_{\lambda}^{-1}\hat{\bm{\eta}}+\hat{\bm{\eta}}^{T}\hat{\bm{\Sigma}}_{\lambda}^{-1}\bm{X}^{T}\bm{X}\hat{\bm{\Sigma}}_{\lambda}^{-1}\hat{\bm{\eta}}\right)\\
 & =\frac{1}{n}\mathbb{E}_{\bm{X}}\left(\bm{y}^{T}\bm{y}-2\bm{y}^{T}\bm{X}\bm{\Sigma}_{\lambda}^{-1}\bm{\eta}+\bm{\eta}^{T}\bm{\Sigma}_{\lambda}^{-1}\bm{X}^{T}\bm{X}\bm{\Sigma}_{\lambda}^{-1}\bm{\eta}\right.\nonumber \\
 & +2\bm{y}^{T}\bm{X}\bm{\Sigma}_{\lambda}^{-1}\bm{\eta}-\bm{\eta}^{T}\bm{\Sigma}_{\lambda}\bm{X}^{T}\bm{X}\bm{\Sigma}_{\lambda}^{-1}\bm{\eta}\left.-2\bm{y}^{T}\bm{X}\hat{\bm{\Sigma}}_{\lambda}^{-1}\hat{\bm{\eta}}+\hat{\bm{\eta}}^{T}\hat{\bm{\Sigma}}_{\lambda}^{-1}\bm{X}^{T}\bm{X}\hat{\bm{\Sigma}}_{\lambda}^{-1}\hat{\bm{\eta}}\right)\\
\nonumber 
\end{align}
\begin{align}
 & =\frac{1}{n}\mathbb{E}_{\bm{X}}\left(\bm{y}-\bm{X}\bm{\Sigma}_{\lambda}^{-1}\bm{\eta}\right)^{2}\nonumber \\
 & +\frac{2}{n}\mathbb{E}_{\bm{X}}\bm{y}^{T}\bm{X}\left(\bm{\Sigma}_{\lambda}^{-1}\bm{\eta}-\hat{\bm{\Sigma}}_{\lambda}^{-1}\hat{\bm{\eta}}\right)-\frac{1}{n}\mathbb{E}_{\bm{X}}\bm{\eta}^{T}\bm{\Sigma}_{\lambda}\bm{X}^{T}\bm{X}\bm{\Sigma}_{\lambda}^{-1}\bm{\eta}+\frac{1}{n}\mathbb{E}_{\bm{X}}\hat{\bm{\eta}}^{T}\hat{\bm{\Sigma}}_{\lambda}^{-1}\bm{X}^{T}\bm{X}\hat{\bm{\Sigma}}_{\lambda}^{-1}\hat{\bm{\eta}}\\
 & =\frac{1}{n}\mathbb{E}_{\bm{X}}\left(\bm{y}-\bm{X}\bm{\Sigma}_{\lambda}^{-1}\bm{\eta}\right)^{2}+\underset{(2**)}{\underbrace{\frac{2}{n}\mathbb{E}_{\bm{X}}\bm{y}^{T}\bm{X}\left(\bm{\Sigma}_{\lambda}^{-1}\bm{\eta}-\hat{\bm{\Sigma}}_{\lambda}^{-1}\hat{\bm{\eta}}\right)}}\nonumber \\
 & +\underset{(2*)}{\underbrace{\frac{1}{n}\mathbb{E}_{\bm{X}}\hat{\bm{\eta}}^{T}\hat{\bm{\Sigma}}_{\lambda}^{-1}\bm{X}^{T}\bm{X}\hat{\bm{\Sigma}}_{\lambda}^{-1}\hat{\bm{\eta}}-\frac{1}{n}\mathbb{E}_{\bm{X}}\bm{\eta}^{T}\bm{\Sigma}_{\lambda}\bm{X}^{T}\bm{X}\bm{\Sigma}_{\lambda}^{-1}\bm{\eta}}}\\
 & =\frac{1}{n}\mathbb{E}_{\bm{X}}\left(\bm{y}-\bm{X}\bm{\Sigma}_{\lambda}^{-1}\bm{\eta}\right)^{2}+\underset{(2**)}{\underbrace{\frac{2}{n}\mathbb{E}_{\bm{X}}\bm{y}^{T}\bm{X}\hat{\bm{\Sigma}}_{\lambda}^{-1}\left(\hat{\bm{\Sigma}}_{\lambda}\bm{\Sigma}_{\lambda}^{-1}\bm{\eta}-\hat{\bm{\eta}}\right)}}\nonumber \\
 & -\underset{(2*)}{\underbrace{\frac{1}{n}\mathbb{E}_{\bm{X}}\hat{\bm{\eta}}^{T}\hat{\bm{\Sigma}}_{\lambda}^{-1}\bm{X}^{T}\bm{X}\hat{\bm{\Sigma}}_{\lambda}^{-1}\hat{\bm{\eta}}-\frac{1}{n}\mathbb{E}_{\bm{X}}\bm{\eta}^{T}\bm{\Sigma}_{\lambda}\bm{X}^{T}\bm{X}\bm{\Sigma}_{\lambda}^{-1}\bm{\eta}}}+\frac{2}{n}\mathbb{E}_{\bm{X}}\hat{\bm{\eta}}^{T}\hat{\bm{\Sigma}}_{\lambda}^{-1}\bm{X}^{T}\bm{X}\hat{\bm{\Sigma}}_{\lambda}^{-1}\hat{\bm{\eta}}\label{eq:key1}\\
\nonumber 
\end{align}
Below, moving from \eqref{eq:key1} to \eqref{eq:key2} we need the
following derivation:
\begin{align*}
 & \mathbb{E}_{\bm{X}}\left\Vert \sqrt{\bm{X}^{T}\bm{X}}\bm{\Sigma}_{\lambda}^{-1}\left[\hat{\bm{\eta}}-\hat{\bm{\Sigma}}_{\lambda}\bm{\Sigma}_{\lambda}^{-1}\bm{\eta}\right]\right\Vert _{2}^{2}\\
 & =\mathbb{E}_{\bm{X}}\left(\sqrt{\bm{X}^{T}\bm{X}}\bm{\Sigma}_{\lambda}^{-1}\left[\hat{\bm{\eta}}-\hat{\bm{\Sigma}}_{\lambda}\bm{\Sigma}_{\lambda}^{-1}\bm{\eta}\right]\right)^{T}\left(\sqrt{\bm{X}^{T}\bm{X}}\bm{\Sigma}_{\lambda}^{-1}\left[\hat{\bm{\eta}}-\hat{\bm{\Sigma}}_{\lambda}\bm{\Sigma}_{\lambda}^{-1}\bm{\eta}\right]\right)\\
 & =\mathbb{E}_{\bm{X}}\left[\hat{\bm{\eta}}-\hat{\bm{\Sigma}}_{\lambda}\bm{\Sigma}_{\lambda}^{-1}\bm{\eta}\right]^{T}\bm{\Sigma}_{\lambda}^{-1}\bm{X}^{T}\bm{X}\bm{\Sigma}_{\lambda}^{-1}\left[\hat{\bm{\eta}}-\hat{\bm{\Sigma}}_{\lambda}\bm{\Sigma}_{\lambda}^{-1}\bm{\eta}\right]\\
 & =\mathbb{E}_{\bm{X}}\left[\hat{\bm{\eta}}-\bm{\Sigma}_{\lambda}\bm{\Sigma}_{\lambda}^{-1}\bm{\eta}\right]^{T}\bm{\Sigma}_{\lambda}^{-1}\bm{X}^{T}\bm{X}\bm{\Sigma}_{\lambda}^{-1}\left[\hat{\bm{\eta}}-\bm{\Sigma}_{\lambda}\bm{\Sigma}_{\lambda}^{-1}\bm{\eta}\right]+O_{p}\left(\frac{1}{n}\right)\\
 & =\mathbb{E}_{\bm{X}}\hat{\bm{\eta}}^{T}\left(\bm{\Sigma}_{\lambda}^{-1}\bm{X}^{T}\bm{X}\bm{\Sigma}_{\lambda}^{-1}\right)\hat{\bm{\eta}}+\mathbb{E}_{\bm{X}}\bm{\eta}^{T}\left(\bm{\Sigma}_{\lambda}^{-1}\bm{X}^{T}\bm{X}\bm{\Sigma}_{\lambda}^{-1}\right)\bm{\eta}\\
 & -2\mathbb{E}_{\bm{X}}\bm{\eta}^{T}\left(\bm{\Sigma}_{\lambda}^{-1}\bm{X}^{T}\bm{X}\bm{\Sigma}_{\lambda}^{-1}\right)\hat{\bm{\eta}}+O_{p}\left(\frac{1}{n}\right)\\
\end{align*}
Therefore, (2{*}) becomes
\begin{align}
\eqref{eq:key1} & =\frac{1}{n}\mathbb{E}_{\bm{X}}\left(\bm{y}-\bm{X}\bm{\Sigma}_{\lambda}^{-1}\bm{\eta}\right)^{2}+\underset{(2**)}{\underbrace{\frac{2}{n}\mathbb{E}_{\bm{X}}\bm{y}^{T}\bm{X}\hat{\bm{\Sigma}}_{\lambda}^{-1}\left(\hat{\bm{\Sigma}}_{\lambda}\bm{\Sigma}_{\lambda}^{-1}\bm{\eta}-\hat{\bm{\eta}}\right)}}\nonumber \\
 & -\underset{(2*)}{\underbrace{\frac{1}{n}\mathbb{E}_{\bm{X}}\left\Vert \sqrt{\bm{X}^{T}\bm{X}}\bm{\Sigma}_{\lambda}^{-1}\left[\hat{\bm{\eta}}-\hat{\bm{\Sigma}}_{\lambda}\bm{\Sigma}_{\lambda}^{-1}\bm{\eta}\right]\right\Vert _{2}^{2}}}+\frac{2}{n}\mathbb{E}_{\bm{X}}\hat{\bm{\eta}}^{T}\hat{\bm{\Sigma}}_{\lambda}^{-1}\bm{X}^{T}\bm{X}\hat{\bm{\Sigma}}_{\lambda}^{-1}\hat{\bm{\eta}}-\frac{2}{n}\mathbb{E}_{\bm{X}}\bm{\eta}^{T}\left(\bm{\Sigma}_{\lambda}^{-1}\bm{X}^{T}\bm{X}\bm{\Sigma}_{\lambda}^{-1}\right)\hat{\bm{\eta}}\label{eq:key2}\\
 & =\frac{1}{n}\mathbb{E}_{\bm{X}}\left(\bm{y}-\bm{X}\bm{\Sigma}_{\lambda}^{-1}\bm{\eta}\right)^{2}+\underset{(2**)}{\underbrace{2\mathbb{E}_{\bm{X}}\left(\hat{\bm{\eta}}^{T}\bm{\Sigma}_{\lambda}^{-1}\bm{\eta}-\hat{\bm{\eta}}^{T}\hat{\bm{\Sigma}}_{\lambda}^{-1}\hat{\bm{\eta}}\right)}}-\underset{(2*)}{\underbrace{\frac{1}{n}\mathbb{E}_{\bm{X}}\left\Vert \sqrt{\bm{X}^{T}\bm{X}}\bm{\Sigma}_{\lambda}^{-1}\left[\hat{\bm{\eta}}-\hat{\bm{\Sigma}}_{\lambda}\bm{\Sigma}_{\lambda}^{-1}\bm{\eta}\right]\right\Vert _{2}^{2}}}\nonumber \\
 & +\underset{(2***)}{\underbrace{2\mathbb{E}_{\bm{X}}\left(\hat{\bm{\eta}}^{T}\hat{\bm{\Sigma}}_{\lambda}^{-1}\hat{\bm{\Sigma}}\hat{\bm{\Sigma}}_{\lambda}^{-1}\hat{\bm{\eta}}-\bm{\eta}^{T}\bm{\Sigma}_{\lambda}^{-1}\hat{\bm{\Sigma}}\bm{\Sigma}_{\lambda}^{-1}\hat{\bm{\eta}}\right)}}+O_{p}\left(\frac{1}{n^{3/2}}\right)\\
 & \text{then we combine }(2**)\text{ and }(2***)\text{ into }2\mathbb{E}_{\bm{X}}\left(\hat{\bm{\eta}}^{T}-\bm{\eta}^{T}\bm{\Sigma}_{\lambda}^{-1}\hat{\bm{\Sigma}}\right)\left(\bm{\Sigma}_{\lambda}^{-1}\bm{\eta}-\hat{\bm{\Sigma}}_{\lambda}^{-1}\hat{\bm{\eta}}\right),\nonumber \\
 & =\frac{1}{n}\mathbb{E}_{\bm{X}}\left(\bm{y}-\bm{X}\bm{\Sigma}_{\lambda}^{-1}\bm{\eta}\right)^{2}-\underset{(2*)}{\underbrace{\frac{1}{n}\mathbb{E}_{\bm{X}}\left\Vert \sqrt{\bm{X}^{T}\bm{X}}\bm{\Sigma}_{\lambda}^{-1}\left[\hat{\bm{\eta}}-\hat{\bm{\Sigma}}_{\lambda}\bm{\Sigma}_{\lambda}^{-1}\bm{\eta}\right]\right\Vert _{2}^{2}}}\\
 & +2\mathbb{E}_{\bm{X}}\left(\hat{\bm{\eta}}^{T}-\bm{\eta}^{T}\bm{\Sigma}_{\lambda}^{-1}\hat{\bm{\Sigma}}\right)\left(\bm{\Sigma}_{\lambda}^{-1}\bm{\eta}-\hat{\bm{\Sigma}}_{\lambda}^{-1}\hat{\bm{\eta}}\right)\text{ where we replace }\text{\ensuremath{\hat{\bm{\Sigma}}_{\lambda}^{-1}}}\text{ with }\bm{\Sigma}_{\lambda}^{-1}\text{ and pool into }O_{p}\\
 & +O_{p}\left(\frac{1}{n^{3/2}}\right)\label{eq:newp_ridge_3-1}\\
\nonumber 
\end{align}
Using \eqref{eq:new_ridge_test_errs} and \eqref{eq:newp_ridge_3-1}:
\begin{align}
 & \mathbb{E}_{\bm{X}}\text{Opt }R_{\bm{X}}\nonumber \\
 & =\mathbb{E}_{\bm{x}_{*}}\underset{\text{test MSE}}{\underbrace{\left\Vert y_{*}-\bm{x}_{*}^{T}\bm{\Sigma}_{\lambda}^{-1}\bm{\eta}\right\Vert _{2}^{2}}}-\underset{\text{train MSE}}{\underbrace{\frac{1}{n}\mathbb{E}_{\bm{X}}\left\Vert \bm{y}-\bm{X}\bm{\Sigma}_{\lambda}^{-1}\bm{\eta}\right\Vert ^{2}}}\nonumber \\
 & =\frac{1}{n}\mathbb{E}_{\bm{x}_{*}}\left(\bm{x}_{*}^{T}\bm{x}_{*}+\bm{\Sigma}_{*}\right)\left\Vert \bm{\Sigma}_{\lambda}^{-1}\left(\hat{\bm{\Sigma}}_{\lambda}\bm{\Sigma}_{\lambda}^{-1}\bm{\eta}_{\lambda}-\hat{\bm{\eta}}_{\lambda}\right)\right\Vert _{2}^{2}+\frac{1}{n}\mathbb{E}_{\bm{X}}\left(\bm{X}^{T}\bm{X}\right)\left\Vert \bm{\Sigma}_{\lambda}^{-1}\left(\hat{\bm{\Sigma}}_{\lambda}\bm{\Sigma}_{\lambda}^{-1}\bm{\eta}_{\lambda}-\hat{\bm{\eta}}_{\lambda}\right)\right\Vert _{2}^{2}\\
 & +2\mathbb{E}_{\bm{X}}\left(\bm{\eta}^{T}-\bm{\eta}^{T}\bm{\Sigma}_{\lambda}^{-1}\bm{\Sigma}\right)\left(\bm{\Sigma}_{\lambda}^{-1}\bm{\eta}-\hat{\bm{\Sigma}}_{\lambda}^{-1}\hat{\bm{\eta}}\right)\\
 & -2\mathbb{E}_{\bm{X}}\left(\hat{\bm{\eta}}^{T}\hat{\bm{\Sigma}}_{\lambda}^{-1}\hat{\bm{\Sigma}}\hat{\bm{\Sigma}}_{\lambda}^{-1}\hat{\bm{\eta}}-\bm{\eta}^{T}\bm{\Sigma}_{\lambda}^{-1}\hat{\bm{\Sigma}}\bm{\Sigma}_{\lambda}^{-1}\hat{\bm{\eta}}\right)+O_{p}\left(\frac{1}{n^{3/2}}\right)\\
 & =\frac{1}{n}\mathbb{E}_{\bm{X}}\left(\bm{X}^{T}\bm{X}\right)\left\Vert \bm{\Sigma}_{0}^{-1}\left(\hat{\bm{\Sigma}}_{0}\bm{\Sigma}_{0}^{-1}\bm{\eta}_{0}-\hat{\bm{\eta}}_{0}\right)\right\Vert _{2}^{2}+\frac{1}{n}\mathbb{E}_{\bm{X}}\left(\bm{X}^{T}\bm{X}\right)\left\Vert \bm{\Sigma}_{\lambda}^{-1}\left(\hat{\bm{\Sigma}}_{\lambda}\bm{\Sigma}_{\lambda}^{-1}\bm{\eta}_{\lambda}-\hat{\bm{\eta}}_{\lambda}\right)\right\Vert _{2}^{2}\nonumber \\
 & +\underset{(3)}{\underbrace{2\mathbb{E}_{\bm{X}}\left(\bm{\Sigma}_{\lambda}^{-1}\bm{\eta}-\hat{\bm{\Sigma}}_{\lambda}^{-1}\hat{\bm{\eta}}\right)\left(\bm{\eta}^{T}-\hat{\bm{\eta}}^{T}+\bm{\eta}^{T}\bm{\Sigma}_{\lambda}^{-1}\hat{\bm{\Sigma}}-\bm{\eta}^{T}\bm{\Sigma}_{\lambda}^{-1}\bm{\Sigma}\right)}}+O_{p}\left(\frac{1}{n^{3/2}}\right)\label{eq:ridge_opt_general-2-1}
\end{align}
 The (3) is of order $O_{p}\left(\frac{1}{n^{3/2}}\right)$ due to
the below arguments:
\begin{align}
 & \bm{\Sigma}_{\lambda}^{-1}\bm{\eta}-\hat{\bm{\Sigma}}_{\lambda}^{-1}\hat{\bm{\eta}}\nonumber \\
 & ={\color{blue}\bm{\Sigma}_{\lambda}^{-1}\hat{\bm{\eta}}}-\hat{\bm{\Sigma}}_{\lambda}^{-1}\hat{\bm{\eta}}+\bm{\Sigma}_{\lambda}^{-1}\bm{\eta}-{\color{blue}\bm{\Sigma}_{\lambda}^{-1}\hat{\bm{\eta}}}\nonumber \\
 & =\left(\bm{\Sigma}_{\lambda}^{-1}-\hat{\bm{\Sigma}}_{\lambda}^{-1}\right)\hat{\bm{\eta}}+{\color{violet}\bm{\Sigma}_{\lambda}^{-1}\left(\bm{\eta}-\hat{\bm{\eta}}\right)}\nonumber \\
 & =\bm{\Sigma}_{\lambda}^{-1}\left(\hat{\bm{\Sigma}}_{\lambda}-\bm{\Sigma}_{\lambda}\right)\hat{\bm{\Sigma}}_{\lambda}^{-1}\hat{\bm{\eta}}+{\color{violet}\bm{\Sigma}_{\lambda}^{-1}\left(\bm{\eta}-\hat{\bm{\eta}}\right)}\label{eq:inter-1}\\
 & =\bm{\Sigma}_{\lambda}^{-1}\left(\hat{\bm{\Sigma}}_{\lambda}-\bm{\Sigma}_{\lambda}\right)\hat{\bm{\Sigma}}_{\lambda}^{-1}\bm{\eta}+\bm{\Sigma}_{\lambda}^{-1}\left(\hat{\bm{\Sigma}}_{\lambda}-\bm{\Sigma}_{\lambda}\right)\hat{\bm{\Sigma}}_{\lambda}^{-1}\left(\hat{\bm{\eta}}-\bm{\eta}\right)+{\color{violet}\bm{\Sigma}_{\lambda}^{-1}\left(\bm{\eta}-\hat{\bm{\eta}}\right)}\nonumber \\
 & =\bm{\Sigma}_{\lambda}^{-1}\left(\hat{\bm{\Sigma}}_{\lambda}-\bm{\Sigma}_{\lambda}\right)\left(\hat{\bm{\Sigma}}_{\lambda}^{-1}\bm{\eta}{\color{blue}-\bm{\Sigma}_{\lambda}^{-1}\bm{\eta}+\bm{\Sigma}_{\lambda}^{-1}\bm{\eta}}\right)+\bm{\Sigma}_{\lambda}^{-1}\left(\hat{\bm{\Sigma}}_{\lambda}-\bm{\Sigma}_{\lambda}\right)\hat{\bm{\Sigma}}_{\lambda}^{-1}\left(\hat{\bm{\eta}}-\bm{\eta}\right)+{\color{violet}\bm{\Sigma}_{\lambda}^{-1}\left(\bm{\eta}-\hat{\bm{\eta}}\right)}\nonumber \\
 & =\bm{\Sigma}_{\lambda}^{-1}\left(\hat{\bm{\Sigma}}_{\lambda}-\bm{\Sigma}_{\lambda}\right)\left(\hat{\bm{\Sigma}}_{\lambda}^{-1}-\bm{\Sigma}_{\lambda}^{-1}\right)\bm{\eta}+\bm{\Sigma}_{\lambda}^{-1}\left(\hat{\bm{\Sigma}}_{\lambda}-\bm{\Sigma}_{\lambda}\right)\bm{\Sigma}_{\lambda}^{-1}\bm{\eta}\nonumber \\
 & +\bm{\Sigma}_{\lambda}^{-1}\left(\hat{\bm{\Sigma}}_{\lambda}-\bm{\Sigma}_{\lambda}\right)\hat{\bm{\Sigma}}_{\lambda}^{-1}\left(\hat{\bm{\eta}}-\bm{\eta}\right)+{\color{violet}\bm{\Sigma}_{\lambda}^{-1}\left(\bm{\eta}-\hat{\bm{\eta}}\right)}\nonumber \\
 & =\bm{\Sigma}_{\lambda}^{-1}\left(\hat{\bm{\Sigma}}_{\lambda}-\bm{\Sigma}_{\lambda}\right){\color{red}\hat{\bm{\Sigma}}_{\lambda}^{-1}}\left(\bm{\Sigma}_{\lambda}-\hat{\bm{\Sigma}}_{\lambda}\right){\color{red}\bm{\Sigma}_{\lambda}^{-1}}\bm{\eta}+\bm{\Sigma}_{\lambda}^{-1}\left(\hat{\bm{\Sigma}}_{\lambda}-\bm{\Sigma}_{\lambda}\right)\bm{\Sigma}_{\lambda}^{-1}\bm{\eta}\label{eq:inter-2}\\
 & +\bm{\Sigma}_{\lambda}^{-1}\left(\hat{\bm{\Sigma}}_{\lambda}-\bm{\Sigma}_{\lambda}\right)\hat{\bm{\Sigma}}_{\lambda}^{-1}\left(\hat{\bm{\eta}}-\bm{\eta}\right)+{\color{violet}\bm{\Sigma}_{\lambda}^{-1}\left(\bm{\eta}-\hat{\bm{\eta}}\right)}\nonumber \\
 & \text{Note that by definition, }\bm{\Sigma}_{\lambda}-\hat{\bm{\Sigma}}_{\lambda}=\bm{\Sigma}-\hat{\bm{\Sigma}}\nonumber \\
 & =\bm{\Sigma}_{\lambda}^{-1}\underset{=O_{p}(1/\sqrt{n})}{\underbrace{\left(\hat{\bm{\Sigma}}_{\lambda}-\bm{\Sigma}_{\lambda}\right)}}{\color{red}\bm{\Sigma}_{\lambda}^{-1}}\underset{=O_{p}(1/\sqrt{n})}{\underbrace{\left(\bm{\Sigma}_{\lambda}-\hat{\bm{\Sigma}}_{\lambda}\right)}}\bm{\Sigma}_{\lambda}^{-1}\bm{\eta}\nonumber \\
 & +\bm{\Sigma}_{\lambda}^{-1}\left(\hat{\bm{\Sigma}}_{\lambda}-\bm{\Sigma}_{\lambda}\right){\color{red}\hat{\bm{\Sigma}}_{\lambda}^{-1}\left(\bm{\Sigma}_{\lambda}-\hat{\bm{\Sigma}}_{\lambda}\right)\bm{\Sigma}_{\lambda}^{-1}}\left(\bm{\Sigma}_{\lambda}-\hat{\bm{\Sigma}}_{\lambda}\right)\bm{\Sigma}_{\lambda}^{-1}\bm{\eta}\nonumber \\
 & +\bm{\Sigma}_{\lambda}^{-1}\left(\hat{\bm{\Sigma}}_{\lambda}-\bm{\Sigma}_{\lambda}\right)\hat{\bm{\Sigma}}_{\lambda}^{-1}\left(\hat{\bm{\eta}}-\bm{\eta}\right)+{\color{violet}\bm{\Sigma}_{\lambda}^{-1}\left(\bm{\eta}-\hat{\bm{\eta}}\right)}\nonumber \\
 & =\bm{\Sigma}_{\lambda}^{-1}\left(\hat{\bm{\Sigma}}-\bm{\Sigma}\right)\bm{\Sigma}_{\lambda}^{-1}\left(\bm{\Sigma}-\hat{\bm{\Sigma}}\right)\bm{\Sigma}_{\lambda}^{-1}\bm{\eta}+\bm{\Sigma}_{\lambda}^{-1}\left(\hat{\bm{\Sigma}}-\bm{\Sigma}\right)\hat{\bm{\Sigma}}_{\lambda}^{-1}\left(\hat{\bm{\eta}}-\bm{\eta}\right)+{\color{violet}\bm{\Sigma}_{\lambda}^{-1}\left(\bm{\eta}-\hat{\bm{\eta}}\right)}+O_{p}\left(\frac{1}{n^{3/2}}\right).\label{eq:inter-cross}
\end{align}
where two leading terms are of $O_{p}(1/n)$, and $\left(\bm{\eta}^{T}-\hat{\bm{\eta}}^{T}+\bm{\eta}^{T}\bm{\Sigma}_{\lambda}^{-1}\hat{\bm{\Sigma}}-\bm{\eta}^{T}\bm{\Sigma}_{\lambda}^{-1}\bm{\Sigma}\right)$
factor is of $O_{p}(1/\sqrt{n})$.\clearpage{}

\section{Additional Experiments}

\begin{figure}[h!]
\centering

\includegraphics[clip,width=0.95\textwidth,viewport=0bp 0bp 1073bp 777.902bp]{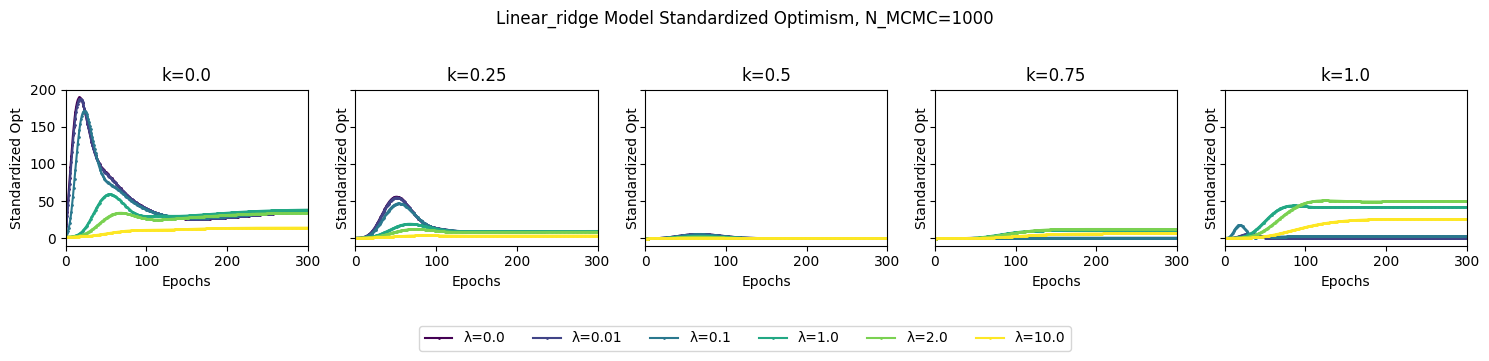}

\caption{\label{fig:Opt_vs_epoch_ridge-1}Expected scaled optimism (averaged
from 1000 MC simulations) versus the number of NN epochs for different
$k$ in \eqref{eq:test fun} with $\sigma_{\epsilon}^{2}=0.01$ for
a training set sampled from $N(0,1)$ of size 1000; and a testing
set sampled from $N(0,1)$ of size 1000. Models are optimized via
Adam optimizer with learning rate 0.01 and we provide the optimism
for the ridge linear regression model for comparison.}
\end{figure}

\begin{table}[h!]
\centering

\begin{tabular}{lrrrrrr}
\toprule 
\multicolumn{7}{l}{End of training scaled optimism for different combinations of $(k,\lambda_{ridge})$
300 epochs}\tabularnewline
\midrule 
\backslashbox{$k$}{$\lambda_{ridge}$}  & 0.00  & 0.01  & 0.10  & 1.00  & 2.00  & 10.00\tabularnewline
\midrule 
0.00  & 35.953481  & 35.941453  & 36.177433  & 38.346467  & 33.919739  & 14.174405\tabularnewline
0.25  & 9.564725  & 9.544049  & 9.480763  & 9.686489  & 8.514813  & 3.547711\tabularnewline
0.50  & 0.021075  & 0.020862  & 0.019124  & 0.010376  & 0.006821  & 0.001653\tabularnewline
0.75  & 0.021074  & 0.028329  & 0.642780  & 10.412455  & 12.357121  & 6.415719\tabularnewline
1.00  & 0.021075  & 0.052095  & 2.525693  & 41.667581  & 49.464615  & 25.714433\tabularnewline
\bottomrule
\end{tabular}\caption{Final scaled optimism (at the end of 300 epoches from Figure \ref{fig:Opt_vs_epoch_ridge-1}.}
\end{table}

\hfill{}
\end{document}